\documentclass{article}

    \PassOptionsToPackage{numbers, compress}{natbib}
 \usepackage[preprint]{neurips_2026}

\usepackage[utf8]{inputenc} 
\usepackage[T1]{fontenc}    
\usepackage{hyperref}       
\usepackage{url}            
\usepackage{booktabs}       
\usepackage{amsfonts}       
\usepackage{nicefrac}       
\usepackage{microtype}      
\usepackage{xcolor}  
\usepackage{amsmath}
\usepackage{float}
\usepackage{graphicx}
\newtheorem{prop}{Proposition}
\newtheorem{corollary}{Corollary}
\newtheorem{lemma}{Lemma}
\usepackage{thm-restate}
\usepackage{sidecap}
\sidecaptionvpos{figure}{c}

\newcommand{\sG}{s^{G}_{t}}
\newcommand{\sL}{s^{L}_{t}}
\newcommand{\piG}{\pi^{G}_{t}}
\newcommand{\piL}{\pi^{L}_{t}}

\definecolor{C0}{HTML}{3182ce} 
\definecolor{C1}{HTML}{dd6b20} 
\definecolor{newred}{HTML}{C32C39} 
\hypersetup{colorlinks,citecolor=C0,linkcolor=C0,urlcolor=C1}

\title{Theoretical guidelines for annealed Langevin dynamics in compositional simulation-based inference}

\author{%
Camille Touron\\
  Univ. Grenoble Alpes, Inria
 \\
CNRS, Grenoble INP, LJK, France \\
  \texttt{camille.touron@inria.fr} \\
   \And
   Gabriel V. Cardoso \\
    Geostatistics team, Centre for geosciences and geoengineering\\
Mines Paris, PSL University, Fontaineableau, France\\
\texttt{gabriel.victorino\_cardoso@minesparis.psl.eu} \\
   \AND
   Julyan Arbel \\
 Univ. Grenoble Alpes, Inria
 \\
CNRS, Grenoble INP, LJK, France \\
   \texttt{julyan.arbel@inria.fr} \\
   \And
   Pedro L. C. Rodrigues \\
   Univ. Grenoble Alpes, Inria
 \\
CNRS, Grenoble INP, LJK, France \\
   \texttt{pedro.rodrigues@inria.fr} \\
}

\begin{document}

\maketitle

\begin{abstract}
Compositional score-based approaches to simulation-based inference (SBI) approximate the posterior over a shared parameter given $n$ independent observations by aggregating individually learned posterior scores: currently, there are two main propositions of such methods (\citet{geffner, linhart2024diffusion}). As the resulting composite score does not correspond to the score of any distribution along the forward diffusion path of the true multi-observation posterior, sampling from it via a reverse SDE leads to an irreducible bias. Annealed Langevin dynamics provides a principled alternative: it treats the composite score as the genuine score of a sequence of tractable bridging densities and samples from them in succession. When properly tuned, it could lead to a controllable bias. However, its hyperparameters, namely step sizes, the number of steps per level, and the number of annealing levels, have so far been chosen empirically. We derive Wasserstein bounds for annealed Langevin with approximate scores and translate them into explicit decision rules for these hyperparameters that guarantee a prescribed sampling accuracy, while highlighting different theoretical aspects of each composite score formulation. In the Gaussian setting, we obtain closed-form expressions for all relevant quantities and prove that the bridging densities of \citet{linhart2024diffusion} consistently admit larger step sizes and require fewer total Langevin steps than those of \citet{geffner}. Furthermore, we show empirically that the tuning obtained in the Gaussian setting generalizes to more complex problems, thus providing a well-understood and theoretically grounded starting point for practitioners using compositional score-based approaches.
\end{abstract}

\section{Introduction}

In simulation-based inference (SBI), the goal is to approximate the posterior distribution
$p(\theta \mid x)$ of a stochastic model whose likelihood is intractable but which can be simulated \citep{cranmer2020frontier, sbi_guide}.  A recurrent challenge arises when $n$ independent observations $x_1,\ldots,x_n$ share a common unknown parameter $\theta^\star$: as $n$ grows, the multi-observation posterior $p(\theta \mid x_{1:n})$ concentrates, and one wants inference to sharpen accordingly. \emph{Pooling strategies} train a single network on all $n$ observations jointly
\citep{radev2020bayesflow, rodrigues2021hnpe}, but their simulation budget scales with $n$ and the network must be retrained for each new choice of~$n$.  \emph{Compositional strategies} \citep{geffner, linhart2024diffusion} instead combine individual posterior estimators trained once on single-observation data, amortizing over $n$ at no additional simulation cost: a key practical advantage that motivates our focus on this paradigm.

Score-based diffusion models \citep{song_annealed_langevin,song2021score,sharrock2024sequential}
provide a natural starting point: for a single observation, one learns the individual
posterior score $\nabla_\theta \log p_t(\theta \mid x_i)$ along a forward diffusion
path and uses it in a reverse SDE to draw samples from $p(\theta \mid x_i)$.  In the
multi-observation setting, the tempting shortcut is to aggregate these individual scores
into a composite score and run the same reverse SDE to target $p(\theta \mid x_{1:n})$.  This
shortcut is, however, theoretically unjustified.  The reverse SDE must be driven by the
\emph{exact} score of the joint forward diffusion of $p(\theta \mid x_{1:n})$, i.e.\
$\nabla_\theta \log p_t(\theta \mid x_{1:n})$, which satisfies the Fokker--Planck equation
of that specific process.  A composite of individual scores does \emph{not} satisfy this
equation: it is the score of a different distribution $\pi_t \neq p_t(\theta \mid x_{1:n})$
that does not lie on the forward diffusion path of the target.  Using such a score in a
reverse SDE therefore produces samples from the wrong distribution, and no amount of
additional reverse steps can correct this structural mismatch \citep{du2023reduce}. This also explains why some works \citep{song2021score, timeseriesgloecker} attempt to correct this bias by incorporating additional \textit{Langevin} refinement steps into the standard reverse diffusion process. However, this comes at the cost of sacrificing the efficiency of diffusion as a fast sampling method while keeping the challenge of tuning the Langevin hyperparameters. 

Several works recognized this problem and proposed composite scores that, while still mismatched with the true forward diffusion,
deliberately define their own tractable sequence of bridging densities \citep{geffner, arruda2025compositionalamortizedinferencelargescale,linhart2024diffusion,timeseriesgloecker}. In this paper, we focus on the following two main composite score formulations considered in the literature.
\citet{geffner} use
\begin{equation}
  \sG(\theta \mid x_{1:n})
  = (1-n)\,\nabla_\theta \log \lambda_t(\theta)
  + \textstyle\sum_{i=1}^n \nabla_\theta \log p_t(\theta \mid x_i),
  \label{eq::geffner_composite_score}
\end{equation}
whose associated bridging density $\piG$ is well-defined and tractable.
\citet{linhart2024diffusion} incorporate a second-order Gaussian approximation of the
backward diffusion kernels to obtain $\sL$, whose bridging density $\piL$ is closer to
the true diffused posterior $p_t(\theta \mid x_{1:n})$, at the cost of estimating
per-observation and prior covariance matrices $\Sigma_{t,i}$, $\Sigma_{t,\lambda}$. Their composite score reads
\begin{equation}
\label{eq::linhart_composite_score}
    s_t^L(\theta\mid x_{1:n})=\Lambda_t^{-1}\Big(\sum_{i=1}^n \Sigma_{t,i}^{-1}\nabla_\theta\log p_t(\theta\mid x_i)+(1-n)\Sigma_{t,\lambda}^{-1}\nabla_\theta\log \lambda_t(\theta)\Big),
 \end{equation} where $\Lambda_t$ is a linear combination of $\Sigma_{t,i}$, $\Sigma_{t,\lambda}$ and $\lambda_t(\theta)$ denotes the diffused prior. In both cases, the key insight is to stop trying
to target $p_t(\theta \mid x_{1:n})$ through a reverse SDE, and instead \emph{define}
$\piG$ or $\piL$ as the actual sampling target: the question becomes how to efficiently
draw samples from these bridging densities given access to their scores.

Annealed Langevin dynamics is one theoretically sound alternative \citep{song_annealed_langevin}.  It runs $T$
successive Unadjusted Langevin Algorithm (ULA) chains, each with step size $h_{t_p}$ and $k_{t_p}$ number of steps,
\begin{equation}
  \theta^{(j+1)} = \theta^{(j)}
  + h_{t_p}\,\nabla_\theta \log \pi_{t_p}\!(\theta^{(j)})
  + \sqrt{2h_{t_p}}\,z^{(j)}, \quad z^{(j)} \sim \mathcal{N}(0,I_d), \ \ j=0,\ldots,k_{t_p}-1
  \label{eq:ula}
\end{equation}
on bridging densities $\{\pi_{t_p}\}_{p=0}^T$ interpolating from
$\pi_{t_T}=\mathcal{N}(0,I_d)$ to the target $\pi_{t_0}$ (Figure~\ref{fig:schema_ula}). It requires
only the score of each $\pi_{t_p}$, which the composite scores evaluated at time $t_p$ provide by construction, and
its gradual annealing improves mixing for concentrated posteriors at large $n$. Since
\citet{song2021score} established the SDE interpretation of diffusion models, the field has
largely shifted to reverse-SDE and DDPM/DDIM samplers \citep{ho2020denoising}, which
require far fewer score evaluations. Those samplers, however, are theoretically grounded
only when the score is computed or trained to match the \emph{exact} forward score $\nabla_\theta\log p_t(\theta\mid x_{1:n})$, a condition violated
here by construction.  Annealed Langevin avoids this mismatch entirely, at the cost of
more score evaluations and the need to choose, at each level $t_p$, a step size $h_{t_p}$, a
number of steps $k_{t_p}$, and a total number of levels $T$.  Selecting these hyperparameters in a principled way, with guarantees on the resulting sampling quality, has remained an open problem; both \citet{geffner} and \citet{linhart2024diffusion} resort to empirical tuning. In the context of image generation, \citet{improved_techniques_annealed_langevin} propose some techniques to choose these hyperparameters, most relying on empirical validation, but leaving theoretical guarantee on the sampling quality unaddressed. Our paper closes that gap.

\begin{figure}[h]
    \centering
    \includegraphics[width=\linewidth]{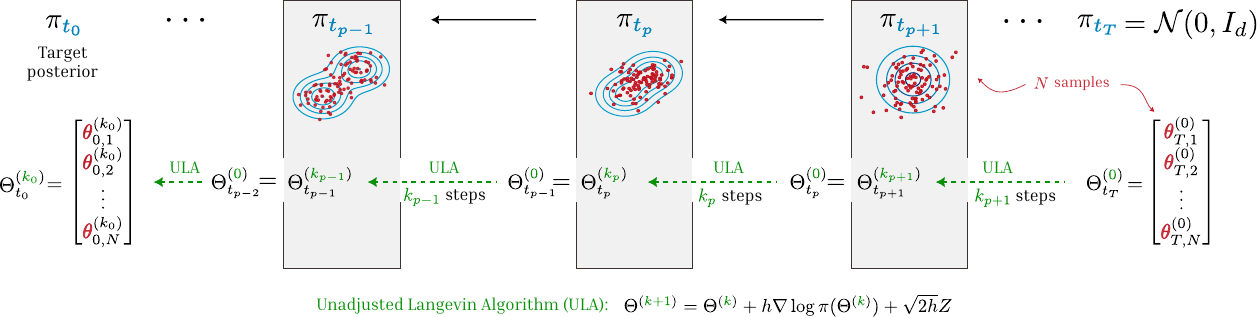}
    \caption{\small The Annealed Langevin algorithm consists in $T$ ULA steps in succession. It allows to sample from bridging densities $\{\pi_{t_p}\}_{p=0}^{T-1}$, represented in blue: the first ULA yields samples (in red) from $\pi_{t_{T-1}}$, the second from $\pi_{t_{T-2}}$, etc... and the last one from the target density $\pi_{t_0}$. Green dashed arrows stand for the $T$ ULA: each of them has to define its own constant step size $h_{t_p}$ and number of steps $k_{t_p}$ (simply denoted $k_p$ for readability concerns). The first process starts from Gaussian samples; then the start point of each process corresponds to the last iterates of the previous process, which is symbolized by the equal signs on the scheme.}
    \label{fig:schema_ula}
\end{figure}

\textbf{Contributions.} We provide the first theoretically grounded guidelines for selecting the hyperparameters of annealed Langevin dynamics in compositional SBI.

\textbf{(i) Wasserstein bounds (Section~\ref{sec:theo_bounds}).} Building on \citet{dalalyan}, we derive a per-level bound for a single ULA chain accounting for both composite score error and the Wasserstein distance between consecutive bridging densities (Proposition~\ref{prop::intermediate_bound}); composing $T$ such bounds yields a global sampling guarantee (Proposition~\ref{prop::global bound}). We show that log-concavity and smoothness of the target are inherited by both Geffner's and Linhart's bridging densities, in closed form for Gaussians (Lemma~\ref{lemma::preservation_cst}) and under mild conditions in general (Lemma~\ref{lemma::preservation_cst_nongaussian}).

\textbf{(ii) Hyperparameter decision rule.} Given a target accuracy $\gamma>0$
and a bias--variance split $\omega\in(0,1)$, we derive explicit formulas for the hyperparameters $h_{t}$ and $k_{t}$ ensuring the Wasserstein error stays below $\gamma$ at every annealing level $t\in\{t_0,\ldots,t_T\}$, with the choice of compositional score entering directly as a design parameter (Section~\ref{sec::fine_tune}). We illustrate the practical use of our decision rule in Gaussian and non-Gaussian settings (resp. in Sections~\ref{sec::theo_comparaison} and \ref{sec::empirical_comparaison}).

\textbf{(iii) Geffner vs.\ Linhart.} In the Gaussian setting, we show that Linhart's algorithm always admits larger step sizes and its bridging densities move less between consecutive levels near $t = 1$ (Section~\ref{sec::theo_comparaison}), requiring fewer total steps. We extend the comparison to non-Gaussian settings empirically (Section~\ref{sec::empirical_comparaison}). In summary, Linhart's composite score yields a more efficient sampler.

\section{Wasserstein error bounds for annealed Langevin dynamics}\label{sec:theo_bounds}
To measure the sampling quality and the rate of convergence of ULA, different works derive upper-bounds of the Wasserstein distance \citep{durmus_asymptotic_conv_ula} or the KL divergence \citep{kl_div_ula} between the target distribution $\pi_{t_0}$ and the distribution of the $k^{\text{th}}$ iterate $\theta^{(k)}$ generated by the algorithm: these bounds rely on assumptions on $\pi_{t_0}$ (smoothness, log-concavity or log-Sobolev inequality) and are valid only with an exact score function. Theorem 4 of \citet{dalalyan} is one of the first results showing a bound when the score is not accurately known. Based on these results, we are able to upperbound the Wasserstein error of annealed Langevin algorithm composed of $T$ annealing steps, as we show below.

We denote by $\mathcal{L}(\theta^{(k)})$ the distribution of the $k^{\text{th}}$ iterate of a Langevin process. A density $\pi$ is $m$-log concave and $M$-smooth when $mI_d\preceq -\nabla^2_\theta\log \pi(\theta)\preceq MI_d$ for all $\theta$ in the support of $\pi$. It is said to be Gaussian log-Lipschitz if it admits a density with respect to the Gaussian measure whose logarithm is Lipschitz continuous.  
The first proposition bounds the Wasserstein error at a \textit{given} annealing level $t_p\in\{t_0,\ldots,t_T\}$. 
\begin{prop}[Intermediate Wasserstein error bound]\label{prop::intermediate_bound}
    Suppose that $\pi_{t_p}$ is $m_{p}$-log concave and $M_{p}$-smooth in $\mathbb{R}^d$. Let $k_p$ be the number of steps and $h_p$ the constant step size such that $h_{p}<\frac{2}{M_{p}+m_{p}}$. Then,
    \begin{equation*}
        \mathcal{W}_2(\mathcal{L}(\theta_{t_{p}}^{(k_{p})}),\pi_{t_{p}})\leq (1-m_{p}h_{p})^{k_{p}}  \left(\mathcal{W}_2(\mathcal{L}(\theta_{{t_{p+1}}}^{(k_{p+1})}),\pi_{{t_{p+1}}})+\mathcal{W}_2(\pi_{{t_{p+1}}},\pi_{t_p})\right)+B_{p},
    \end{equation*}
    where $B_{p}=1.65\frac{M_p}{m_p}(h_pd)^{0.5}+\frac{\epsilon_\text{DSM}}{m_p}$ and $\epsilon_\text{DSM}$ upper bounds the $L_2$-score-matching error.
\end{prop}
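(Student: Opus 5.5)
We derive the bound directly from Theorem~4 of \citet{dalalyan}, applied to the single ULA chain run at level $t_p$ with an inexact score. That theorem says the following. Let $\pi$ be $m$-log-concave and $M$-smooth, let $h<\frac{2}{M+m}$, and suppose the drift used is an approximation of $\nabla\log\pi$ whose error has $L_2$ norm (along the iterates) bounded by $\delta$. Then, for any initial distribution $\nu_0$,
\begin{equation*}
\mathcal{W}_2(\nu_k,\pi)\leq (1-mh)^k\,\mathcal{W}_2(\nu_0,\pi)+1.65\tfrac{M}{m}(hd)^{1/2}+\tfrac{\delta\sqrt{h}}{m h}\cdot\sqrt{h}.
\end{equation*}
In its deterministic-error form, the last term is simply $\delta/m$.

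We instantiate this with $\pi=\pi_{t_p}$, $m=m_p$, $M=M_p$, $h=h_p$, $k=k_p$, and with the drift given by the learned composite score at time $t_p$. Its $L_2$ error is bounded by $\epsilon_{\text{DSM}}$ by assumption, so the additive terms become exactly $B_p$.

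The proof then proceeds in three steps.
\begin{enumerate}
\item \emph{Contraction with inexact score.} Couple the ULA chain with a stationary copy started at $\pi_{t_p}$, using the same Gaussian noise. The step-size condition $h_p<2/(M_p+m_p)$ makes the gradient step a $(1-m_ph_p)$-contraction. Treating the discretization error (the $1.65\frac{M_p}{m_p}\sqrt{h_pd}$ term, obtained from a one-step local error of order $M_ph_p^{3/2}\sqrt d$) and the score error (order $h_p\epsilon_{\text{DSM}}$ per step) as perturbations, we unroll the recursion. Summing the geometric series $\sum_j(1-m_ph_p)^j\le 1/(m_ph_p)$ yields $B_p$. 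This is precisely Dalalyan--Karagulyan's argument, so we cite it rather than redo it.
\item \emph{Initialization.} The initial law of the chain at level $t_p$ is $\nu_0=\mathcal{L}(\theta_{t_{p+1}}^{(k_{p+1})})$, the output of the previous level (as in Figure~\ref{fig:schema_ula}). The triangle inequality for $\mathcal{W}_2$ gives
\begin{equation*}
\mathcal{W}_2(\nu_0,\pi_{t_p})\le \mathcal{W}_2(\mathcal{L}(\theta_{t_{p+1}}^{(k_{p+1})}),\pi_{t_{p+1}})+\mathcal{W}_2(\pi_{t_{p+1}},\pi_{t_p}).
\end{equation*}
\item \emph{Conclusion.} Substituting the inequality of step~2 into the contraction bound of step~1 gives the claimed inequality.
\end{enumerate}

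The main obstacle is step~1: making precise the sense in which $\epsilon_{\text{DSM}}$ controls the score error along the chain. Dalalyan's theorem requires a bound on the conditional bias and variance of the drift error, evaluated at the iterates, whereas a score-matching error is an $L_2$ bound under the diffused data law. We would handle this by stating the assumption as a uniform bound
\begin{equation*}
\mathbb{E}\bigl[\|\hat s_{t_p}(\theta^{(j)})-\nabla\log\pi_{t_p}(\theta^{(j)})\|^2\bigr]^{1/2}\le\epsilon_{\text{DSM}}\quad\text{for all iterates }\theta^{(j)}.
\end{equation*}
We then check that, under this bound, the error contributions sum to $\epsilon_{\text{DSM}}/m_p$: each step contributes at most $h_p\epsilon_{\text{DSM}}$ in $L_2$, and the geometric series weight $1/(m_ph_p)$ multiplies this to $\epsilon_{\text{DSM}}/m_p$. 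Note that this bookkeeping gives $\epsilon_{\text{DSM}}/m_p$ without the additional $\sqrt{h}$ factors carried by the zero-mean variance term in Dalalyan's statement, which we drop as dominated.
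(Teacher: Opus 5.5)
Your proposal is correct and follows the paper's own argument. You invoke Theorem~4 of \citet{dalalyan} at level $t_p$, bound $\mathcal{W}_2(\mathcal{L}(\theta_{t_p}^{(0)}),\pi_{t_p})$ by the triangle inequality through $\pi_{t_{p+1}}$ using $\theta_{t_p}^{(0)}=\theta_{t_{p+1}}^{(k_{p+1})}$, and reduce the score-error contribution to $\epsilon_{\text{DSM}}/m_p$ under an $L_2$ bound along the iterates; the paper does the last step via Jensen, setting $\delta=\epsilon_{\text{DSM}}/\sqrt{d}$.

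One small correction: you should not justify discarding Dalalyan's variance term as ``dominated'', since a nonnegative term cannot be dropped from an upper bound. The correct reason is that a deterministic score estimate has zero conditional variance, so $\sigma=0$ and the term vanishes exactly, which is how the paper removes it.
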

Proof is given in Appendix~\ref{proof::intermediate_bound}. For readability concerns, we replace the index $t_p$ simply by $p$ for the constants $M, m, h, k, B$, but they remain level-dependent.
With respect to Theorem 4 in \citet{dalalyan}, we have included a new term based on the Wasserstein distance between two consecutive intermediate densities $\mathcal{W}_2(\pi_{t_{p+1}},\pi_{t_p})$ and also simplified the role of the bias of the score approximation to consider a biased but deterministic score estimate controlled by the score-matching error $\epsilon_{\text{DSM}}$. Note that in the case of compositional scores, we only have access to the score-matching errors of individual score estimates $s_\phi(\theta,x_i,t)\approx \nabla_\theta\log p_t(\theta\mid x_i)$: it is, however, possible to obtain a bound of the compositional score error by composing these individual errors (see Section~\ref{sec::compositional_error}).

We also emphasize that $B_p$ depends only on the choice of step size $h_p$, while the number of steps $k_p$ influences only the Wasserstein distances: this observation will  motivate the guidelines we propose in Section~\ref{sec::fine_tune}. The following proposition gives a theoretical Wasserstein guarantee for the annealed Langevin algorithm by composing the intermediate bound $T$ times, while keeping the same notations as before. Proof is given in Appendix~\ref{proof::global_bound}.
\begin{prop}[Global Wasserstein error bound]\label{prop::global bound}
Suppose that $\pi_{t_p}$ is $m_p$-log concave and $M_p$-smooth for all $p\in\{0,\ldots,T-1\}$. Let $h_p$ be the constant step size at the $p^{\text{th}}$ annealing level such that $h_p<\frac{2}{M_p+m_p}$ and $k_p$ be the number of steps. Then,
\begin{equation*}
    \mathcal{W}_2(\mathcal{L}(\theta_{t_0}^{(k_0)}),\pi_{t_0})\leq \sum_{p=0}^{T-1}\left(\prod_{s=0}^p (1-m_sh_s)^{k_s}\right)\mathcal{W}_2(\pi_{t_{p+1}},\pi_{t_p})+ B_{p}\prod_{s=0}^{p-1} (1-m_sh_s)^{k_s},
\end{equation*}
where $B_p=1.65\frac{M_p}{m_p}(h_pd)^{0.5}+\frac{\epsilon_\text{DSM}}{m_p}$ for all $p\in\{0,\ldots, T-1\}$.    
\end{prop}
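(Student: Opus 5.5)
The plan is to unroll the one-step recursion of Proposition~\ref{prop::intermediate_bound} from the coarsest level $t_{T}$ down to $t_0$. Write $E_p := \mathcal{W}_2(\mathcal{L}(\theta_{t_p}^{(k_p)}),\pi_{t_p})$, $\rho_p := (1-m_ph_p)^{k_p}$ and $D_p := \mathcal{W}_2(\pi_{t_{p+1}},\pi_{t_p})$. Under the stated assumptions at each level $p\in\{0,\ldots,T-1\}$, Proposition~\ref{prop::intermediate_bound} gives the affine recursion
\begin{equation*}
E_p \le \rho_p\,(E_{p+1}+D_p) + B_p .
\end{equation*}
The step-size condition $h_p<2/(M_p+m_p)$ ensures $m_ph_p<1$, because $m_p\le M_p$. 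Hence $0\le\rho_p<1$, so all coefficients are nonnegative and the inequality can be iterated without flipping signs.

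The initialization comes from the last level. The first chain starts from samples of $\pi_{t_T}=\mathcal{N}(0,I_d)$, the reference distribution itself, so the incoming error at level $T$ is $E_T=0$. One can equivalently note that the first ULA is initialized exactly at $\pi_{t_T}$, so that the level-$(T-1)$ bound involves only $D_{T-1}$. This is the point where I must be careful about whether $\pi_{t_T}$ appears as an actual Langevin level or only as an initialization. The statement sums only $p\le T-1$, so I will treat the first chain as starting from $\pi_{t_T}$ with zero error.

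Next I induct downward in $p$ on the claim
\begin{equation*}
E_p \le \sum_{q=p}^{T-1}\Big(\prod_{s=p}^{q}\rho_s\Big)D_q + \sum_{q=p}^{T-1}\Big(\prod_{s=p}^{q-1}\rho_s\Big)B_q .
\end{equation*}
For $p=T-1$ this is exactly Proposition~\ref{prop::intermediate_bound} with $E_T=0$. For the inductive step, substitute the bound for $E_{p+1}$ into $\rho_p(E_{p+1}+D_p)+B_p$. Multiplying every product by $\rho_p$ extends its index range down to $s=p$. The new term $\rho_pD_p$ is the $q=p$ summand of the first sum, and $B_p$, carrying an empty product equal to $1$, is the $q=p$ summand of the second. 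Setting $p=0$ gives the displayed global bound, with the $D$-term weighted by $\prod_{s=0}^p\rho_s$ and the $B$-term by $\prod_{s=0}^{p-1}\rho_s$.

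The main obstacle is not the algebra but the bookkeeping. First, I need the indexing convention to match the statement, in particular that the chain at level $p$ is initialized at the law of the last iterate of level $p+1$, and that the ULA at each level uses the score of $\pi_{t_p}$ with a uniform score error $\epsilon_{\text{DSM}}$. Second, I need the nonnegativity of the factors $\rho_p$, which is what allows the inequalities to be composed monotonically.
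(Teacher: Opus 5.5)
Your proposal is correct and follows the paper's proof. Both unroll the intermediate bound of Proposition~\ref{prop::intermediate_bound} from level $T-1$ down to $0$, using the convention that the first chain is initialized exactly at $\pi_{t_T}=\mathcal{N}(0,I_d)$, so the incoming error vanishes. Your explicit downward induction, together with the observation that $h_p<2/(M_p+m_p)\le 1/m_p$ makes each factor $(1-m_ph_p)^{k_p}$ nonnegative, makes rigorous the ``after several annealing steps'' ellipsis that the paper leaves implicit.
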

 Proposition \ref{prop::global bound} highlights that an efficient Langevin sampling is governed by the quantities $\{m_{p},M_p,\mathcal{W}_2(\pi_{t_{p+1}},\pi_{t_p})\}_{p \in \{0, \cdots, T-1\}}$, which are dependent on the choice of the intermediate distributions. We provide in Lemma~\ref{lemma::fisher_div} (Appendix~\ref{proof::bound_wass_fisher}) bounds for $\mathcal{W}_2(\pi_{t_{p+1}},\pi_{t_p})$ for both $\pi^L_t$ and $\pi^G_t$ showing that while $\mathcal{W}_2(\pi^G_{t_{p+1}},\pi^G_{t_p})$ is governed by the $L^2$ proximity of the scores, $\mathcal{W}_2(\pi^L_{t_{p+1}},\pi^L_{t_p})$ takes into account the covariance of the backward diffusion kernels. 

Although Proposition~\ref{prop::global bound} relies on strong assumptions, requiring all intermediate densities $\pi_{t_p}$ to retain log-concavity and smoothness properties of the target density, especially when $t_p\to t_0$, Lemma~\ref{lemma::preservation_cst_nongaussian} relaxes these assumptions in the following training setting used throughout the rest of the paper.

\paragraph{Diffusion setting} From now on, we consider training the individual scores using the Variance-Preserving (VP) diffusion scheme \citep{song2021score}. The corresponding forward transition kernel is of the form $q(\theta_t\mid \theta_0)=\mathcal{N}(\theta_t;\sqrt{\alpha_t}\theta_0,v_tI_d)$ where $v_t=1-\alpha_t$ and $\alpha_t$ is a strictly decreasing function of time such that $\alpha_0=1$ and $\alpha_t\underset{t\to1}{\to}0$. In this setting, we consider $T$ annealing levels spaced over the interval $[0,1]$, with the convention that $t_0=0$ and $t_T=1$. This choice is motivated by the fact that, under VP training, the intermediate densities $\pi_t^L$ and $\pi_t^G$ converge to a standard Gaussian as $t\to 1$.
\begin{lemma}[Preservation of smoothness and log-concavity - General case]\label{lemma::preservation_cst_nongaussian}
    Let $\sigma_\text{min}(A)$ (resp. $\sigma_\text{max}(A)$) denote the minimal (resp. maximal) eigenvalue of any matrix $A$. Let $\Sigma_{t,i}$, $\Sigma_{t,\lambda}$ denote the covariance matrices of the Gaussian backward kernels in Linhart's compositional score (Equation~\ref{eq::linhart_composite_score}). Assume that the prior and each individual posterior are Gaussian log-Lipschitz, then they become smooth along the VP scheme, which is equivalent to:
    \begin{align*}
        m_{t,\lambda} I_d&\preceq -\nabla^2_\theta\log \lambda_t(\theta)\preceq M_{t,\lambda} I_d\\
    m_{t,i} I_d&\preceq -\nabla^2_\theta\log p_t(\theta\mid x_i)\preceq M_{t,i} I_d \quad \forall \ t\in [0,1]
    \end{align*} where $m_{t,\lambda}, m_{t,i}\in \mathbb{R}$ and $M_{t,\lambda}, M_{t,i}\in \mathbb{R}_+$. Moreover, $\pi_t^G$ and $\pi_t^L$ are also smooth, with smoothness constants having an explicit formula given in Appendix~\ref{proof::smoothness_preserve}. 
    If $\frac 1n\sum_{i=1}^n \frac{1}{\gamma_{t,i}}m_{t,i}\geq \frac{n-1}{n}\frac{1}{\gamma_{t,\lambda}}M_{t,\lambda}$ for one given $t \in [0,1]$, then $\pi_t^G$ is log-concave when $\gamma_{t,i}=\gamma_{t,\lambda}=1$; under additional symmetry conditions, $\pi_t^L$ is also log-concave, when $\gamma_{t,i}=\sigma_{\text{min}}(\Sigma_{t,i})$ if $m_{t,i}\leq 0$, $\gamma_{t,i}=\sigma_{\text{max}}(\Sigma_{t,i})$ if $m_{t,i}\geq 0$ and $\gamma_{t,\lambda}=\sigma_{\text{min}}(\Sigma_{t,\lambda})$.
\end{lemma}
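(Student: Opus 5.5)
The proof has three steps: smoothness of the diffused prior and individual posteriors, transfer of these bounds to $\pi_t^G$ and $\pi_t^L$, and the log-concavity condition.

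\emph{Step 1: smoothness along the VP path.} Write a Gaussian log-Lipschitz density as $\mu(\theta)=e^{-V(\theta)}\,\mathcal{N}(\theta;0,\Sigma_0)$ with $V$ $L$-Lipschitz. The VP marginal is $\lambda_t=\int q(\cdot\mid\theta_0)\mu(\theta_0)\,d\theta_0$. Tweedie's formula gives
\begin{equation*}
-\nabla^2\log\lambda_t(\theta)=\frac{1}{v_t}I_d-\frac{\alpha_t}{v_t^2}\mathrm{Cov}(\theta_0\mid\theta_t=\theta).
\end{equation*}
The posterior of $\theta_0$ given $\theta_t$ is a Gaussian, with precision $\Sigma_0^{-1}+\frac{\alpha_t}{v_t}I$, tilted by $e^{-V}$. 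A bounded-Lipschitz tilt of a strongly log-concave Gaussian has covariance bounded above by the Gaussian covariance plus an explicit term in $L$. I would get this bound from the Brascamp--Lieb/Caffarelli-type bounds for Lipschitz perturbations, or from the known result that Lipschitz perturbations of Gaussians remain Lipschitz-transport images (Mikulincer--Shenfeld-style). This yields $M_{t,\lambda}<\infty$. The covariance is positive semidefinite, so $-\nabla^2\log\lambda_t\preceq v_t^{-1}I$; at $t$ near $0$, where this degenerates, I would instead use the alternative representation in which the Gaussian component is convolved exactly and the Lipschitz part stays controlled. The lower bound $m_{t,\lambda}\in\mathbb{R}$, possibly negative, comes from the upper covariance bound. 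The argument for each $p_t(\cdot\mid x_i)$ is identical. This is the main obstacle: obtaining bounds that are uniform over $t\in[0,1]$, including $t\to0$. I would try first to bound the second moment of the tilted posterior through the Gaussian covariance and $L$.

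\emph{Step 2: transfer to the bridging densities.} Since $\pi_t^G\propto\lambda_t^{1-n}\prod_i p_t(\cdot\mid x_i)$,
\begin{equation*}
-\nabla^2\log\pi_t^G=\sum_i(-\nabla^2\log p_t(\cdot\mid x_i))-(n-1)(-\nabla^2\log\lambda_t).
\end{equation*}
Sandwiching each term gives
\begin{equation*}
\Bigl(\sum_i m_{t,i}-(n-1)M_{t,\lambda}\Bigr)I\preceq-\nabla^2\log\pi_t^G\preceq\Bigl(\sum_i M_{t,i}-(n-1)m_{t,\lambda}\Bigr)I.
\end{equation*}
This gives the smoothness constants, and the lower bound is nonnegative exactly under the stated condition with $\gamma\equiv1$.

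For $\pi_t^L$, the score is $\Lambda_t^{-1}\bigl(\sum_i\Sigma_{t,i}^{-1}\nabla\log p_t(\cdot\mid x_i)+(1-n)\Sigma_{t,\lambda}^{-1}\nabla\log\lambda_t\bigr)$. Its Jacobian is
\begin{equation*}
\Lambda_t^{-1}\Bigl(\sum_i\Sigma_{t,i}^{-1}H_i-(n-1)\Sigma_{t,\lambda}^{-1}H_\lambda\Bigr),
\end{equation*}
where $H_i=\nabla^2\log p_t(\cdot\mid x_i)$ and $H_\lambda=\nabla^2\log\lambda_t$. For this to be the Hessian of a density we need symmetry, which is the "additional symmetry condition" (for instance, the $\Sigma$'s commute with the Hessians, or the $\Sigma$'s are isotropic). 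Under that condition, the eigenvalues of $\Sigma_{t,i}^{-1}(-H_i)$ lie in $[m_{t,i}/\sigma_{\max}(\Sigma_{t,i}),\,\cdot\,]$ when $m_{t,i}\ge0$, and in $[m_{t,i}/\sigma_{\min}(\Sigma_{t,i}),\,\cdot\,]$ when $m_{t,i}<0$. This sign-dependent choice is exactly where the stated $\gamma_{t,i}$ comes from. Likewise, the subtracted prior term is bounded by $M_{t,\lambda}/\sigma_{\min}(\Sigma_{t,\lambda})$. Upper bounds follow symmetrically, giving explicit smoothness constants.

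\emph{Step 3: log-concavity.} Multiplying by $\Lambda_t^{-1}$, which is positive definite and commutes under the symmetry assumption, preserves positive semidefiniteness. Hence the lower bound is nonnegative whenever
\begin{equation*}
\sum_i m_{t,i}/\gamma_{t,i}\ge(n-1)M_{t,\lambda}/\gamma_{t,\lambda},
\end{equation*}
which after dividing by $n$ is the stated criterion. So $\pi_t^L$ is log-concave under the stated choice of $\gamma$'s, and $\pi_t^G$ under $\gamma\equiv1$.
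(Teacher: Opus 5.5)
Your Steps 2 and 3 are essentially the paper's argument. For $\pi_t^G$ you use the same Loewner sandwich for $-\nabla^2\log\pi_t^G$. For $\pi_t^L$ you use the same commutation assumption, the same sign-dependent bounds ($m_{t,i}/\sigma_{\max}(\Sigma_{t,i})$ or $m_{t,i}/\sigma_{\min}(\Sigma_{t,i})$, and $M_{t,\lambda}/\sigma_{\min}(\Sigma_{t,\lambda})$), and the same final multiplication by $\Lambda_t^{-1}$. Your Geffner upper constant $\sum_i M_{t,i}-(n-1)m_{t,\lambda}$ is at least as sharp as the paper's $(n-1)L_{t,\lambda}+\sum_i L_{t,i}$. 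You also state explicitly that $\Lambda_t^{-1}$ must commute, which the paper uses silently.

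The real difference is Step 1. The paper writes $p_t=\phi\,\mathcal{Q}_tf$, with $f=d\nu/d\gamma_d$ log-Lipschitz and $\mathcal{Q}_t$ the Ornstein--Uhlenbeck (Mehler) semigroup. It then imports Corollary C.1 of \citet{gao2024gaussianinterpolationflows}, which gives explicit, dimension-free bounds $(1-R_t)I_d\preceq-\nabla^2\log p_t\preceq(R_t+L^2\alpha_t+1)I_d$.

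Your second-order Tweedie identity is a more self-contained reduction. It gives $M\le 1/v_t$ for free, and reduces the lower constant to an upper bound on the covariance of a Lipschitz tilt of a Gaussian $\mathcal{N}(m,C)$ with $C^{-1}=\Sigma_0^{-1}+\frac{\alpha_t}{v_t}I_d$. As written, however, that covariance bound is left unquantified, so you do not recover explicit constants.

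The tools you name first do not apply here. The classical Brascamp--Lieb and Caffarelli results need a \emph{convex} perturbation, and $V$ is only Lipschitz. What does work is the Lipschitz-transport route you mention second, or an elementary estimate that is finite but dimension-dependent: if $\rho\propto e^{-V}\mathcal{N}(m,C)$ and $|u|=1$, then
\[
\mathrm{Var}_\rho\langle u,X\rangle\le \mathrm{E}_\rho\langle u,X-m\rangle^2\le \frac{\mathrm{E}_{\mathcal{N}(m,C)}\big[\langle u,X-m\rangle^2e^{L|X-m|}\big]}{\mathrm{E}_{\mathcal{N}(m,C)}\big[e^{-L|X-m|}\big]}.
\]
One labelling slip: the covariance upper bound is what makes $m_{t,\lambda}$ finite, not $M_{t,\lambda}$, as you say yourself a few lines later.

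What you call the main obstacle, uniformity as $t\to0$, is neither needed nor achievable, so drop the ``alternative representation'' plan. A Gaussian log-Lipschitz density need not be $C^2$. For example, with $V(\theta)=\pm L|\theta_1|$, $\log p_0$ has a kink along $\{\theta_1=0\}$, and the Hessian bounds of the diffused density blow up as $v_t\to0$. Consistently, the paper's own $R_{t,i}$ contains $1/\sqrt{\tfrac12\log(1/\alpha_t)}$, which diverges as $\alpha_t\to1$. The ``$\forall\,t\in[0,1]$'' in the statement can only mean finite, $t$-dependent constants for $t\in(0,1]$. Your Tweedie route delivers exactly that once the covariance bound is in place.
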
 
Proof is derived in Appendix~\ref{proof::smoothness_preserve}. It highlights that mild first-order regularity conditions on initial distributions can yield stronger smoothness property for the intermediate densities over time. Furthermore, if the individual posteriors and the prior are only smooth (not log-concave) at some time $t$, their negative smoothness constants can be compensated by positive contributions for sufficiently large $n$, making $\pi_t^G$ and $\pi_t^L$ smooth and log-concave. This is also consistent with Bernstein--von Mises theorem, under which the multi-observation posterior is asymptotically Gaussian (thus smooth and log-concave) as $n$ increases \citep{bernstein}. 
In addition, \citet{log_concavity_strengthen} prove that log-concavity is strengthened along the forward diffusion path, driving smoothness constants $m_{t,i}$ and $m_{t,\lambda}$ toward less negative values provided that the initial distributions are weakly-log-concave: thus, the condition that ensures log-concavity for $\pi_t^G$ and $\pi_t^L$ becomes increasingly mild as $t\to1$.

Therefore, Lemma~\ref{lemma::preservation_cst_nongaussian} allows one to use the theoretical bounds in Propositions~\ref{prop::intermediate_bound} and \ref{prop::global bound} under milder assumptions, when first order regularity conditions on individual posteriors and prior induce smoothness \textit{and} log-concavity properties for the target multi-observation posterior and bridging densities $\pi_t^G$ and $\pi_t^L$. This is notably the case for multimodal individual posteriors, which when considered on their own may be only smooth, without being log-concave. In the following section, we propose a method to fine-tune the hyperparameters of the annealed Langevin algorithm using the theoretical bounds previously derived and under the relaxed smoothness assumptions of Lemma~\ref{lemma::preservation_cst_nongaussian}.

\section{Guidelines for fine-tuning annealed Langevin hyperparameters}
\label{sec::fine_tune}
Despite its apparent freedom to choose intermediate densities, annealed Langevin dynamics remains hyperparameter-heavy. Previous works using this sampling scheme typically fix the number of steps a priori and tune the constant step size at each annealing level using taming rules \citep{linhart2024diffusion, geffner} or some heuristics \citep{improved_techniques_annealed_langevin} to improve sampling quality \citep{tames_ula}. In this paper, we propose to choose these hyperparameters so that the annealed Langevin algorithm achieves a theoretically guaranteed small Wasserstein error. Our decision rule is based on the intermediate theoretical bound derived in Proposition~\ref{prop::intermediate_bound}: assuming that the Wasserstein error of the $(p+1)^{\text{th}}$ Langevin process is controlled by a threshold $\gamma>0$, i.e. $\mathcal{W}_2(\mathcal{L}(\theta_{t_{p+1}}^{(k_{p+1})}),\pi_{t_{p+1}})\leq \gamma$, the idea is to choose $h_{t_p}$ and $k_{t_p}$ so that the Wasserstein error of the following process remains lower than $\gamma$, i.e. $\mathcal{W}_2(\mathcal{L}(\theta_{t_p}^{(k_{p})}),\pi_{t_p})\leq \gamma$. Therefore, the final Wasserstein error of the algorithm given in Proposition~\ref{prop::global bound} will also be bounded by $\gamma$. Let $\omega\in(0,1)$ be a weight parameter, such that $\omega\gamma>\frac{\epsilon_\text{DSM}}{m_t}$ then choosing 
\begin{align*}
    0\leq h_{t_p}&\leq \min\left(\frac{(\omega\gamma-\frac{\epsilon_\textbf{DSM}}{m_{t_p}})^2}{d(1.65)^2}\frac{m_{t_p}^2}{M_{t_p}^2},\frac{2}{m_{t_p}+M_{t_p}}\right),\\
    k_{t_p}&\geq \log\left( \frac{(1-\omega)\gamma}{\gamma+\mathcal{W}_2(\pi_{t_{p+1}},\pi_{t_p})}\right)\frac{1}{\log(1-m_{t_p}h_{t_p})}\geq0, 
\end{align*}
achieves this objective. With these choices, the bias term $B_{t_p}$ in Proposition~\ref{prop::intermediate_bound} becomes lower than $\omega\gamma$, while the rest of the upperbound is lower than $(1-\omega)\gamma$ (see Appendix~\ref{proof::fine-tune} for more details). Note that the condition on $\omega\gamma$ seems natural: the intermediate bound in Proposition~\ref{prop::intermediate_bound} shows a linear dependence between the score error $\epsilon_\text{DSM}$ and the Wasserstein error; therefore, it seems difficult to achieve a high sampling quality (small $\gamma$) using a very biased score estimate (high $\epsilon_\text{DSM}$).

We highlight that the choice of bridging densities $\pi_{t_p}$ (Geffner-like or Linhart-like) significantly affects hyperparameter selection under this decision rule: indeed, the choice of step sizes mainly involves the ratio $m_{t_p}/M_{t_p}$ that depends on each bridging density $\pi_{t_p}$, while the choice of the number of steps depends on the Wasserstein distance between two consecutive bridging densities $\mathcal{W}_2(\pi_{t_{p+1}},\pi_{t_p})$ and the log-concavity constant $m_{t_p}$. These theoretical quantities may be difficult to compute in practice. In the following section, we consider Gaussian distributions, yielding analytical expressions for these quantities. We compare the step sizes and numbers of steps chosen with this decision rule when the bridging densities are derived from Geffner or Linhart. 
\section{Theoretical comparison of guidelines in a Gaussian setting}\label{sec::theo_comparaison}
In this section, we consider a $d$-dimensional standard Gaussian prior $\lambda(\theta)=\mathcal{N}(\theta;0, I_d)$ and a Gaussian likelihood $p(x\mid\theta)=\mathcal{N}(x; \theta,\Sigma)$. Given $n$ IID observations $x_1,\ldots,x_n$, the goal of Bayesian inference is to sample from the (Gaussian) multi-observation posterior $p(\theta\mid x_{1:n}):=\pi_0(\theta)$. In this Gaussian setting, the (diffused) individual posteriors $p_t(\theta\mid x_i)$ and the bridging densities ($\pi_t^G$ or $\pi_t^L$) are also Gaussian and analytically tractable for all $t\in[0,1]$. To simplify the comparison between both algorithms in terms of step sizes and number of steps, we consider a null score error, i.e. $\epsilon_\text{DSM}=0$ by analytically computing all the scores involved.
\subsection{Geffner's and Linhart's intermediate densities}\label{sec::analytical_densities}
We start with the densities $\pi_t^L$ defined by \citet{linhart2024diffusion} for $t\in [0,1]$. As all the assumptions of their work are fulfilled in the Gaussian setting, the distributions underlying their composite scores (Equation~\ref{eq::linhart_composite_score}) correspond to the diffused versions of the target multi-observation posterior along the VP forward process and have the following Gaussian form :
\begin{equation*}
    \pi_t^L(\theta\mid x_{1:n}) = p_t(\theta\mid x_{1:n})=\mathcal{N}(\theta;\mu_t^L, \Sigma_t^L) \quad \text{with} \quad  \left\{
\begin{aligned}
\Sigma_t^L&=\alpha_t(n\Sigma^{-1}+I_d)^{-1}+v_tI_d \\
\mu_t^L &=\sqrt{\alpha_t}(nI_d+\Sigma)^{-1}\sum_{i=1}^n x_i.
\end{aligned}
\right.
\end{equation*}
Let $\Sigma_t=\alpha_t(\Sigma^{-1}+I_d)^{-1}+v_tI_d$ be the covariance matrix of any diffused individual posterior $p_t(\theta\mid x_i)$. Then, the bridging densities $\pi_t^G$ defined by \citet{geffner} are also Gaussian, with the following first two moments:
\begin{equation*}
        \pi_t^G(\theta \mid x_{1:n})=\mathcal{N}(\theta;\mu_t^G,\Sigma_t^G) \quad \text{with} \quad \left\{
\begin{aligned}
\Sigma_t^G&=\left((1-n)I_d+n \Sigma_t^{-1}\right)^{-1} \\ \mu_t^G&=\sqrt{\alpha_t}\Sigma_t^G\Sigma_{t}^{-1}(I_d+\Sigma)^{-1}\sum_{i=1}^n x_i.
\end{aligned}
\right. 
\end{equation*}
Note that the covariance of Geffner's densities is well defined for all time $t\in [0,1]$ as the covariance matrices of the prior and the likelihood commute (see Appendix~\ref{proof::analyt_densities}).
 In the Gaussian case, these intermediate densities naturally retain the log-concavity and smoothness properties of the target distribution $\pi_0$ over time. Lemma~\ref{lemma::preservation_cst} provides an affine relation between the log-concavity constant $m_0$ (resp. smoothness constant $M_0$) of $\pi_0$ and Linhart's bridging densities constants $m_t^L$ (resp. $M_t^L$) or Geffner's counterparts $m_t^G$ (resp. $M_t^G$) for all $t\in [0,1]$. Proof is given in Appendix~\ref{proof::preservation_cst}.
\begin{lemma}[Preservation of smoothness and log-concavity - Gaussian case]\label{lemma::preservation_cst}
    Let the target posterior distribution $\pi_0$ be $M_0$-smooth and $m_0$-log-concave. Let $\sigma_{\max}$ (resp. $\sigma_{\min}$) denote the maximal (resp. minimal) eigenvalue of the likelihood covariance matrix $\Sigma$. Then $M_t^G$ and $M_t^L$ (resp. $m_t^G$ and $m_t^L$) are time-dependent affine functions of $M_0$ (resp. $m_0$) as per
    \begin{align*}
        M_t^L = M_0\frac{\sigma_{\min}}{\sigma_{\min}+nv_t}\ \quad &\text{and} \quad m_t^L = m_0\frac{\sigma_{\max}}{\sigma_{\max}+nv_t},\\
        M_t^G=M_0\frac{\sigma_{\min}}{\sigma_{\min}+v_t}+\frac{(1-n)v_t}{\sigma_{\min}+v_t}\quad &\text{and} \quad m_t^G=m_0\frac{\sigma_{\max}}{\sigma_{\max}+v_t}+\frac{(1-n)v_t}{\sigma_{\max}+v_t}.
    \end{align*}
\end{lemma}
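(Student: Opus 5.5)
The plan is to exploit the fact that, in this Gaussian setting, every covariance matrix involved is a rational function of the single matrix $\Sigma$. I would diagonalize $\Sigma = U\,\mathrm{diag}(\sigma_1,\ldots,\sigma_d)\,U^\top$ once. Then $-\nabla^2_\theta\log\pi_0$, $-\nabla^2_\theta\log\pi_t^L$ and $-\nabla^2_\theta\log\pi_t^G$ are all simultaneously diagonal in the basis $U$. Each of them has eigenvalues of the form $f(\sigma_j)$ for an explicit scalar function $f$. The smoothness constant is $\max_j f(\sigma_j)$ and the log-concavity constant is $\min_j f(\sigma_j)$. So the whole proof reduces to computing these scalar functions and checking their monotonicity in $\sigma$.

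\textbf{Step 1: the target.} The precision of $\pi_0$ is $n\Sigma^{-1}+I_d$, whose eigenvalues are $g(\sigma)=1+n/\sigma$. This function is decreasing in $\sigma$, hence
\begin{equation*}
M_0 = 1+\frac{n}{\sigma_{\min}}, \qquad m_0 = 1+\frac{n}{\sigma_{\max}}.
\end{equation*}

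\textbf{Step 2: Linhart's densities.} Using the expression for $\Sigma_t^L$ in Section~\ref{sec::analytical_densities} together with $\alpha_t+v_t=1$, each eigenvalue of the precision $(\Sigma_t^L)^{-1}$ is
\begin{equation*}
f^L(\sigma)=\Big(\frac{\alpha_t\sigma}{n+\sigma}+v_t\Big)^{-1}=\frac{n+\sigma}{\sigma+nv_t}=g(\sigma)\,\frac{\sigma}{\sigma+nv_t}.
\end{equation*}
Differentiating, the sign of $\partial_\sigma f^L$ is that of $(\sigma+nv_t)-(n+\sigma)=-n\alpha_t\le 0$. So $f^L$ is nonincreasing, its maximum is attained at $\sigma_{\min}$ and its minimum at $\sigma_{\max}$. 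This yields $M_t^L=M_0\,\sigma_{\min}/(\sigma_{\min}+nv_t)$ and $m_t^L=m_0\,\sigma_{\max}/(\sigma_{\max}+nv_t)$.

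\textbf{Step 3: Geffner's densities.} The eigenvalues of $\Sigma_t$ are $\alpha_t\sigma/(1+\sigma)+v_t=(\sigma+v_t)/(1+\sigma)$. The precision $(1-n)I_d+n\Sigma_t^{-1}$ therefore has eigenvalues
\begin{equation*}
f^G(\sigma)=(1-n)+n\,\frac{1+\sigma}{\sigma+v_t}=\frac{n+\sigma+(1-n)v_t}{\sigma+v_t}=g(\sigma)\,\frac{\sigma}{\sigma+v_t}+\frac{(1-n)v_t}{\sigma+v_t}.
\end{equation*}
Monotonicity again follows because $(1+\sigma)/(\sigma+v_t)$ has derivative of sign $v_t-1\le0$ and $n>0$. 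Hence $f^G$ is nonincreasing, the maximum sits at $\sigma_{\min}$ and the minimum at $\sigma_{\max}$. Substituting $g(\sigma_{\min})=M_0$ and $g(\sigma_{\max})=m_0$ gives the stated affine formulas for $M_t^G$ and $m_t^G$.

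\textbf{Expected obstacle.} There is no real analytic difficulty. The step needing care is justifying that all matrices commute, so that eigenvalues can be treated coordinatewise (this is the same fact used for well-definedness of $\Sigma_t^G$ in Appendix~\ref{proof::analyt_densities}). The other point to get right is the monotonicity argument. It is what ensures that the extreme eigenvalues of each bridging precision are attained at the same $\sigma_{\min}$ or $\sigma_{\max}$ as for $\pi_0$, which is exactly why the constants come out affine in $M_0$ and $m_0$ rather than being some other max/min over $j$.
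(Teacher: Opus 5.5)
Your proof is correct and follows essentially the same route as the paper. Both diagonalize in the eigenbasis of $\Sigma$ and write the precision eigenvalues as $\frac{n+\sigma}{\sigma+nv_t}$ and $\frac{\sigma+n+(1-n)v_t}{\sigma+v_t}$. Both then use that these are nonincreasing in $\sigma$ to place the extremes at $\sigma_{\min}$ and $\sigma_{\max}$, and factor out $M_0=\frac{n+\sigma_{\min}}{\sigma_{\min}}$ and $m_0=\frac{n+\sigma_{\max}}{\sigma_{\max}}$. Your explicit derivative check for Geffner's eigenvalue function is a small gain in rigor, since the paper only asserts that monotonicity.
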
 

\subsection{Comparison of step sizes}
\label{sec:comparison}
With the decision rule provided in Section~\ref{sec::fine_tune}, the choice of step sizes depends mainly on the ratio $\frac{m_t}{M_t}$. In the Gaussian setting, the log-concavity constant $m_t$ and smoothness constant $M_t$ respectively correspond to the minimal and maximal eigenvalue of the precision matrix of the intermediate density $\pi_t^G$ or $\pi_t^L$. The following proposition shows that these constants are always higher for Geffner's densities than for Linhart's counterparts.
\begin{prop}[Link between $M_t^G$ and $m_t^G$ with $M_t^L$ and $m_t^L$]\label{prop::comparison_cst}
Let $\sigma_\text{min}$ and $\sigma_\text{max}$ denote the minimal and maximal eigenvalue of the likelihood covariance matrix $\Sigma$. Then, we have
\begin{equation*}
    M^G_t = M^L_t + \frac{n(n-1)\alpha_tv_t}{D(\sigma_\text{min},v_t)} \quad \text{and} \quad 
    m^G_t = m^L_t + \frac{n(n-1)\alpha_tv_t}{D(\sigma_\text{max},v_t)},
\end{equation*}
where $D(\sigma,v_t)=(\sigma+v_t)(\sigma+nv_t)$ and $t\in[0,1]$.
\end{prop}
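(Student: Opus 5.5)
Our plan is to start from the closed forms of Lemma~\ref{lemma::preservation_cst} and subtract. Since $M_0$ and $m_0$ are the extreme eigenvalues of the target precision $n\Sigma^{-1}+I_d$, we have $M_0 = 1+n/\sigma_{\min}$ and $m_0 = 1+n/\sigma_{\max}$. The eigenvalue ordering needs care: precision eigenvalues are decreasing in the covariance eigenvalue $\sigma$. Because $\Sigma$, $I_d$ and all the intermediate precisions are simultaneously diagonalizable, every quantity reduces to a scalar function of a single eigenvalue $\sigma$ of $\Sigma$. So we treat one eigendirection at a time and evaluate at $\sigma_{\min}$ for $M$ and at $\sigma_{\max}$ for $m$.

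First we check that $\sigma_{\min}$ still gives the maximal precision and $\sigma_{\max}$ the minimal one, for both families. Along an eigendirection with eigenvalue $\sigma$, Linhart's precision is
\[
\frac{1}{\alpha_t\sigma/(n+\sigma)+v_t} = \frac{n+\sigma}{\alpha_t\sigma + v_t(n+\sigma)},
\]
using $\alpha_t+v_t=1$. Geffner's precision is $(1-n)+n(1+\sigma)/(\sigma+v_t)$, since $\Sigma_t$ has eigenvalue $(\alpha_t\sigma+v_t(1+\sigma))/(1+\sigma)=(\sigma+v_t)/(1+\sigma)$. Both are monotone decreasing in $\sigma$, which one verifies by a quick derivative sign check. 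This confirms the identification with $M_t$ and $m_t$ and is consistent with Lemma~\ref{lemma::preservation_cst}.

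Next we simplify both expressions. Linhart's denominator is $\sigma+nv_t$, so $M_t^L(\sigma)=(n+\sigma)/(\sigma+nv_t)$, matching $M_0\sigma/(\sigma+nv_t)$. For Geffner, $(1-n)+n(1+\sigma)/(\sigma+v_t)=(\sigma+v_t+n(1-v_t))/(\sigma+v_t)=(\sigma+v_t+n\alpha_t)/(\sigma+v_t)$. Writing the difference over the common denominator $D(\sigma,v_t)=(\sigma+v_t)(\sigma+nv_t)$, the numerator is
\[
(\sigma+v_t+n\alpha_t)(\sigma+nv_t)-(n+\sigma)(\sigma+v_t).
\]
We expand and use $\alpha_t = 1-v_t$ to collapse it to $n(n-1)\alpha_t v_t$. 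Substituting $\sigma=\sigma_{\min}$ gives the formula for $M$, and $\sigma=\sigma_{\max}$ gives the formula for $m$.

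The only real obstacle is the algebraic cancellation in that numerator. It should go through cleanly: the $\sigma^2$ terms cancel, the $\sigma$-linear terms cancel after substituting $n\alpha_t=n-nv_t$, and what remains is $n^2\alpha_tv_t - n\alpha_tv_t$. Some bookkeeping is also needed to make sure the eigenvalue extremes are taken consistently, that is, the same $\sigma$ is used for $G$ and $L$, which holds by the monotonicity established above.
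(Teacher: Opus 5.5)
Your proposal is correct and follows essentially the same route as the paper: diagonalize in the common eigenbasis of $\Sigma$, compute the per-eigenvalue precisions $(n+\sigma)/(\sigma+nv_t)$ and $(\sigma+v_t+n\alpha_t)/(\sigma+v_t)$, reduce their difference to $n(n-1)\alpha_t v_t/D(\sigma,v_t)$, and use monotonicity in $\sigma$ to place both extremes at $\sigma_{\min}$ and $\sigma_{\max}$. The only cosmetic difference is that you check directly that Geffner's eigenvalue $1+n\alpha_t/(\sigma+v_t)$ is nonincreasing, whereas the paper checks Linhart's eigenvalue and the difference term separately and then adds their maxima (resp.\ minima).
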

Proof is provided in Appendix~\ref{proof::link_cst}.
 Proposition~\ref{prop::comparison_cst} gives us insights into the ratio of both constants, therefore into the choice of step sizes for each algorithm presented in Section~\ref{sec::fine_tune}.
\begin{corollary}[Comparison of the ratios $m_t^G/M_t^G$ and $m_t^L/M_t^L$]\label{cor::comparison_ratios}
Let $M_t^L, m_t^L$ (resp. $M_t^G, m_t^G$) denote the smoothness and log-concavity constants of Linhart's (resp. Geffner's) intermediate densities. Then we have :
\begin{equation*}
    \frac{m_t^G}{M_t^G}\leq \frac{m_t^L}{M_t^L} \quad \forall t\in[0,1].
\end{equation*}
Consequently, the step sizes $h_t^G$ used with Geffner's composite score in annealed Langevin are smaller than their counterparts $h_t^L$ used with Linhart's score for any time $t\in [0,1]$.
\end{corollary}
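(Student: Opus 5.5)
The plan is to reduce the ratio comparison to an elementary inequality using Proposition~\ref{prop::comparison_cst}, and then read off the step-size consequence from the decision rule of Section~\ref{sec::fine_tune}.

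\textbf{Step 1: reduce to a mediant-type inequality.} Write $a_t = n(n-1)\alpha_t v_t \ge 0$. Proposition~\ref{prop::comparison_cst} then gives
\[
M_t^G = M_t^L + \frac{a_t}{D(\sigma_{\max},v_t)}\cdot\frac{D(\sigma_{\max},v_t)}{D(\sigma_{\min},v_t)}, \qquad m_t^G = m_t^L + \frac{a_t}{D(\sigma_{\max},v_t)} .
\]
To avoid clutter, set $b = a_t/D(\sigma_{\max},v_t)$ and $c = a_t/D(\sigma_{\min},v_t)$, both nonnegative. Assume $M_t^L>0$ and $M_t^G>0$. Then
\[
\frac{m_t^L+b}{M_t^L+c}\le \frac{m_t^L}{M_t^L}
\iff b\,M_t^L \le c\, m_t^L ,
\]
after cross-multiplying and cancelling $m_t^L M_t^L$. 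If $a_t=0$, which happens at $t=0$ or $t=1$, equality holds trivially. So the claim reduces to showing $b\,M_t^L \le c\, m_t^L$.

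\textbf{Step 2: make the constants explicit.} In the Gaussian setting of Section~\ref{sec::theo_comparaison}, the target precision is $n\Sigma^{-1}+I_d$, so $m_0 = 1+n/\sigma_{\max}$ and $M_0 = 1+n/\sigma_{\min}$. Substituting into Lemma~\ref{lemma::preservation_cst} gives
\[
m_t^L = \frac{n+\sigma_{\max}}{\sigma_{\max}+nv_t}, \qquad M_t^L=\frac{n+\sigma_{\min}}{\sigma_{\min}+nv_t}.
\]
Inserting these expressions, together with $D(\sigma,v_t)=(\sigma+v_t)(\sigma+nv_t)$, into $b\,M_t^L\le c\,m_t^L$ and cancelling $a_t$ and the common factor $(\sigma_{\min}+nv_t)(\sigma_{\max}+nv_t)$ yields
\[
(n+\sigma_{\min})(\sigma_{\min}+v_t)\le (n+\sigma_{\max})(\sigma_{\max}+v_t).
\]
This holds because $0<\sigma_{\min}\le\sigma_{\max}$ and every factor is positive. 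That completes the ratio inequality for all $t\in[0,1]$.

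\textbf{Step 3: step sizes.} With $\epsilon_{\mathrm{DSM}}=0$, the admissible step size in the decision rule is
\[
\min\Big(\tfrac{\omega^2\gamma^2}{1.65^2 d}(m_t/M_t)^2,\ \tfrac{2}{m_t+M_t}\Big).
\]
The first term is increasing in $m_t/M_t\ge 0$, so by Steps 1--2 it is smaller for Geffner than for Linhart. The second term is also smaller for Geffner, because Proposition~\ref{prop::comparison_cst} gives $m_t^G+M_t^G\ge m_t^L+M_t^L$. Hence the minimum, and therefore the step size $h_t^G$, is no larger than $h_t^L$.

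\textbf{Main obstacle.} The delicate point is the sign bookkeeping. The reduction in Step 1 needs $M_t^L, M_t^G>0$, which is immediate since both are maximal precision eigenvalues. It also needs $m_t^G\ge0$ for the step-size monotonicity in Step 3. If $m_t^G$ could be negative, Geffner's density would not be log-concave and no admissible step size exists under the rule, so the conclusion still holds in the weaker sense. I expect this log-concavity issue, rather than the algebra, to be where care is needed; I would handle it by stating the log-concavity assumption of Proposition~\ref{prop::intermediate_bound} explicitly.
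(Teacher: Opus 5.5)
Your proposal is correct and follows essentially the paper's route: via Proposition~\ref{prop::comparison_cst} the ratio inequality reduces to $M_t^L D(\sigma_{\min},v_t)\le m_t^L D(\sigma_{\max},v_t)$, which is exactly the monotonicity of $(n+\sigma)(\sigma+v_t)$ in $\sigma$ that the paper invokes. Your handling of the $2/(m_t+M_t)$ branch of the step-size rule is a welcome addition that the paper omits, and your sign worry is moot, since $m_t^G\ge m_t^L=\frac{n+\sigma_{\max}}{\sigma_{\max}+nv_t}\ge 1>0$ for all $t\in[0,1]$.
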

Proof is given in Appendix~\ref{proof::comparison_ratio}.
Choosing Linhart's composite score instead of Geffner's counterpart allows one to use larger step sizes at each annealing level without degrading the final sampling quality, measured in Wasserstein distance. The following section provides a comparison of the number of steps for both algorithms.

\subsection{Comparison of the number of steps}
For a given level $t_p\in[0,1]$, the choice of the number of steps $k_{t_p}$ prescribed in Section~\ref{sec::fine_tune} depends on the Wasserstein distance between intermediate densities $\mathcal{W}_2(\pi_{t_p},\pi_{t_{p+1}})$ and the log-concavity constant $m_{t_p}$. When $t_p\to 0$ or $t_p\to 1$, the contribution of $m_{t_p}$ in the fine-tuning objective is negligible, since this term is essentially the same for both algorithms. As a result, the weighting of the Wasserstein distance in the choice of $k_{t_p}$ is the only factor that enables a meaningful comparison between the two approaches.

As intermediate distributions $\pi_{t_p}^L$ or $\pi_{t_p}^G$ are Gaussian for all $t_p\in[0,1]$, the Wasserstein distance has a closed form expression including a $L_2$-mean difference and a Bures distance between covariance matrices \citep{bures_distance}. The following proposition compares the intermediate Wasserstein distances for both algorithms when $t_p\to 0$ and $t_p\to1$.
\begin{restatable}[Comparison of Wasserstein distances between intermediate densities]{prop}{CompWass}\label{prop::comparaison_wass}
Let $\pi_{t_p}^L$ (resp. $\pi_{t_p}^G$) denote the (Gaussian) intermediate density defined by Linhart (resp. by Geffner) at level $t_p$. Then, we have :
\begin{align*}
    \text{As} \ t_p\to 0, \quad \mathcal{W}_2^2(\pi_{t_p}^G,\pi_{t_{p+1}}^G)&\leq \mathcal{W}_2^2(\pi_{t_p}^L,\pi_{t_{p+1}}^L).\\
        \text{As} \ t_p\to 1, \quad \mathcal{W}_2^2(\pi_{t_p}^G,\pi_{t_{p+1}}^G)&\geq \mathcal{W}_2^2(\pi_{t_p}^L,\pi_{t_{p+1}}^L).
\end{align*}
Therefore, the number of Langevin steps  $k_{t_p}^G$ (chosen as in Section~\ref{sec::fine_tune}) used with Geffner's composite score in annealed Langevin is greater than its counterpart $k_{t_p}^L$ used with Linhart's score as $t_p\to1$, and conversely as $t_p\to 0$.
\end{restatable}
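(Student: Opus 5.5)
The plan is to work in the common eigenbasis of $\Sigma$. All covariances $\Sigma_t^L$, $\Sigma_t^G$ and $\Sigma_t$ are functions of $\Sigma$, so they commute and are simultaneously diagonalizable. The Gaussian closed form of $\mathcal{W}_2$ then reduces to a sum over eigenvalues $\sigma$ of $\Sigma$:
\begin{equation*}
\mathcal{W}_2^2(\pi_{t_p},\pi_{t_{p+1}})=\|\mu_{t_p}-\mu_{t_{p+1}}\|^2+\sum_{\sigma}\Big(\sqrt{c(t_p,\sigma)}-\sqrt{c(t_{p+1},\sigma)}\Big)^2 .
\end{equation*}
Here $c(t,\sigma)$ is the corresponding eigenvalue of the covariance, and the mean difference also splits along eigen-directions.

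\textbf{Scalar variances and means.} From Section~\ref{sec::analytical_densities} we get
\begin{equation*}
c^L(t,\sigma)=\frac{\alpha_t\sigma}{\sigma+n}+v_t,\qquad c^G(t,\sigma)=\Big(1-n+\frac{n(\sigma+1)}{\alpha_t\sigma+v_t(\sigma+1)}\Big)^{-1}.
\end{equation*}
With $\bar x=\frac1n\sum_i x_i$, the mean components are $\mu^L=\sqrt{\alpha_t}\,\frac{n}{\sigma+n}\,\bar x$ and $\mu^G=\sqrt{\alpha_t}\,c^G(t,\sigma)\,\frac{n}{\alpha_t\sigma+v_t(\sigma+1)}\,\bar x$.

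\textbf{Asymptotic comparison.} Write $t_{p+1}=t_p+\delta$ and compare the two distances at leading order. I will compare each eigen-component separately, both the squared mean difference and the squared root-variance difference. It is enough to compare the time derivatives $\partial_t\mu$ and $\partial_t\sqrt{c}$ at $t\to0$ and $t\to1$; the squared distances are then $\delta^2$ times the squared derivatives plus higher order terms.

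Near $t=1$, set $\alpha_t\to0$ and $v_t\to1$, and expand in $\alpha$.
\begin{itemize}
\item Linhart: $c^L=1-\alpha\frac{n}{\sigma+n}$.
\item Geffner: the denominator is $1+\alpha\big(-1+\frac{1}{\sigma+1}\big)+\dots$, which gives $c^G\approx 1-\alpha\,\frac{n\sigma}{\sigma+1}$.
\end{itemize}
Comparing slopes, we need $\frac{n\sigma}{\sigma+1}\ge\frac{n}{\sigma+n}$, i.e. $\sigma(\sigma+n)\ge\sigma+1$. This is not automatic for small $\sigma$, so I must recheck the expansion carefully; this is the first danger point. The mean terms scale like $\sqrt{\alpha}$ times a coefficient, namely $\frac{n}{\sigma+n}$ for Linhart versus roughly $n/(\sigma+1)$ for Geffner. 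The Geffner coefficient is larger since $n\ge1$. Because $\partial_t\sqrt{\alpha_t}$ dominates near $t=1$ (it blows up if $\alpha_t\to0$ smoothly), the mean term dominates the Wasserstein distance as $t_p\to1$. This yields the inequality $\mathcal{W}_2^G\ge\mathcal{W}_2^L$ regardless of the variance subtlety.

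Near $t=0$, set $v\to0$ and expand in $v$. Then $c^L\approx\frac{\sigma}{\sigma+n}+v\frac{n}{\sigma+n}$, while $c^G$ has slope $\partial_v c^G=(c_0)^2\,n(\sigma+1)\cdot\frac{1}{\sigma^2}$ with $c_0=\frac{\sigma}{\sigma+n}$. That slope equals $\frac{n(\sigma+1)}{(\sigma+n)^2}$. The mean slopes are compared similarly. I expect Geffner's to be smaller because the $(1-n)$ prior-removal term partially cancels the diffusion. I will verify this by comparing, for example, $\frac{n(\sigma+1)}{(\sigma+n)^2}$ against $\frac{n}{\sigma+n}$: the former is smaller since $\sigma+1\le\sigma+n$.

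\textbf{Conversion to step counts.} The statement about the number of steps follows from the rule of Section~\ref{sec::fine_tune}. The bound on $k_{t_p}$ is increasing in $\mathcal{W}_2(\pi_{t_{p+1}},\pi_{t_p})$, since $\log(1-mh)<0$. In addition, the $m_{t_p}h_{t_p}$ factor coincides at leading order at both endpoints: $m^G_t\to m^L_t$ as $v_t\to0$ or $\alpha_t\to0$ by Proposition~\ref{prop::comparison_cst}.

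The main obstacle is getting the leading-order terms right near $t=1$, where the mean and variance contributions scale differently in $\alpha$. There I will first show that the mean term dominates, then handle the covariance term only as a lower-order correction.
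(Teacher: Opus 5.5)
Your reduction to eigen-directions and your scalar formulas for $c^L$, $c^G$, $\mu^L$, $\mu^G$ are correct. The gap is an algebra slip in the expansion of $c^G$ near $t=1$, and the workaround you build around it does not prove the claim.

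\textbf{The expansion near $t=1$.} Since $\alpha_t\sigma+v_t(\sigma+1)=\sigma+v_t$, one has exactly $c^G(t,\sigma)=\frac{\sigma+v_t}{\sigma+v_t+n\alpha_t}$. With $v=1-\alpha$ this is $\frac{\sigma+1-\alpha}{\sigma+1+(n-1)\alpha}=1-\frac{n\alpha}{\sigma+1}+O(\alpha^2)$, not $1-\frac{n\sigma\alpha}{\sigma+1}$. The linear coefficient of your normalized denominator is $-1+\frac{\sigma}{\sigma+1}$, not $-1+\frac{1}{\sigma+1}$.

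So the condition $\sigma(\sigma+n)\ge\sigma+1$ is an artifact. The correct comparison is $\frac{n}{\sigma+1}\ge\frac{n}{\sigma+n}$, which holds for all $n\ge1$. This is not cosmetic, because your fallback (the mean term dominates \emph{regardless of the variance subtlety}) fails exactly where it is needed:
\begin{itemize}
\item If $\bar x=0$, both means vanish identically and $\mathcal{W}_2^2$ is the Bures term alone. Your expansion would then predict the opposite ordering in every direction with small $\sigma$.
\item For $\bar x\neq0$, the neighbourhood where dominance holds shrinks with $\Vert\bar x\Vert$.
\item Dominance is incompatible with your announced component-by-component comparison whenever some $\bar x_\sigma=0$.
\end{itemize}
With the corrected expansion, both the mean and the covariance contributions favour Geffner near $t=1$ in every eigen-direction, and no dominance argument is needed.

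\textbf{The mean comparison near $t=0$.} You assert it without computing it, and the \emph{partial cancellation} heuristic is misleading. From $\mu^G_\sigma=\sqrt{1-v}\,\frac{n}{\sigma+n-(n-1)v}\,\bar x_\sigma$, the $v$-slope at $v=0$ is $\frac{n}{\sigma+n}\big(\frac{n-1}{\sigma+n}-\frac12\big)\bar x_\sigma$, versus $-\frac12\frac{n}{\sigma+n}\bar x_\sigma$ for Linhart. Geffner's slope has the opposite sign when $n>\sigma+2$: its mean first moves away from the origin, which is why the paper switches from $A_t-B_t$ to $A_t+B_t$. The inequality therefore holds only in absolute value, via $0\le\frac{n-1}{\sigma+n}<1$.

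\textbf{From derivatives to consecutive levels.} Writing the squared distances as $\delta^2$ times the squared derivatives only covers vanishing spacing, and in the variable $t$ the expansion degenerates at $t=1$. To get the statement in the paper's sense (both consecutive levels in a neighbourhood of the endpoint), parametrize as follows:
\begin{itemize}
\item near $0$, use $v$ for both means and variances;
\item near $1$, use $s=\sqrt{\alpha_t}$ for the means and $\alpha_t$ for the variances.
\end{itemize}
The endpoint inequalities between derivative magnitudes are strict for $n\ge2$ (for the means whenever $\bar x_\sigma\neq0$; otherwise both sides vanish), and for $n=1$ the two families coincide. By continuity they persist on a neighbourhood, and integrating from $t_p$ to $t_{p+1}$ gives the per-direction comparison for any two levels there.

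\textbf{Comparison with the paper.} Once repaired, your local linearization is a legitimate and shorter alternative. The paper instead factors each Bures summand as (sum)$\times$(difference) of square roots, and proves monotonicity of explicit functions ($\sigma(\Sigma_t^L)\Omega_t$, $\sigma(A_t\pm B_t)$, \dots) on intervals delimited by polynomial roots. The paper's route yields explicit thresholds; yours yields only an implicit neighbourhood.

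Your conversion to step counts is at the same heuristic level as the paper's, which explicitly imposes a common step size (and a common $m_t$ near $t=1$). Since the Wasserstein terms themselves vanish at the endpoints, coincidence of $m_th_t$ \emph{at leading order} does not by itself fix the ordering of $k^G_{t_p}$ and $k^L_{t_p}$.
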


Proof is given in Appendix~\ref{proof::comparaison_wass}. We quantify computational complexity by the total number of score function evaluations, that is also equal to the total number of Langevin steps $\sum_{p=0}^Tk_{t_p}$. Proposition~\ref{prop::comparaison_wass} gives us insight into the complexity of each algorithm in two different extreme regimes. As $t\to 0$, the annealed Langevin algorithm tends to sample from intermediate distributions close to the target density $\pi_0$: in this case, using Geffner's composite score yields less number of steps for an equivalent sampling quality than Linhart's composite score. Conversely, when $t\to 1$, the annealed Langevin algorithm tends to sample from distributions close to a standard Gaussian: in this case, using Linhart's score seems beneficial, as the number of steps is lower and the step size higher.

A theoretical characterization of the evolution of the number of steps between these two regimes remains an open question. Empirical tests indicate that there are only two regimes between $t=0$ and $t=1$ (see Figure~\ref{fig:evol_nb_steps_gaussian}): there seems to exist a time level $\tilde t$ such that on $[0,\tilde t]$, $k_t^L$ is always higher than $k_t^G$, and on $[\tilde t,1]$ the situation is reverse. This switching time level varies depending on the number of observations $n$, the eigenvalues of the likelihood covariance matrix, and the dimension of the space $d$ in a complex manner. Though, we observe that $\tilde t$ is often close to $0$ regardless of dimension $d\geq3$ : exceptions appear when $d=2$ where the switching point may be later (see Figure~\ref{fig:exception_dim_2_gaussian} in Appendix~\ref{app::additional_figure_gaussian_case}). This unclear evolution may also stem from the contribution of the term $h_tm_t$ in the choice of $k_t$ which may outweigh that of the Wasserstein distance: in extreme regimes, this term is very similar for Geffner's and Linhart's densities, making $k_t$ depend on the intermediate Wasserstein distances only.

Globally, the theoretical results accompanied by the empirical observations in the $d$-dimensional Gaussian setting indicate that achieving a Wasserstein error comparable to Geffner's method, requires Linhart's algorithm to use larger step sizes and fewer Langevin steps for most annealing levels, thus a lower total number of steps.  Figure~\ref{fig:wass_validation_gaussian} (Appendix~\ref{app::empirical_validation_wass}) provides an empirical validation of the Wasserstein error given by the prescribed hyperparameters for both compositional score choices.
\begin{figure}[t]
    \centering
\includegraphics[width=\linewidth]{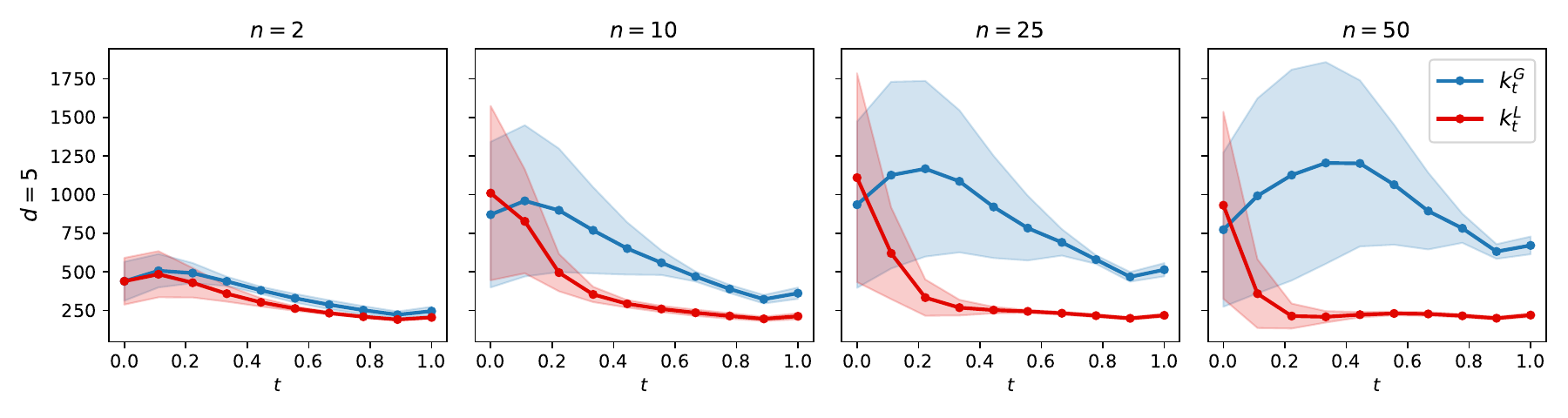}
    \caption{\small Evolution of the number of Langevin steps in the $5$-dimensional Gaussian setting for different numbers $n$ of conditional observations. Blue (resp. red) curves represent the mean number of steps $k_t^G$ (resp. $k_t^L$) at each annealing level when using Geffner's (resp. Linhart's) compositional score. We choose $T=10$ annealing levels uniformly spaced over $[0,1]$. The mean (and std) is computed over $5$ different Gaussian models, each with a different likelihood matrix. Empirical results for $d=2,10$ are provided in Appendix~\ref{app::additional_figure_gaussian_case}.}
    \label{fig:evol_nb_steps_gaussian}
\end{figure}
\section{Hyperparameter selection in practice}\label{sec::empirical_comparaison}
In this section, we illustrate empirically that the hyperparameter guidelines developed in Section~\ref{sec::fine_tune} are useful beyond the purely Gaussian case. The main difficulty to generalize these guidelines is to compute the smoothness ($M_t^G, M_t^L$) and log-concavity constants ($m_t^G, m_t^L$) that drive the choice of the step sizes $h_t^G$ and $h_t^L$, as well as the Wasserstein distances between consecutive densities that affect the choice of the number of steps per annealing level $k_t^G$ and $k_t^L$. We therefore consider a second-order approximation of each individual posterior and the prior, which allows us to get an analytical Gaussian approximation of the target multi-observation posterior by composing individual posteriors and prior as per $p(\theta\mid x_{1:n})\propto\lambda(\theta)^{1-n}\prod_{i=1}^n p(\theta\mid x_i)$. Note that this approximation becomes increasingly reasonable according to the Bernstein--von Mises theorem, as the number of observations $n$ increases \citep{bernstein}. Intermediate densities $\pi_t^G$ or $\pi_t^L$ also admit analytical expressions, similar to those in Section~\ref{sec::analytical_densities}. This makes their smoothness and log-concavity constants, together with Wasserstein distances between intermediate densities, tractable and enables the selection of ($h_t^G, k_t^G$) or ($h_t^L, k_t^L$) according to our decision rule. Importantly, we introduce no additional Gaussianity assumptions here and instead rely on those used by \citet{linhart2024diffusion} when estimating the covariance matrices in Equation~\ref{eq::linhart_composite_score} with their \texttt{GAUSS} algorithm. We also note that we cannot provide a strong theoretical upper bound on the Wasserstein error under such approximations, but we expect them not to compromise the theoretical Wasserstein control especially for large $n$.

\paragraph{General setting}\label{sec::compositional_error}For all the following experiments, we use $T$ annealing levels uniformly spaced over the interval $[0,1]$, so that $t_p=\frac{p}{T}$ for $p\in \{0,\ldots,T\}$. We train a time-dependent amortized score estimate $s_\phi$ with score matching on samples of the joint distribution $(\theta,x)\sim\lambda(\theta)p(x\mid \theta)$ (see Appendix~\ref{proof::technical_details} for details). This score is an estimate of the \emph{exact} score of individual posteriors along the VP diffusion scheme, i.e. $s_\phi(\theta,x_i,t) \approx \nabla_\theta \log p_t(\theta\mid x_i)$ for all $t\in[0,1]$ and all $x_i$.  Prior scores can often be derived analytically (see Appendix~\ref{proof::analytical_scores}). As prescribed in \citet{linhart2024diffusion} for their \texttt{GAUSS} algorithm, we use the score $s_\phi$ to sample individual posteriors using reverse diffusion scheme and compute empirical covariances, involved in Equation~\ref{eq::linhart_composite_score}. Contrary to the previous section, we need to take into account the score-matching error $\epsilon_\text{DSM}$ of each compositional score in the choice of step sizes: as we compose individual scores according to Equation~\ref{eq::geffner_composite_score} or \ref{eq::linhart_composite_score}, we only have access to the score-matching errors of individual
score estimates. However, it is possible to obtain a bound of this compositional score error for both algorithms: \citet{compositional_error} provide a bound of the mean squared error of Linhart's compositional score $s_t^L$ based on individual score errors. Adopting the authors' notations, we derive a bound for Geffner's score error in Lemma~\ref{lemma::composit_error}, where $\tilde s_t^G$ is the estimate of $s_t^G$ (Equation~\ref{eq::geffner_composite_score}) with inexact individual scores. Proof is given in Appendix~\ref{proof::compositional_error}.
\begin{lemma} \label{lemma::composit_error}Let $s_\phi$ (resp. $s_\lambda$) be an amortized neural-based approximation of the individual posteriors $p(\theta\mid x_i)$ (resp. the prior $\lambda(\theta)$). Assume that $\mathrm{E}_\theta\left[\Vert \nabla_{\theta}\log \lambda_t(\theta)-s_\lambda(\theta,t)\Vert^2\right]\leq \epsilon_{\text{DSM},\lambda}^2$ and $\mathrm{E}_\theta\left[\Vert \nabla_{\theta}\log p_t(\theta\mid x_i)-s_\phi(\theta,x_i,t)\Vert^2\right]\leq \epsilon_{\text{DSM}}^2$ for all $i=1,\ldots,n$. Then,
\begin{equation*}
     \mathrm{E}_\theta\left[\Vert \nabla_{\theta}\log \pi_t^G(\theta\mid x_{1;n})-\tilde s_t^G(\theta\mid x_{1:n})\Vert^2\right]\leq \Big((n-1)\epsilon_{\text{DSM},\lambda}+n\epsilon_{\text{DSM}}\Big)^2.
\end{equation*}
\end{lemma}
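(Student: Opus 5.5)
The bound follows from writing the error of the composite score as a linear combination of the individual errors and then applying Minkowski's inequality in $L^2$. By construction, the exact score of Geffner's bridging density is the composite score in Equation~\ref{eq::geffner_composite_score}:
\[
\nabla_\theta\log\pi_t^G(\theta\mid x_{1:n})=(1-n)\nabla_\theta\log\lambda_t(\theta)+\textstyle\sum_{i=1}^n\nabla_\theta\log p_t(\theta\mid x_i).
\]
The estimate is obtained by substituting the learned scores:
\[
\tilde s_t^G(\theta\mid x_{1:n})=(1-n)s_\lambda(\theta,t)+\textstyle\sum_{i=1}^n s_\phi(\theta,x_i,t).
\]
Subtracting the two expressions gives a pointwise identity:
\[
\nabla_\theta\log\pi_t^G-\tilde s_t^G=(1-n)\,e_\lambda(\theta)+\textstyle\sum_{i=1}^n e_i(\theta),
\]
where $e_\lambda=\nabla_\theta\log\lambda_t-s_\lambda$ and $e_i=\nabla_\theta\log p_t(\cdot\mid x_i)-s_\phi(\cdot,x_i,t)$.

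Next we work in the space $L^2$ of vector-valued functions of $\theta$, with norm $\|f\|_{L^2}=\big(\mathrm{E}_\theta\|f(\theta)\|^2\big)^{1/2}$. The expectation is taken with respect to the same distribution of $\theta$ in the hypotheses and in the conclusion, as the statement implicitly assumes. Minkowski's inequality, i.e. the triangle inequality for this norm, gives
\[
\big\|\nabla_\theta\log\pi_t^G-\tilde s_t^G\big\|_{L^2}\le|1-n|\,\|e_\lambda\|_{L^2}+\textstyle\sum_{i=1}^n\|e_i\|_{L^2}.
\]
Since $n\ge1$, we have $|1-n|=n-1$. The hypotheses give $\|e_\lambda\|_{L^2}\le\epsilon_{\text{DSM},\lambda}$ and $\|e_i\|_{L^2}\le\epsilon_{\text{DSM}}$ for every $i$, so the right-hand side is at most $(n-1)\epsilon_{\text{DSM},\lambda}+n\epsilon_{\text{DSM}}$. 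Both sides are nonnegative, so squaring preserves the inequality and yields the claimed bound.

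The argument is routine. The one place needing care is the triangle inequality: it must be applied to the $L^2$ norms, not to the squared norms. Expanding the squared norm of the sum directly would produce cross terms $\mathrm{E}\langle e_i,e_j\rangle$, and these must be controlled by Cauchy--Schwarz, which amounts to the same Minkowski step. Doing it this way gives exactly $(\sum\text{norms})^2$, with no extra factor such as $2n$ that a cruder $(a+b)^2\le2a^2+2b^2$ bound would introduce.
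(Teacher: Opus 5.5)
Your proof is correct and is essentially the paper's argument. The paper expands the squared norm of $(1-n)e_\lambda+\sum_i e_i$, bounds every cross term by Cauchy--Schwarz (first pointwise, then in expectation), and recollects the square $\big((n-1)\epsilon_{\text{DSM},\lambda}+\sum_i\epsilon_{\text{DSM},i}\big)^2$; this is Minkowski's inequality written out by hand, as you observe yourself, so your triangle-inequality-in-$L^2$ version is a cleaner packaging of the identical bound.
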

\paragraph{Numerical applications}\label{sec::numerical_applcations}
We consider the following four inference tasks, where $d$ denotes the dimension of the parameter space:

\textbf{GMM prior} ($d=2$): The prior is a mixture of two Gaussians with tunable means, and the likelihood is Gaussian. The resulting posterior is a mixture of Gaussians, whose modes become more separated as the prior means move far apart. Note that as $n$ grows, the posterior becomes approximately Gaussian and unimodal. This task allows us to test the robustness of our decision rule for multimodal target posteriors, where non-Gaussianity keeps mastered (see Figure~\ref{fig:gmm_prior_additional_results} in Appendix~\ref{app::gmm_prior_additional_figures}).

\textbf{GMM likelihood} ($d=10$): The prior is a standard Gaussian, while the likelihood is a mixture of two Gaussians, yielding a Gaussian mixture posterior. This task is inspired by \citet{geffner} and deviates slightly more from Gaussianity than the first one.

\textbf{SIR} ($d=2$): This epidemiological model tracks the evolution of susceptible (S), infected
(I) and recovered (R) populations. The prior is Log-Normal, the simulator is based on a set of differential equations that outputs samples from a Binomial distribution. The resulting posterior is unimodal. 

\textbf{Lotka-Volterra} ($d=4$): This predator-prey model describes the evolution of the populations of two interacting species. The prior is Log-Normal, the simulator is based on a set of differential equations that outputs samples from a Log-Normal distribution. The resulting posterior is unimodal.

For the first two tasks, analytical individual posterior scores can be derived (see Appendix~\ref{proof::analytical_scores}). The last two tasks, inspired from the SBI benchmark \citep{lueckmann2021benchmarking}, allow us to test the robustness of our decision rule when the individual posterior scores need to be learned and compositional error estimated for different parameter dimensions $d$.
 The hyperparameters chosen according to our decision rule allow us to run annealed Langevin \textit{without} intermediate clipping steps for all tasks. Figure~\ref{fig:evol_wass_non_gaussian_case} (in Appendix~\ref{proof::technical_details}) further shows that they yield a \textit{similar} final Wasserstein error, chosen to be \textit{small}, for both choices of intermediate densities. Overall, Linhart’s densities use larger step sizes and require fewer steps than Geffner’s counterparts across most inference tasks (see Figure~\ref{fig:results_non_gaussian}): Linhart's composite score thus yields a more efficient sampler in terms of score function evaluations.
 \begin{figure}[t]
    \centering
\includegraphics[width=\linewidth]{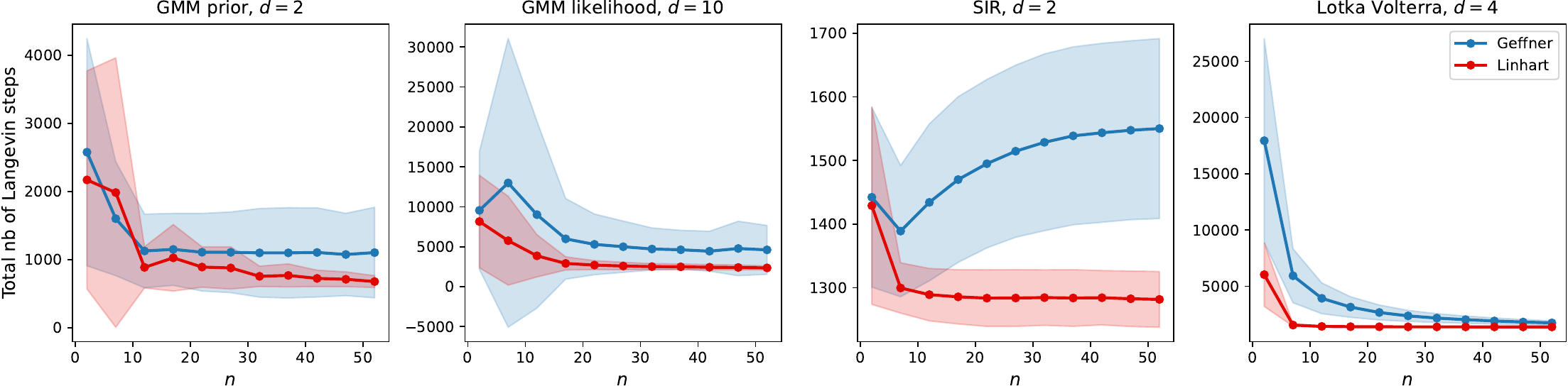}
    \caption{\small Evolution of the mean total number of Langevin steps $\sum_{p=0}^Tk_{t_p}$ for different numbers of conditional observations $n$. These choices achieve a prescribed low final Wasserstein error for both compositional score formulations (see Appendix~\ref{proof::technical_details}). Blue curves stand for Geffner's complexity while red curves represent that of Linhart. Mean and std are computed over $5$ different seeds.}
    \label{fig:results_non_gaussian}
\end{figure}
\section{Conclusion}
We propose the first theoretically grounded guidelines to fine-tune the hyperparameters of annealed Langevin algorithm in compositional SBI, supported by Wasserstein error bounds under smoothness and log-concavity assumptions. These guidelines remain valid under weaker smoothness assumptions on individual posteriors, particularly for large $n$, thereby extending their applicability to broader settings. We highlight the central role of intermediate densities that underlie the compositional scores used in annealed Langevin, in the hyperparameter fine-tuning. We provide explicit hyperparameter prescriptions in the Gaussian case and show that they can be effectively transferred to non-Gaussian settings, thus avoiding the need for manual tuning while achieving a prescribed low Wasserstein error. In addition, we show that Linhart's compositional score yields a more efficient sampler than Geffner's one in both settings in terms of score function evaluations.

Our analysis could serve as a theoretically grounded starting point for practitioners using compositional score-based approaches for designing novel intermediate densities that improve annealed Langevin sampling performance.
Although our theoretical guarantees provide useful guidance for hyperparameter tuning, they still rely on smoothness and (approximate) log-concavity assumptions that may not hold in highly multimodal regimes; extending these guarantees to more general distributions, e.g. those satisfying a log-Sobolev inequality only, is a key direction for future work.
\newpage

\small
\bibliographystyle{apalike}
\bibliography{biblio}

\newpage

\normalsize

\appendix
\tableofcontents

\section{Theoretical results}
\subsection{Proof of Proposition~\ref{prop::intermediate_bound}}\label{proof::intermediate_bound}
Suppose that we want to sample from a distribution $\pi_{t_p}$ that is $m_{t_p}$-log concave and $M_{t_p}$-smooth, for one given $p\in \{0,\ldots,T-1\}$. $\pi_{t_p}$ stands for one bridging density in the annealed Langevin process (defined by Geffner or Linhart). For readability concerns, we replace the index $t_p$ simply par $p$ for constants $m, M, B, h, k$. Suppose that the constant step size is such that $h_p<\frac{2}{m_p+M_p}$, then according to \citet{dalalyan} (Theorem 4), the following holds:
\begin{align*}
    \mathcal{W}_2(\mathcal{L}(\theta_{t_p}^{(k_p)}),\pi_{t_p})&\leq (1-m_ph_p)^{k_p}  \mathcal{W}_2(\mathcal{L}(\theta_{t_p}^{(0)}),\pi_{t_p})+B_p\\
    &\leq (1-m_ph_p)^{k_p}  (\mathcal{W}_2(\mathcal{L}(\theta_{t_p}^{(0)}),\pi_{t_{p+1}})+\mathcal{W}_2(\pi_{t_{p+1}},\pi_{t_p}))+B_p \\&\quad \quad \text{by triangular inequality for Wasserstein distance}\\
    &= (1-m_ph_p)^{k_p}  \mathcal{W}_2(\mathcal{L}(\theta_{t_p}^{(0)}),\pi_{t_{p+1}})+(1-m_ph_p)^{k_p}\mathcal{W}_2(\pi_{t_{p+1}},\pi_{t_p})+B_p\\
    &\boxed{= (1-m_ph_p)^{k_p}  \mathcal{W}_2(\mathcal{L}(\theta_{t_{p+1}}^{(k_{p+1})}),\pi_{t_{p+1}})+(1-m_ph_p)^{k_p}\mathcal{W}_2(\pi_{t_{p+1}},\pi_{t_p})+B_p}\\
    &\quad \text{since} \ \theta^{(k_{{p+1}})}_{t_{p+1}}=\theta^{(0)}_{t_p}
\end{align*} where $B_p=1.65\frac{M_p}{m_p}(h_pd)^{0.5}+\frac{\delta \sqrt{d}}{m_p}+\frac{\sigma^2(h_pd)^{0.5}}{1.65M_p+\sigma\sqrt{m_p}}$, $\delta$ bounds the bias of the score estimate, $\sigma$ bounds its variance and $d$ is the dimension of the space.

Let's now simplify the bias terms $B_p$ when the score estimate is obtained by denoising score matching. In this case, we obtain a deterministic \textit{but} biased version of the exact score. 
Recall that the score estimate in \citet{dalalyan} $s_k=\nabla_{\theta}\log \pi(\theta_k)+\zeta_k$
 has a bounded bias  at any iterate $\theta_k$ of the LMC algorithm as per 
\begin{equation*}
    \mathrm{E}[\Vert\mathrm{E}(\zeta_k\mid\theta_k)\Vert^2_2]\leq \delta^2d
\end{equation*}
If $\epsilon_\text{DSM}^2$ controls the mean square error (MSE) between the true score and its estimate in average over the LMC iterates, then we have :
\begin{align*}
    \mathrm{E}_{\theta_k}[\Vert\mathrm{E}(\zeta_k\mid\theta_k)\Vert^2_2]&=\mathrm{E}_{\theta_k}[\Vert\mathrm{E}(s_k-\nabla_{\theta}\log \pi(\theta_k)\mid\theta_k)\Vert^2_2]\\
    &\leq \mathrm{E}_{\theta_k}[\mathrm{E}(\Vert s_k-\nabla_{\theta}\log \pi(\theta_k)\Vert^2_2\mid\theta_k)]\quad \text{by Jensen inequality}\\
    &=\mathrm{E}_{\theta_k,s_k}[\Vert s_k-\nabla_{\theta}\log \pi(\theta_k)\Vert^2_2]\\
    &=\mathrm{E}_{\theta_k}[\Vert s(\theta_k)-\nabla_{\theta}\log \pi(\theta_k)\Vert^2_2] \\
    &\quad \text{if} \ s_k=s(\theta_k) \ \text{i.e. for each point the score estimate is deterministic}\\
    &\leq \epsilon_\text{DSM}^2
\end{align*}
This is valid for any bridging distribution $\pi_t$. If $\epsilon_\text{DSM}^2$ upperbounds the maximum MSE over time (especially over levels $t_0,\ldots,t_T$), then we can set $\delta=\frac{\epsilon_\text{DSM}}{\sqrt{d}}$. Moreover, training a score estimate with denoising score-matching provides a deterministic score estimate: we can thus assume a null variance of the score estimate ($\sigma=0$) in the theoretical bound. In \citet{dalalyan}, the variance is bounded as follows:
\begin{align*}
    \mathrm{E}[\Vert \zeta_k - \mathrm{E}(\zeta_k\mid\theta_k)\Vert^2_2] \leq \frac{\sigma^2}{d}
\end{align*} In our setting, we obtain
\begin{align*}
    \mathrm{E}_{\theta_k}[\Vert \zeta_k - \mathrm{E}(\zeta_k\mid\theta_k)\Vert^2_2]&=\mathrm{E}_{\theta_k}[\Vert s_k-\nabla_{\theta}\log \pi(\theta_k) - \mathrm{E}(s_k-\nabla_{\theta}\log \pi(\theta_k)\mid\theta_k)\Vert^2_2]\\
    &=\mathrm{E}_{\theta_k}[\Vert s(\theta_k)-\nabla_{\theta}\log \pi(\theta_k) - \mathrm{E}(s(\theta_k)-\nabla_{\theta}\log \pi(\theta_k)\mid\theta_k)\Vert^2_2] \\
    &\quad \text{when we assume that} \ s_k=s(\theta_k) \ \text{is deterministic}\\
    &=\mathrm{E}_{\theta_k}[\Vert s(\theta_k)-\nabla_{\theta}\log \pi(\theta_k) - s(\theta_k)-\nabla_{\theta}\log \pi(\theta_k)\Vert^2_2]\\
    &=0
\end{align*}
So the bias terms simplify to $B_p=1.65\frac{M_p}{m_p}(h_pd)^{0.5}+\frac{\epsilon_\text{DSM}}{m_p}$ in the case where the score estimate is deterministic but biased, as in denoising score matching. Note that to estimate the bias of the score and compute $\epsilon_\text{DSM}$ in practice, we need to bound the score matching error in average over the annealed Langevin iterates at each annealing level.
\newpage
\subsection{Proof of Proposition~\ref{prop::global bound}}\label{proof::global_bound}
To get the global error bound in Wasserstein distance of an annealed Langevin algorithm with $T$ annealing steps, it is sufficient to compose the previous intermediate bound $T$ times, assuming that each intermediate distribution $\pi_t$ is $M_t$-smooth and $m_t$-concave for all $t=t_0,\ldots,t_{T-1}$. Here again, for readability concerns, we replace the time level $t_p$ simply by $p$ for the constants $m,M,k,h,B$. As we start from Gaussian samples $\theta_{t_T}^{(0)}\sim \mathcal{N}(0,I_d)$, we use the convention that $k_T=0$, therefore, $\theta_{t_{T-1}}^{(0)}=\theta_{t_T}^{(k_T)}=\theta_{t_T}^{(0)}$. Suppose that $h_p<\frac{2}{m_p+M_p}$ for all $p=0,\ldots,T-1$, then we get:
\begin{align*}
    \mathcal{W}&_2(\mathcal{L}(\theta_0^{(k_0)}),\pi_{t_0})\\
    &\leq (1-m_0h_0)^{k_0}  \mathcal{W}_2(\mathcal{L}(\theta_{t_1}^{(k_1)}),\pi_{t_1})+(1-m_0h_0)^{k_0}\mathcal{W}_2(\pi_{t_1},\pi_{t_0})+B_0\\
    &\leq (1-m_0h_0)^{k_0}  \left((1-m_1h_1)^{k_1}  \mathcal{W}_2(\mathcal{L}(\theta_{t_2}^{(k_2)}),\pi_{t_2})+(1-m_1h_1)^{k_1}\mathcal{W}_2(\pi_{t_2},\pi_{t_1})+B_1\right)\\
    &\quad +(1-m_0h_0)^{k_0}\mathcal{W}_2(\pi_{t_1},\pi_{t_0})+B_0\\
    &=(1-m_0h_0)^{k_0}(1-m_1h_1)^{k_1}  \mathcal{W}_2(\mathcal{L}(\theta_{t_2}^{(k_2)}),\pi_{t_2})+(1-m_0h_0)^{k_0}(1-m_1h_1)^{k_1} \mathcal{W}_2(\pi_{t_2},\pi_{t_1})\\
    &\quad +(1-m_0h_0)^{k_0} B_1 +(1-m_0h_0)^{k_0}\mathcal{W}_2(\pi_{t_1},\pi_{t_0})+B_0\\
    &=\ldots \quad \text{after several annealing steps}\\
    &\leq \prod_{s=0}^{T-2} (1-m_sh_s)^{k_s} \mathcal{W}_2(\mathcal{L}(\theta_{t_{T-1}}^{(k_{T-1})}),\pi_{t_{T-1}})+\sum_{p=0}^{T-2}\left(\prod_{s=0}^p (1-m_sh_s)^{k_s}\mathcal{W}_2(\pi_{t_{p+1}},\pi_{t_p})+\underbrace{\prod_{s=0}^{p-1} (1-m_sh_s)^{k_s}}_{=1 \ \text{if} \ p=0}B_{p}\right)\\
    &\leq \prod_{s=0}^{T-2} (1-m_sh_s)^{k_s} \left((1-m_{T-1}h_{T-1})^{k_{T-1}} \underset{=0}{\underbrace{\mathcal{W}_2(\mathcal{L}(\theta_{t_{T}}^{(k_T)}),\pi_{t_T})}} +(1-m_{T-1}h_{T-1})^{k_{T-1}}\mathcal{W}_2(\pi_{T-1},\pi_T)+B_{T-1}\right)\\
    &\quad +\sum_{p=0}^{T-2}\left(\prod_{s=0}^p (1-m_sh_s)^{k_s}\mathcal{W}_2(\pi_{t_{p+1}},\pi_{t_p})+\prod_{s=0}^{p-1} (1-m_sh_s)^{k_s}B_p\right)\\
    &\quad \text{using the bound of Proposition 1 and since $\theta_{t_T}^{(k_T)}\sim \pi_{t_T}$}\\
    &=\boxed{\sum_{p=0}^{T-1}\left(\prod_{s=0}^p (1-m_sh_s)^{k_s}\right)\mathcal{W}_2(\pi_{t_{p+1}},\pi_{t_p})+ \left(\prod_{s=0}^{p-1} (1-m_sh_s)^{k_s}\right)B_p}
\end{align*}
where $B_p=1.65\frac{M_p}{m_p}(h_pd)^{0.5}+\frac{\delta \sqrt{d}}{m_p}+\frac{\sigma^2(h_pd)^{0.5}}{1.65M_p+\sigma\sqrt{m_p}}$ for all $p=0,\ldots,T-1$. If we consider taking the same step size and number of Langevin steps per Langevin process, i.e. $h_p=h$ and $k_p=k$ for all $p$ such that $h<\min_p\frac{2}{m_p+M_p}$ and we set $m=\min_pm_p$, then we get a simplified bound :
\begin{equation*}
    \mathcal{W}_2(\mathcal{L}(\theta_0^{(k)}),\pi_{t_0})\leq\sum_{p=0}^{T-1}(1-mh)^{k(p+1)}\mathcal{W}_2(\pi_{t_{p+1}},\pi_{t_p})+(1-mh)^{kp}B_p
\end{equation*}

\subsection{Bound of intermediate Wasserstein distances}\label{proof::bound_wass_fisher}
\begin{lemma}\label{lemma::fisher_div}
    Let $\Vert x \Vert_{A^{-1}}:=\sqrt{x^TA^{-1}x}$ be the Mahalanobis norm induced by a SPD matrix $A$. Then,
    \begin{align*}
        \mathcal{W}_2(\pi_{t+1}^G,\pi_t^G)&\leq \frac{1}{m_t^G}\Big((n-1)\sqrt{\mathrm{E}_{\theta}\big \Vert \nabla_{\theta}\log \lambda_t(\theta)-\nabla_{\theta}\log \lambda_{t+1}(\theta)\Vert^2}\\
        &\quad +\sum_{i=1}^n \sqrt{\mathrm{E}_{\theta}\Vert\nabla_{\theta}\log p_t(\theta\mid x_i)-\nabla_{\theta}\log p_{t+1}(\theta\mid x_i)\Vert^2}\Big)\\
        \mathcal{W}_2(\pi_{t+1}^L,\pi_t^L)&\leq \frac{1}{m_t^L}\Big((n-1)\sqrt{\mathrm{E}_{\theta}\big \Vert \nabla_{\theta}\log \lambda_t(\theta)-B_{t,\lambda}\nabla_{\theta}\log \lambda_{t+1}(\theta)\Vert^2_{\Psi_{t,\lambda}^{-1}}}\\
        &\quad +\sum_{i=1}^n \sqrt{\mathrm{E}_{\theta}\Vert\nabla_{\theta}\log p_t(\theta\mid x_i)-B_{t,i}\nabla_{\theta}\log p_{t+1}(\theta\mid x_i)\Vert^2_{\Psi_{t,i}^{-1}}}\Big)
    \end{align*} where $\Psi_{t,k}=\Sigma_{t,k}\Lambda_t\Lambda_t^T\Sigma_{t,k}^T$ and $B_{t,k}=\Sigma_{t,k}\Lambda_{t}\Lambda_{{t+1}}^{-1}\Sigma_{{t+1,k}}^{-1}$ with $k\in\{\lambda,0,\ldots,n\}$.
\end{lemma}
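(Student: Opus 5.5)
The plan is to bound $\mathcal{W}_2$ between two strongly log-concave densities by a score discrepancy. The tool is the transport--Fisher inequality: if $\pi_t$ is $m_t$-log-concave, then for any $\nu$,
\[
\mathcal{W}_2(\nu,\pi_t)\leq \frac{1}{m_t}\sqrt{\mathrm{E}_{\nu}\Vert \nabla\log\nu-\nabla\log\pi_t\Vert^2},
\]
i.e.\ $\mathcal{W}_2\leq m_t^{-1}\sqrt{\mathrm{FI}(\nu\Vert\pi_t)}$. I would take it as a known consequence of the log-Sobolev inequality (constant $m_t$) combined with Talagrand's $T_2$ inequality. Concretely, $\mathcal{W}_2\le\sqrt{2\,\mathrm{KL}/m_t}$ and $\mathrm{KL}\le \mathrm{FI}/(2m_t)$ give exactly $m_t^{-1}\sqrt{\mathrm{FI}}$. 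A synchronous-coupling argument for the two Langevin diffusions would give the same result. Applying this with $\nu=\pi_{t+1}$ and the reference $\pi_t$ explains the prefactors $1/m_t^G$ and $1/m_t^L$. It also fixes the measure in the expectations: $\theta\sim\pi_{t+1}$.

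\textbf{Geffner case.} By Equation~\ref{eq::geffner_composite_score}, $\nabla\log\pi^G_t-\nabla\log\pi^G_{t+1}$ equals
\[
(1-n)\bigl(\nabla\log\lambda_t-\nabla\log\lambda_{t+1}\bigr)+\sum_{i}\bigl(\nabla\log p_t(\cdot\mid x_i)-\nabla\log p_{t+1}(\cdot\mid x_i)\bigr).
\]
Taking the $L^2(\pi_{t+1})$ norm and applying Minkowski's inequality splits it into $(n-1)$ times the prior term plus the sum of the $n$ posterior terms. This is exactly the claimed bound.

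\textbf{Linhart case.} Here I write both scores through Equation~\ref{eq::linhart_composite_score}:
\[
\nabla\log\pi^L_t=\Lambda_t^{-1}\Bigl(\sum_i\Sigma_{t,i}^{-1}\nabla\log p_t(\cdot\mid x_i)+(1-n)\Sigma_{t,\lambda}^{-1}\nabla\log\lambda_t\Bigr),
\]
and similarly at $t+1$. The key manipulation is to factor each summand by the level-$t$ preconditioner $\Lambda_t^{-1}\Sigma_{t,k}^{-1}$. The level-$(t+1)$ term becomes $\Lambda_{t+1}^{-1}\Sigma_{t+1,k}^{-1}\nabla\log p_{t+1}=\Lambda_t^{-1}\Sigma_{t,k}^{-1}\bigl(\Sigma_{t,k}\Lambda_t\Lambda_{t+1}^{-1}\Sigma_{t+1,k}^{-1}\bigr)\nabla\log p_{t+1}$. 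The middle matrix is exactly $B_{t,k}$, so each difference reads $\Lambda_t^{-1}\Sigma_{t,k}^{-1}(g_t-B_{t,k}g_{t+1})$. Its Euclidean norm squared equals $(g_t-B_{t,k}g_{t+1})^T(\Lambda_t^{-1}\Sigma_{t,k}^{-1})^T\Lambda_t^{-1}\Sigma_{t,k}^{-1}(\cdot)$. This is a Mahalanobis norm with matrix $(\Sigma_{t,k}\Lambda_t\Lambda_t^T\Sigma_{t,k}^T)^{-1}=\Psi_{t,k}^{-1}$, up to the ordering and transposition conventions, which I would check against the definition of $\Psi_{t,k}$ (symmetry of $\Sigma_{t,k}$ and $\Lambda_t$ makes this harmless). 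Minkowski's inequality then gives the second bound, with $|1-n|=n-1$ on the prior term.

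The main obstacle is justifying the transport--Fisher inequality in the form with constant exactly $1/m_t$. This requires $\pi_{t+1}$ to have enough regularity for the Fisher information to be finite, which follows from the smoothness in Lemma~\ref{lemma::preservation_cst_nongaussian}. It also requires that $\pi_t$ genuinely be $m_t$-log-concave, so the log-concavity hypothesis of that lemma must be assumed at level $t$. A secondary point is bookkeeping: the Mahalanobis norm convention and the placement of transposes must be kept consistent in the Linhart computation.
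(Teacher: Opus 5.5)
Your proposal is correct and follows the paper's proof essentially step for step. The paper also uses log-Sobolev plus Talagrand for $\pi_t$ with constant $m_t$ to get $\mathcal{W}_2\le m_t^{-1}\sqrt{\mathrm{E}_{\pi_{t+1}}\Vert\nabla\log\pi_{t+1}-\nabla\log\pi_t\Vert^2}$. It then applies a triangle inequality in $L^2(\pi_{t+1})$ for Geffner, and for Linhart factors out $\Lambda_t^{-1}\Sigma_{t,k}^{-1}$ to produce $B_{t,k}$ and $\Psi_{t,k}^{-1}$. The only difference is cosmetic: you invoke Minkowski directly, whereas the paper expands the square and applies Cauchy--Schwarz twice. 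Also, the identity $(\Lambda_t^{-1}\Sigma_{t,k}^{-1})^T\Lambda_t^{-1}\Sigma_{t,k}^{-1}=\Psi_{t,k}^{-1}$ holds exactly, so no symmetry assumption is needed there.
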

\textit{Proof}: We replace the classical level index $t_p$ by $t$ for readability concerns, with the abuse of notation that $\pi_{t+1}$ denotes the consecutive density to $\pi_t$, usually called $\pi_{t_{p+1}}$. In Proposition~\ref{prop::global bound}, we require intermediate densities $\pi_t$ to be $m_t$-log-concave densities for any $t=t_0,\ldots,t_T$. They thus satisfy a log-Sobolev inequality and the Talagrand inequality as per:
\begin{align*}
    KL(\pi_{t+1}\Vert \pi_{t})&\leq \frac{1}{2m_t}\mathrm{E}_{\pi_{t+1}}[\Vert \nabla_\theta \log \pi_{t+1}(\theta)-\nabla_\theta\log \pi_t(\theta)\Vert^2]\\
    \mathcal{W}_2^2(\pi_{t+1},\pi_t)&\leq \frac{2}{m_t}KL(\pi_{t+1}\Vert \pi_t),
\end{align*} yielding therefore
\begin{equation*}
    \mathcal{W}_2^2(\pi_{t+1},\pi_t)\leq \frac{1}{m_t^2}\underbrace{\mathrm{E}_{\pi_{t+1}}[\Vert \nabla_\theta \log \pi_{t+1}(\theta)-\nabla_\theta\log \pi_t(\theta)\Vert^2]}_{\text{Fisher divergence}}.
\end{equation*} Let's now bound the Fisher divergence between two consecutive intermediate densities by the Fisher divergence between individual posteriors at two consecutive annealing levels. We denote by $\Vert\cdot\Vert_{A^{-1}}$ the Mahalanobis norm induced by a SPD matrix $A$, defined as $\Vert x\Vert_{A^{-1}}=\sqrt{x^TA^{-1}x}$.
We start with the densities defined by \citet{geffner}:
\begin{align*}
    \mathrm{E}_{\theta}&\left[\Vert \nabla_{\theta}\log \pi_t^G(\theta\mid x_{1:n})-\nabla_\theta \log \pi^G_{t+1}(\theta\mid x_{1:n})\Vert^2\right]\\
    &=\mathrm{E}_{\theta}\left[\Vert(1-n)\left(\nabla_{\theta}\log \lambda_t(\theta)-\nabla_{\theta}\log \lambda_{t+1}(\theta)\right)+\sum_{i=1}^n\nabla_{\theta}\log p_t(\theta\mid x_i)-\nabla_{\theta}\log p_{t+1}(\theta\mid x_i)\Vert^2\right]\\
    &=\mathrm{E}_{\theta}\Vert(1-n)(\nabla_{\theta}\log \lambda_t(\theta)-\nabla_{\theta}\log \lambda_{t+1}(\theta))\Vert^2+\sum_{i=1}^n \mathrm{E}_{\theta}\Vert\nabla_{\theta}\log p_t(\theta\mid x_i)-\nabla_{\theta}\log p_{t+1}(\theta\mid x_i)\Vert^2\\
    &\ \ \ +2\mathrm{E}_{\theta}\big \langle (1-n)(\nabla_{\theta}\log \lambda_t(\theta)-\nabla_{\theta}\log \lambda_{t+1}(\theta)),\sum_{i=1}^n\nabla_{\theta}\log p_t(\theta\mid x_i)-\nabla_{\theta}\log p_{t+1}(\theta\mid x_i)\big \rangle\\
    &\quad +\mathrm{E}_{\theta}\sum_{i\neq j}\langle \nabla_{\theta}\log p_t(\theta\mid x_i)-\nabla_{\theta}\log p_{t+1}(\theta\mid x_i),\nabla_{\theta}\log p_t(\theta\mid x_j)-\nabla_{\theta}\log p_{t+1}(\theta\mid x_j)\rangle\\
    &\leq(1-n)^2\mathrm{E}_{\theta}\Vert\nabla_{\theta}\log \lambda_t(\theta)-\nabla_{\theta}\log \lambda_{t+1}(\theta)\Vert^2+\sum_{i=1}^n \mathrm{E}_{\theta}\Vert\nabla_{\theta}\log p_t(\theta\mid x_i)-\nabla_{\theta}\log p_{t+1}(\theta\mid x_i)\Vert^2\\
    &\ \ \ +2(n-1)\sum_{i=1}^n\mathrm{E}_{\theta}\big \Vert \nabla_{\theta}\log \lambda_t(\theta)-\log \lambda_{t+1}(\theta)\Vert\Vert\nabla_{\theta}\log p_t(\theta\mid x_i)-\nabla_{\theta}\log p_{t+1}(\theta\mid x_i)\big \Vert \\
    &\quad +\sum_{i\neq j}\mathrm{E}_{\theta}\Vert \nabla_{\theta}\log p_t(\theta\mid x_i)-\nabla_{\theta}\log p_{t+1}(\theta\mid x_i)\Vert \Vert\nabla_{\theta}\log p_t(\theta\mid x_j)-\nabla_{\theta}\log p_{t+1}(\theta\mid x_j)\Vert\\
    &\quad \text{by Cauchy-Schwarz and} \ n\geq 1\\
    &\leq(1-n)^2\mathrm{E}_{\theta}\Vert\nabla_{\theta}\log \lambda_t(\theta)-\nabla_{\theta}\log \lambda_{t+1}(\theta)\Vert^2+\sum_{i=1}^n \mathrm{E}_{\theta}\Vert\nabla_{\theta}\log p_t(\theta\mid x_i)-\nabla_{\theta}\log p_{t+1}(\theta\mid x_i)\Vert^2\\
    &\ \ \ +2(n-1)\sum_{i=1}^n\sqrt{\mathrm{E}_{\theta}\big \Vert \nabla_{\theta}\log \lambda_t(\theta)-\nabla_{\theta}\log \lambda_{t+1}(\theta)\Vert^2}\sqrt{\mathrm{E}_{\theta}\Vert\nabla_{\theta}\log p_t(\theta\mid x_i)-\nabla_{\theta}\log p_{t+1}(\theta\mid x_i)\big \Vert^2} \\
    &\quad +\sum_{i\neq j}\sqrt{\mathrm{E}_{\theta}\Vert \nabla_{\theta}\log p_t(\theta\mid x_i)-\nabla_{\theta}\log p_{t+1}(\theta\mid x_i)\Vert^2} \sqrt{\mathrm{E}_{\theta}\Vert\nabla_{\theta}\log p_t(\theta\mid x_j)-\nabla_{\theta}\log p_{t+1}(\theta\mid x_j)\Vert^2}\\
    &\quad \text{by Cauchy-Schwarz a second time}\\
    &= \Big((n-1)\sqrt{\mathrm{E}_{\theta}\big \Vert \nabla_{\theta}\log \lambda_t(\theta)-\nabla_{\theta}\log \lambda_{t+1}(\theta)\Vert^2}+\sum_{i=1}^n \sqrt{\mathrm{E}_{\theta}\Vert\nabla_{\theta}\log p_t(\theta\mid x_i)-\nabla_{\theta}\log p_{t+1}(\theta\mid x_i)\Vert^2}\Big)^2
\end{align*}
For \citet{linhart2024diffusion}, we get the following bound:
\begin{align*}
    \mathrm{E}&_{\theta}\left[\Vert \right. \nabla_{\theta}\left. \log \pi_t^L(\theta\mid x_{1:n})-\nabla_\theta \log \pi^L_{t+1}(\theta\mid x_{1:n})\Vert^2\right]\\
    &=\mathrm{E}_{\theta}\left[\Vert(1-n)\Lambda_t^{-1} \Sigma_{t,\lambda}^{-1}\left(\nabla_{\theta}\log \lambda_t(\theta)-B_{t,\lambda}\nabla_{\theta}\log \lambda_{t+1}(\theta)\right) \color{white} \sum_{i=1}^n\Lambda_t^{-1}\right.\\
    &\quad +\left.\sum_{i=1}^n\Lambda_t^{-1}\Sigma_{t,i}^{-1}(\nabla_{\theta}\log p_t(\theta\mid x_i)-B_{t,i}\nabla_{\theta}\log p_{t+1}(\theta\mid x_i))\Vert^2\right]\\
    &\leq \Big((n-1)\sqrt{\mathrm{E}_{\theta}\big \Vert \Lambda_t^{-1}\Sigma_{t,\lambda}^{-1}(\nabla_{\theta}\log \lambda_t(\theta)-B_{t,\lambda}\nabla_{\theta}\log \lambda_{t+1}(\theta))\Vert^2}\\
    &+\sum_{i=1}^n \sqrt{\mathrm{E}_{\theta}\Vert\Lambda_t^{-1}\Sigma_{t,i}^{-1}(\nabla_{\theta}\log p_t(\theta\mid x_i)-B_{t,i}\nabla_{\theta}\log p_{t+1}(\theta\mid x_i))\Vert^2}\Big)^2\\
    &\quad \text{using the previous result on Geffner's densities}\\
    &=\left((n-1)\sqrt{\mathrm{E}_{\theta}\big \Vert \nabla_{\theta}\log \lambda_t(\theta)-B_{t,\lambda}\nabla_{\theta}\log \lambda_{t+1}(\theta)\Vert^2_{\Psi_{t,\lambda}^{-1}}}\right.
    \\
    &\left.+\sum_{i=1}^n \sqrt{\mathrm{E}_{\theta}\Vert\nabla_{\theta}\log p_t(\theta\mid x_i)-B_{t,i}\nabla_{\theta}\log p_{t+1}(\theta\mid x_i)\Vert^2_{\Psi_{t,i}^{-1}}}\right)^2
\end{align*}where $\Psi_{t,k}=\Sigma_{t,k}\Lambda_t\Lambda_t^T\Sigma_{t,k}^T$ and $B_{t,k}=\Sigma_{t,k}\Lambda_{t}\Lambda_{{t+1}}^{-1}\Sigma_{{t+1,k}}^{-1}$ with $k\in\{\lambda,0,\ldots,n\}$.
\subsection{Fine-tuning of hyperparameters}\label{proof::fine-tune}
We want to find hyperparameters $\{h_t\}_t$ and $\{k_t\}_t$ for each level $t\in\{t_0,\ldots,t_T\}$ such that the sampling quality is lower (in Wasserstein distance) than a given threshold $\gamma>0$. We first assume that the score error is bounded by $\epsilon_\text{DSM}^2$ for all levels $t$. We will use the intermediate bound in Proposition~\ref{prop::intermediate_bound} and assume that the sampling quality of the previous Langevin process is controlled by $\gamma$, i.e. $\mathcal{W}_2(\mathcal{L}(\theta_{t_{p+1}}^{(k_{p+1})}),\pi_{t_{p+1}})\leq \gamma$. Then we get:
\begin{align*}
     \mathcal{W}_2(\mathcal{L}(\theta&_{t_p}^{(k_p)}),\pi_{t_p})\\
     &\leq \color{C0}(1-m_ph_p)^{k_p}  \left(\mathcal{W}_2(\mathcal{L}(\theta_{t_{p+1}}^{(k_{p+1})}),\pi_{t_{p+1}})+\mathcal{W}_2(\pi_{t_{p+1}},\pi_{t_p})\right)\color{black}+\color{newred}1.65\frac{M_p}{m_p}(h_pd)^{0.5}+\frac{\epsilon_\text{DSM}}{m_p}\\
     &\leq \color{C0}(1-m_ph_p)^{k_p}  \left(\gamma+\mathcal{W}_2(\pi_{t_{p+1}},\pi_{t_p})\right)\color{black}+\color{newred}1.65\frac{M_p}{m_p}(h_pd)^{0.5}+\frac{\epsilon_\text{DSM}}{m_p}
\end{align*}
To ensure that $\mathcal{W}_2(\mathcal{L}(\theta_{t_p}^{(k_p)}),\pi_{t_p})$ is also lower than $\gamma$, it is sufficient to choose a step size $h_p$ such that the \color{newred}red \color{black}part is lower than $\omega \gamma$ with weight $\omega\in (0,1)$, then choose a number of Langevin steps $k_p$ such that the \color{C0}blue \color{black}part is lower than $(1-\omega)\gamma$. This yields 
\begin{align*}
    0\leq h_p&\leq \min\left(\frac{(\omega\gamma-\frac{\epsilon_\text{DSM}}{m_p})^2}{d(1.65)^2}\frac{m_p^2}{M_p^2},\frac{2}{m_p+M_p}\right)\\
    k_p&\geq \log\left( \frac{(1-\omega)\gamma}{\gamma+\mathcal{W}_2(\pi_{t_{p+1}},\pi_{t_p})}\right)\frac{1}{\log(1-m_ph_p)}\geq0
\end{align*} where we replace the index $t_p$ by $p$ for readability concerns.
Note that the step size is increasing in $\omega$, while the number of steps is decreasing in $(1-\omega)$. This means that we can achieve the same Wasserstein accuracy $\gamma$ by choosing $\omega$ sufficiently large (e.g. $0.8$) so that the number of steps becomes relatively small and the step size relatively high. However, we need to ensure that $\omega \gamma>\frac{\epsilon_\text{DSM}}{m_p}$ to find a valid positive step size $h_p$.
\subsection{Proof of Lemma~\ref{lemma::preservation_cst_nongaussian} - Smoothness and log-concavity preservation}\label{proof::smoothness_preserve}
This section is split into two parts: the first one shows how smoothness is preserved over time for individual posteriors and intermediate densities under some mild assumptions; the second one studies the preservation of log-concavity.
\subsubsection{Smoothness}
Let $p$ denote a density measure sufficiently smooth. We first recall the following equivalence:
\begin{align*}
    p \ \text{is (log)} \ L\text{-smooth} &\Leftrightarrow \Vert\nabla \log p(\theta)-\nabla\log p(\tilde \theta)\Vert\leq L\Vert\theta-\tilde\theta\Vert\\
    &\Leftrightarrow -LI_d\preceq-\nabla^2\log p(\theta)\preceq LI_d\\
    p \ \text{is log} \ L\text{-Lipschiz} &\Leftrightarrow \Vert \log p(\theta)-\log p(\tilde \theta)\Vert\leq L\Vert\theta-\tilde\theta\Vert\\
    &\Leftrightarrow -LI_d\preceq-\nabla\log p(\theta)\preceq LI_d
\end{align*} where $L>0$ and $\theta$ in the support of $p$. We also say that $p$ is Gaussian log-Lipschitz if it admits a density with respect to the Gaussian measure that is log-Lipschitz.
\begin{lemma}
    Assume that each individual posterior distribution $p(\cdot\mid x_i)$ (resp. the prior distribution $\lambda(\cdot)$) admits a density with respect to the Gaussian measure that is $L_i$-log-Lipschitz (resp. $L_\lambda$-log-Lipschitz). Then, their diffused version along the VP forward scheme $p_t(\cdot \mid x_i)$ (resp. $\lambda_t(\cdot)$) become $L_{t,i}$ (resp. $L_{t,\lambda}$) -smooth where 
\begin{align*}
    L_{t,i}&=(R_{t,i}+L_i^2\alpha_t+1) \quad \text{with} \quad R_{t,i}=5L_i\alpha_t^{\frac 3 2}\left(L_i+\frac{1}{\sqrt{\frac 1 2 \log(\frac{1}{\alpha_t})}}\right)\\
    L_{t,\lambda}&=(R_{t,\lambda}+L_\lambda^2\alpha_t+1) \quad \text{with} \quad R_{t,\lambda}=5L_\lambda\alpha_t^{\frac 3 2}\left(L_\lambda+\frac{1}{\sqrt{\frac 1 2 \log(\frac{1}{\alpha_t})}}\right)
\end{align*} Moreover, the bridging densities $\pi_t^G$ and $\pi_t^L$ (under additional symmetry conditions) are also smooth with constants depending on $L_{t,\lambda}$ and $L_{t,i}$ and defined in the proof.
\end{lemma}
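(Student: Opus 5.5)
We write each density as a Gaussian-weighted perturbation and differentiate the Gaussian convolution twice. Suppose $p(\cdot\mid x_i)=\mathcal{N}(0,I_d)\,e^{-U_i}$ with $U_i$ $L_i$-Lipschitz. The VP forward kernel maps $\mathcal{N}(0,I_d)$ to itself. Hence
\[
p_t(\theta\mid x_i)=\mathcal{N}(\theta;0,I_d)\,\mathrm{E}_{\theta_0\sim\rho_\theta}\big[e^{-U_i(\theta_0)}\big],
\]
where $\rho_\theta=\mathcal{N}(\sqrt{\alpha_t}\theta,v_tI_d)$ is the Gaussian posterior of $\theta_0$ given $\theta_t=\theta$ under the standard Gaussian reference. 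It follows that
\[
-\nabla^2\log p_t=I_d-\nabla^2\log g_t,\qquad g_t(\theta)=(P_{t}e^{-U_i})(\theta),
\]
with $P_t$ the Ornstein--Uhlenbeck semigroup at time $s=\tfrac12\log(1/\alpha_t)$. The constant $1$ in $L_{t,i}$ comes from this identity term. We therefore only need to bound $\Vert\nabla^2\log P_t f\Vert_{op}$ for $f=e^{-U_i}$ with $\log f$ $L_i$-Lipschitz.

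Next we use $\nabla^2\log g=\frac{\nabla^2 g}{g}-\frac{\nabla g\,\nabla g^T}{g^2}$. For the first derivative, commute gradient and semigroup: $\nabla P_s f=e^{-s}P_s(\nabla f)=\sqrt{\alpha_t}\,P_s(\nabla f)$. Since $|\nabla f|\le L_i f$, this gives $|\nabla\log g|\le L_i\sqrt{\alpha_t}$. The outer-product term therefore contributes at most $L_i^2\alpha_t$, which is the middle term of $L_{t,i}$.

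For the Hessian we differentiate once more. We write
\[
\nabla^2 P_sf=\sqrt{\alpha_t}\,\nabla P_s(\nabla f),
\]
and move the remaining derivative onto the Gaussian by the Bismut/Gaussian integration-by-parts formula:
\[
\nabla P_s h(\theta)=\frac{e^{-s}}{\sqrt{1-e^{-2s}}}\,\mathrm{E}\big[h(e^{-s}\theta+\sqrt{1-e^{-2s}}Z)\,Z\big].
\]
Applied with $h=\nabla f$, where $|h|\le L_i f$, Cauchy--Schwarz would give a bound of order $L_i\alpha_t/\sqrt{v_t}$ times a ratio of the form $\mathrm{E}[fZ]/P_sf$. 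That ratio must be controlled using the Lipschitz property of $\log f$. This is where the factor $L_i+\big(\tfrac12\log(1/\alpha_t)\big)^{-1/2}$ and a numerical constant (the paper's $5$) should emerge.

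This step is the main obstacle. A cruder alternative is to split $\mathrm{E}[h\,Z]$ into $\mathrm{E}[(h-\text{local value})Z]$ and to use the concentration of $f(\theta_0)/g(\theta)$ under the tilted Gaussian. Lipschitz $\log f$ makes the tilted measure a Gaussian with bounded tilt, so its moments of $Z$ are $O(1+L_i\sqrt{v_t})$. We expect the constants to match the paper only up to this bookkeeping. Combining the three pieces yields
\[
-L_{t,i}I_d\preceq-\nabla^2\log p_t\preceq L_{t,i}I_d,
\]
and the same argument with $U_\lambda$ gives $L_{t,\lambda}$.

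For the bridging densities we use linearity of the Hessian of the log. For Geffner,
\[
-\nabla^2\log\pi_t^G=(1-n)(-\nabla^2\log\lambda_t)+\sum_i(-\nabla^2\log p_t(\cdot\mid x_i)),
\]
so the triangle inequality for operator norms gives smoothness with constant $(n-1)L_{t,\lambda}+\sum_iL_{t,i}$. For Linhart, differentiating $s_t^L$ gives
\[
\Lambda_t^{-1}\Big(\textstyle\sum_i\Sigma_{t,i}^{-1}\nabla^2\log p_t+(1-n)\Sigma_{t,\lambda}^{-1}\nabla^2\log\lambda_t\Big).
\]
This is a genuine symmetric Hessian of a log-density only under the symmetry/commutation conditions stated. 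Under those conditions its norm is bounded by $\Vert\Lambda_t^{-1}\Vert\big(\sum_i\sigma_{\max}(\Sigma_{t,i}^{-1})L_{t,i}+(n-1)\sigma_{\max}(\Sigma_{t,\lambda}^{-1})L_{t,\lambda}\big)$.
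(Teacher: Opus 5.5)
Your architecture matches the paper's. You use the Mehler representation $p_t=\phi\cdot\mathcal{Q}_tf$ with $\mathcal{Q}_t$ the OU semigroup at time $\tfrac12\log(1/\alpha_t)$, the identity $-\nabla^2\log p_t=I_d-\nabla^2\log\mathcal{Q}_tf$, and the outer-product contribution $L_i^2\alpha_t$. Your Geffner/Linhart constants coincide with the paper's $\tilde L_t^G$ and $\tilde L_t^L$.

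The gap is the one step that carries the content: the bound on $\nabla^2\mathcal{Q}_tf/\mathcal{Q}_tf$. The paper does not derive it. It imports Corollary C.1 of Gao et al.\ (2024), instantiated as $(-R_t-L^2\alpha_t)I_d\preceq\nabla^2\log\mathcal{Q}_tf\preceq R_tI_d$ with $R_t=5L\alpha_t^{3/2}\big(L+(\tfrac12\log(1/\alpha_t))^{-1/2}\big)$. You leave this as ``the main obstacle'' and expect the $5$ and the logarithm to emerge from Bismut's formula. They cannot.

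Suppose you finish your own sketch, writing $\tilde{\mathrm{E}}$ for the tilted measure $\propto f(\sqrt{\alpha_t}\theta+\sqrt{v_t}z)\,\gamma_d(dz)$. The tilt is log-Lipschitz, not bounded as you say, but Gaussian integration by parts still gives $\tilde{\mathrm{E}}\langle Z,w\rangle^2\le 1+L_i\sqrt{v_t}\,\tilde{\mathrm{E}}|\langle Z,w\rangle|$, hence $\tilde{\mathrm{E}}|\langle Z,w\rangle|\le 1+L_i\sqrt{v_t}$. This yields $\Vert\nabla^2\mathcal{Q}_tf/\mathcal{Q}_tf\Vert_{op}\le L_i\alpha_t/\sqrt{v_t}+L_i^2\alpha_t$. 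You therefore get smoothness with constant $L'_{t,i}=1+L_i\alpha_t/\sqrt{v_t}+2L_i^2\alpha_t$. As $t\to1$ this decays like $\alpha_t$, not like $\alpha_t^{3/2}/\sqrt{\log(1/\alpha_t)}$. So it does not imply the stated $L_{t,i}$, and your closing claim $-L_{t,i}I_d\preceq-\nabla^2\log p_t\preceq L_{t,i}I_d$ is unsupported.

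This is not a weakness of your method. The $\alpha_t/\sqrt{v_t}$ rate is essentially sharp, and the stated constant appears too small near $t=1$. Take $d=1$ and $f\propto e^{-L|x|}$. Given $\theta_t=0$, $\theta_0$ has density $\propto e^{-L|x|-x^2/(2v_t)}$, so $-(\log p_t)''(0)=\big(1-\alpha_tV(L\sqrt{v_t})\big)/v_t$. Here $V(\ell)=1-\sqrt{2/\pi}\,\ell+O(\ell^2)$ is the variance of the density $\propto e^{-\ell|y|}\phi(y)$. Hence $-(\log p_t)''(0)=1+\sqrt{2/\pi}\,L\alpha_t/\sqrt{v_t}+O(L^2\alpha_t)$. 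For $L=0.01$ and $\alpha_t=0.01$ this is about $1+8.0\cdot10^{-5}$, while the stated $L_{t,i}$ is about $1+3.4\cdot10^{-5}$.

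So, by this example, the lemma cannot hold with $R_{t,i}$ as the paper instantiates the cited corollary. The defensible statement is the one your Bismut argument actually proves, with $L'_{t,i}$ (and $L'_{t,\lambda}$ likewise). The bridging-density constants then follow exactly as you wrote them, with $L'_{t,i}, L'_{t,\lambda}$ in place of $L_{t,i}, L_{t,\lambda}$.
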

Recall that individual posterior scores are learned with denoising score-matching before being used in compositional scores. Let $\nu$ denote one individual posterior and $p_0$ be its density with respect to the Lebesgue measure, i.e. $\frac{d\nu}{d\theta}=p_0$. Samples $\theta_0$ from $\nu$ are noised during a VP forward diffusion process. This defines a Markov chain $(\theta_t)_t$ such that 
\begin{align*}
    \theta_0&\sim \nu\\
    \theta_t&=r_t\theta_0+\sqrt{1-r_t^2}Z \ \ \ \text{with}\ \ \ Z\sim\mathcal{N}(0,I)
\end{align*} where $r_t^2:=\alpha_t=e^{-\int_0^t\beta(s)ds}\in[0,1]$ is a VP diffusion coefficient and a strictly decreasing function of time such that $\alpha_0=1$ and $\alpha_t\to_{t\to1} 0$. The forward transition kernel can thus be written as $q_{t|0}(\theta_t \mid \theta_0)=\mathcal{N}(r_t\theta_0,(1-r_t^2)I)$. Let $f(\theta)=\frac{d\nu}{d\gamma_d}(\theta)$ denote the density of $\nu$ with respect to the standard $d$-dimensional Gaussian measure. We notice that :
\begin{align*}
    p_t(\theta_t) &= \int q_{t|0}(\theta_t \mid \theta_0) p_0(\theta_0) d \theta_0\\
    &= (1-r_t^2)^{-\frac d 2}\int \phi\left(\frac{\theta_t - r_t \theta_0}{\sqrt{1-r_t^2}}\right) p_0(\theta_0) d \theta_0 \ \ \ \text{where} \ \phi \ \text{is the density of a standard Gaussian}\\
    &= (1-r_t^2)^{-\frac d 2}\int \phi\left(\frac{\theta_t - r_t \theta_0}{\sqrt{1-r_t^2}}\right) d\nu(\theta_0)\\
    &= (1-r_t^2)^{-\frac d 2}\int \phi\left(\frac{\theta_t - r_t \theta_0}{\sqrt{1-r_t^2}}\right) \frac{d\nu}{d\gamma_d}(\theta_0)d\gamma_d(\theta_0)\\
    &= (1-r_t^2)^{-\frac d 2}\int \phi\left(\frac{\theta_t - r_t \theta_0}{\sqrt{1-r_t^2}}\right) f(\theta_0)\phi(\theta_0)d\theta_0 \ \ \ \text{since} \ \phi(\theta)=\frac{d\gamma_d}{d\theta}(\theta)\\
    &= (1-r_t^2)^{-\frac d 2}C^{-2d}\int f(\theta_0)e^{-\frac 1 2 \frac{(\theta_t - r_t \theta_0)^2}{1-r_t^2}} e^{-\frac 1 2 \theta_0^2}d\theta_0 \\
    &\quad \text{where} \ C=\sqrt{2 \pi} \ \text{is the normalizing constant of} \ \phi\\
    &= (1-r_t^2)^{-\frac d 2}C^{-2d}\int f(\theta_0)e^{-\frac 1 2 \left(\frac{\theta_t^2 -2r_t\theta_t\theta_0+ r_t^2 \theta_0^2}{1-r_t^2}+\theta_0^2\right)}d\theta_0\\
    &= (1-r_t^2)^{-\frac d 2}C^{-2d}\int f(\theta_0)e^{-\frac 1 2 \left(\frac{\theta_t^2 -2r_t\theta_t\theta_0+ \theta_0^2}{1-r_t^2}\right)}d\theta_0\\
    &= (1-r_t^2)^{-\frac d 2}C^{-2d}e^{-\frac 1 2 \frac{\theta_t^2}{1-r_t^2}}e^{\frac 1 2 \frac{r_t^2\theta_t^2}{1-r_t^2}}\int f(\theta_0)e^{-\frac 1 2 \left(\frac{(\theta_0-r_t\theta_t)^2}{1-r_t^2}\right)}d\theta_0\\
    &= (1-r_t^2)^{-\frac d 2}C^{-2d}e^{-\frac 1 2 \theta_t^2}\int f(\theta_0)e^{-\frac 1 2 \left(\frac{(\theta_0-r_t\theta_t)^2}{1-r_t^2}\right)}d\theta_0\\
    &= C^{-2d}e^{-\frac 1 2 \theta_t^2}\int f\left(r_t\theta_t+\sqrt{1-r_t^2}u\right)e^{-\frac 1 2 u^2}du \\
    &\quad \text{by the change of variable} \ u=\frac{\theta_0-r_t\theta_t}{\sqrt{1-r_t^2}}\\
    &= \phi(\theta_t)\int f\left(r_t\theta_t+\sqrt{1-r_t^2}u\right)\phi(u)du\\
    &= \phi(\theta_t)\int f\left(r_t\theta_t+\sqrt{1-r_t^2}u\right)d\gamma_d(u)\\
    &= \phi(\theta_t)\mathrm{E}_{Z\sim \gamma_d}\left[f\left(r_t\theta_t+\sqrt{1-r_t^2}Z\right)\right]\\
    &=\phi(\theta_t)\mathcal{Q}_tf(\theta_t)
\end{align*}
where $\mathcal{Q}_th(x):=\mathrm{E}_{Z\sim N(0,I)}\left[h(r_tx+\sqrt{1-r_t^2}Z)\right]$. We can use Corollary C.1 from \citet{gao2024gaussianinterpolationflows} with their coefficients as per $\beta_t=r_t\geq0$ and $\alpha_t=\sqrt{1-r_t^2}\in [0,1]$. As $f$ is assumed to be $L$-Lipschitz, then we get:
\begin{align*}
    (-R_t-L^2r_t^2)I_d \preceq \nabla^2\log \mathcal{Q}_tf(\theta_t)\preceq R_t I_d
\end{align*}
where $R_t:=5Lr_t^3\left(L+\frac{1}{\sqrt{\log(\frac{1}{r_t})}}\right)$

Then we note that:
\begin{align*}
    -\nabla_\theta^2\log p_t(\theta_t)&=-\nabla_\theta^2\log \phi(\theta_t)-\nabla_\theta^2\log\mathcal{Q}_tf(\theta_t)\\
    &=I_d-\nabla_\theta^2\log\mathcal{Q}_tf(\theta_t)
\end{align*}
Using the previous bounds, we get :
\begin{equation*}
   (1-R_t)I_d \preceq -\nabla_\theta^2\log p_t(\theta_t) \preceq (R_t+L^2r_t^2+1)I_d
\end{equation*}
Replacing $r_t$ by $\sqrt{\alpha_t}$ in the context of VP diffusion process, we get bounds for the Hessian of all diffused individual posteriors $p_t(\cdot\mid x_i)$, meaning that they are all $L_{t,i}$ smooth where $L_{t,i}$ depends on $L_i$ as follows:
\begin{align*}
R_{t,i}&=5L_i\alpha_t^{\frac 3 2}\left(L_i+\frac{1}{\sqrt{\frac 1 2 \log(\frac{1}{\alpha_t})}}\right)\\
    L_{t,i}&=(R_{t,i}+L_i^2r_t^2+1):=(R_{t,i}+L_i^2\alpha_t+1)
\end{align*}
Let's now recall that the bridging densities $\pi_t^G$ defined by Geffner are of the form $\pi_t^G(\theta\mid x_{1:n})\propto e^{\int s^G_t(\theta\mid x_{1:n})d\theta}$ where $s^G_t$ is defined as follows:
\begin{equation*}
    s^G_t(\theta\mid x_{1:n})=(1-n)\nabla_\theta\log\lambda_t(\theta)+\sum_{i=1}^n\nabla_\theta \log p_t(\theta\mid x_i)
\end{equation*}
Consequently, we have:
\begin{align*}
    -\nabla_\theta^2\log \pi_t^G(\theta\mid x_{1:n})=-\nabla_\theta s^G_t(\theta\mid x_{1:n})=-(1-n)\nabla^2_\theta\log\lambda_t(\theta)-\sum_{i=1}^n\nabla^2_\theta \log p_t(\theta\mid x_i)
\end{align*}
Then, as $1-n\leq 0$, we get the following bounds:
\begin{align*}
    -\nabla_\theta^2\log \pi_t^G(\theta\mid x_{1:n})&\succeq (1-n)L_{t,\lambda}-\sum_{i=1}^n L_{t,i}\\
    -\nabla_\theta^2\log \pi_t^G(\theta\mid x_{1:n})&\preceq (n-1) L_{t,\lambda}+\sum_{i=1}^n L_{t,i}
\end{align*}
So $\pi_t^G$ is $\tilde L_t^G$-smooth with constant $\tilde L_t^G=(n-1) L_{t,\lambda}+\sum_{i=1}^n L_{t,i}$. Let's now turn to Linhart's bridging densities $\pi_t^L(\theta\mid x_{1:n})$ that underlie the compositional score $s_t^L$ defined as follows:
\begin{equation*}s^L_t(\theta\mid x_{1:n})=\Lambda^{-1}\left(\sum_{i=1}^n\Sigma^{-1}_{t,i}\nabla_{\theta}\log p_t(\theta\mid x_i)+(1-n)\Sigma^{-1}_{t,\lambda}\nabla_{\theta}\log \lambda_{t}(\theta)\right)
\end{equation*}
where $\Lambda$, $\Sigma_{t,\lambda}$ and $\Sigma_{t,i}$ for all $i$ are symmetric positive definite matrices. We can then bound each of them as follows :
\begin{align*}
    \frac{1}{\sigma_{\text{max}}(\Lambda)}I \preceq \ &\Lambda^{-1} \preceq \frac{1}{\sigma_{\text{min}}(\Lambda)}I\\
    \frac{1}{\sigma_{\text{max}}(\Sigma_{t,i})}I \preceq \ &\Sigma_{t,i}^{-1} \preceq \frac{1}{\sigma_{\text{min}}(\Sigma_{t,i})}I \quad \text{for all} \ i\\
    \frac{1}{\sigma_{\text{max}}(\Sigma_{t,\lambda})}I \preceq \ &\Sigma_{t,\lambda}^{-1} \preceq \frac{1}{\sigma_{\text{min}}(\Sigma_{t,\lambda})}I
\end{align*} where $\sigma_{\text{min}}(A)$ (resp. $\sigma_{\text{max}}(A)$) denotes the minimal (resp. maximal) eigenvalue of $A$. Then, 
\begin{equation*}
    -\nabla^2_{\theta}\log \pi_t^L(\theta\mid x_{1:n})=-\nabla_\theta s^L_t(\theta\mid x_{1:n}) =-\Lambda^{-1}\left(\sum_{i=1}^n\Sigma^{-1}_{t,i}\nabla^2_{\theta}\log p_t(\theta\mid x_i)+(1-n)\Sigma^{-1}_{t,\lambda}\nabla^2_{\theta}\log \lambda_{t}(\theta)\right)
\end{equation*}
If we assume that $\Sigma_{t,i}^{-1}\nabla_\theta^2 \log p_t(\theta\mid x_i)$ is symmetric for all $i$ (which is equivalent to say that they commute) and as $1-n\leq 0$, we get:
\begin{align*}
     -\nabla_\theta^2\log \pi_t^L(\theta)&\succeq-\frac{1}{\sigma_{\text{min}}(\Lambda)}\left(\sum_{i=1}^n \frac{1}{\sigma_{\text{min}}(\Sigma_{t,i})}L_{t,i}+(n-1)\frac{1}{\sigma_{\text{min}}(\Sigma_{t,\lambda})}L_{t,\lambda}\right)I\\
     -\nabla_\theta^2 \log \pi_t^L(\theta)&\preceq \frac{1}{\sigma_{\text{min}}(\Lambda)}\left(\sum_{i=1}^n \frac{1}{\sigma_{\text{min}}(\Sigma_{t,i})}L_{t,i}+(n-1)\frac{1}{\sigma_{\text{min}}(\Sigma_{t,\lambda})}L_{t,\lambda}\right)I
\end{align*}
So $\pi_t^L$ is $\tilde L_t^L$-smooth with $\tilde L_t^L=\frac{1}{\sigma_{\text{min}}(\Lambda)}\left(\sum_{i=1}^n \frac{1}{\sigma_{\text{min}}(\Sigma_{t,i})}L_{t,i}+(n-1)\frac{1}{\sigma_{\text{min}}(\Sigma_{t,\lambda})}L_{t,\lambda}\right)$ for all $t$.
\subsubsection{Log-concavity}
\begin{lemma}
    Assume that each individual posterior (resp. the prior) are Gaussian log-Lipschitz, then they become smooth along the forward diffusion scheme, which is equivalent to:
    \begin{align*}
        m_{t,\lambda} I_d&\preceq -\nabla^2_\theta\log \lambda_t(\theta)\preceq M_{t,\lambda} I_d\\
    m_{t,i} I_d&\preceq -\nabla^2_\theta\log p_t(\theta\mid x_i)\preceq M_{t,i} I_d
    \end{align*} where $m_{t,\lambda},m_{t,i}\in \mathbb{R}$ and $M_{t,\lambda},M_{t,i}\in\mathbb{R}_+$.
    Let $\Sigma_{t,i}$ (resp. $\Sigma_{t,\lambda}$) denote the covariance matrices of the Gaussian backward kernels in Linhart's assumption. Then, assuming $\frac 1n\sum_{i=1}^n \frac{1}{\gamma_{t,i}}m_{t,i}\geq \frac{n-1}{n}\frac{1}{\gamma_{t,\lambda}}M_{t,\lambda}$ is sufficient for Geffner's bridging densities to be log-concave with $\gamma_{t,i}=\gamma_{t,\lambda}=1$. Under additional symmetry conditions, it is also sufficient for Linhart's bridging densities to be log-concave with $\gamma_{t,i}=\sigma_{\text{min}}(\Sigma_{t,i})$ (resp. $\sigma_{\text{max}}(\Sigma_{t,i})$) if $m_{t,i}\leq 0$ (resp. $\geq 0$) and $\gamma_{t,\lambda}=\sigma_{\text{min}}(\Sigma_{t,\lambda})$.
\end{lemma}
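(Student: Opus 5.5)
The first part of the statement, that the diffused prior and individual posteriors are smooth with real constants $m_{t,\lambda}, m_{t,i}$ and nonnegative $M_{t,\lambda}, M_{t,i}$, follows from the smoothness lemma just proved. It gives two-sided Hessian bounds $(1-R_{t,i})I_d\preceq -\nabla^2_\theta\log p_t(\theta\mid x_i)\preceq (R_{t,i}+L_i^2\alpha_t+1)I_d$, and the analogous bounds for $\lambda_t$. So we may set $m_{t,i}=1-R_{t,i}$, which can be negative, and $M_{t,i}=R_{t,i}+L_i^2\alpha_t+1\ge 0$, and likewise for the prior. The real content of the lemma is the sufficient condition for log-concavity. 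I would prove it by writing the negative Hessian of each bridging density as a linear combination of the individual negative Hessians. Each term then gets a lower bound, and the sign of the coefficient decides whether the lower constant $m$ or the upper constant $M$ is used.

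\emph{Geffner.} Since $\pi_t^G\propto\lambda_t^{1-n}\prod_i p_t(\cdot\mid x_i)$, we have
\[
-\nabla^2_\theta\log\pi_t^G=\sum_{i=1}^n\big(-\nabla^2_\theta\log p_t(\theta\mid x_i)\big)-(n-1)\big(-\nabla^2_\theta\log\lambda_t(\theta)\big).
\]
The first sum has positive coefficients, so it is $\succeq \sum_i m_{t,i}I_d$. The prior term has the negative coefficient $-(n-1)$, so it is $\succeq -(n-1)M_{t,\lambda}I_d$. Adding these gives
\[
-\nabla^2_\theta\log\pi_t^G\succeq \Big(\sum_i m_{t,i}-(n-1)M_{t,\lambda}\Big)I_d .
\]
This is $\succeq 0$ exactly when the stated condition holds with $\gamma\equiv1$, after dividing by $n$. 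Strict inequality gives strong log-concavity with constant equal to the difference.

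\emph{Linhart.} I would differentiate $s_t^L$ to get
\[
-\nabla^2_\theta\log\pi_t^L=\Lambda_t^{-1}\Big(\sum_i\Sigma_{t,i}^{-1}H_i-(n-1)\Sigma_{t,\lambda}^{-1}H_\lambda\Big),
\]
where $H_i=-\nabla^2_\theta\log p_t(\theta\mid x_i)$ and $H_\lambda=-\nabla^2_\theta\log\lambda_t(\theta)$. Here I take the same symmetry hypotheses as in the smoothness part: $\Sigma_{t,i}^{-1}$ commutes with $H_i$, $\Sigma_{t,\lambda}^{-1}$ commutes with $H_\lambda$, and $\Lambda_t^{-1}$ commutes with the bracket. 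Under commutation, each product $\Sigma^{-1}H$ is symmetric, and its eigenvalues are products $\mu\nu$ of a common eigenbasis, with $\mu\in[1/\sigma_{\max}(\Sigma),1/\sigma_{\min}(\Sigma)]$ and $\nu\in[m,M]$.

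Lower-bounding $\mu\nu$ then splits into cases on the sign of $m_{t,i}$. If $m_{t,i}\ge0$, the worst case is $\mu=1/\sigma_{\max}(\Sigma_{t,i})$, giving the bound $m_{t,i}/\sigma_{\max}(\Sigma_{t,i})$. If $m_{t,i}\le0$, the worst case is $\mu=1/\sigma_{\min}(\Sigma_{t,i})$, giving $m_{t,i}/\sigma_{\min}(\Sigma_{t,i})$. For the subtracted prior term we need an upper bound on $\Sigma_{t,\lambda}^{-1}H_\lambda$, which is $M_{t,\lambda}/\sigma_{\min}(\Sigma_{t,\lambda})$ because $M_{t,\lambda}\ge0$. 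This reproduces exactly the choices of $\gamma_{t,i}$ and $\gamma_{t,\lambda}$ in the statement.

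Hence the bracket is $\succeq\big(\sum_i m_{t,i}/\gamma_{t,i}-(n-1)M_{t,\lambda}/\gamma_{t,\lambda}\big)I_d\succeq0$ under the stated condition. Finally, $\Lambda_t^{-1}$ is SPD and commutes with the bracket, so the product of commuting PSD matrices is PSD, and log-concavity follows.

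The main obstacle is the Linhart case. A sum of matrices that are individually symmetric after commutation is symmetric, but multiplying by $\Lambda_t^{-1}$ need not preserve symmetry or positivity without a further commutation assumption. The ``additional symmetry conditions'' must therefore be stated precisely, for example simultaneous diagonalizability of all $\Sigma_{t,k}$, $\Lambda_t$ and the Hessians. The Gaussian setting of Section~\ref{sec::theo_comparaison} satisfies this when the prior and likelihood covariances commute.
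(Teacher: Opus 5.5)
Your proposal is correct and follows the paper's argument. The paper likewise writes the negative Hessian of each bridging density as a signed combination of the individual negative Hessians and bounds the prior term through $M_{t,\lambda}$. It also splits the posteriors by the sign of $m_{t,i}$ (its index sets $P$ and $N$) to choose between $\sigma_{\max}(\Sigma_{t,i})$ and $\sigma_{\min}(\Sigma_{t,i})$. Your explicit requirement that $\Lambda_t^{-1}$ commute with the bracket makes precise a step the paper leaves implicit: it factors out $1/\sigma_{\max}(\Lambda)$ without stating that this commutation is needed.
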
 
First assume that the prior $\lambda
(\cdot)$ and all individual posteriors $p(\cdot\mid x_i)$ are log-concave with constant $m_\lambda$ and $m_i$. Then their diffused versions correspond to the convolution of the scaled target density with the Gaussian transition kernel $q_{t\mid 0}(\theta_t\mid \theta_0)$:
\begin{align*}
    \lambda_t(\theta_t) &= \int q_{t|0}(\theta_t \mid \theta_0) \lambda(\theta_0) d \theta_0\\
    &= \int e^{-\frac 12 \frac{(\theta_t - \sqrt{\alpha_t} \theta_0)^2}{1-\alpha_t}}\lambda(\theta_0) d \theta_0\\
    &=\alpha_t^{-\frac{d}{2}}\int e^{-\frac 12 \frac{(\theta_t - u)^2}{1-\alpha_t}}\lambda\left(\frac{u}{\sqrt{\alpha_t}}\right) du \quad \text{by the change of variables} \ u=\sqrt{\alpha_t}\theta_0
\end{align*}
The scaling function $x\to x\alpha_t^p$ is an affine transformation (for any $p\in\mathbb{R}$ that preserves the log-concavity of $\lambda$. Then, the convolution of two log-concave densities remains log-concave \citep{logconcavity_preservation}. So $\lambda_t$ is log-concave for all $t\in[0,1]$. The same reasoning holds for each individual posteriors $p(\cdot\mid x_i)$.
So log-concavity is preserved along the forward path. \citet{log_concavity_strengthen} also show that weak log-concavity of the data distribution evolves into log-concavity over time during the forward diffusion scheme.\\

Assume now that, for a given time $t$, some individual posteriors are only smooth, meaning that some eigenvalues of the Hessian of their scores are still \textbf{negative}: the previous lemma indeed shows that smoothness is strengthened along the forward diffusion scheme, which translates as per:
\begin{align*}
    m_{t,\lambda} I_d&\preceq -\nabla^2_\theta\log \lambda_t(\theta)\preceq M_{t,\lambda} I_d\\
    m_{t,i} I_d&\preceq -\nabla^2_\theta\log p_t(\theta\mid x_i)\preceq M_{t,i} I_d
\end{align*} where $m_{t,i}\geq 0$ for $i\in P$, $m_{t,i}\leq 0$ for $i\in N$ with $N \cup P=\{1,\ldots,n\}$, $P \cap N=\emptyset$ and $m_{t,\lambda}\in \mathbb{R}$.

If we assume that $\sum_{i=1}^nm_{t,i}\geq (n-1)M_{t,\lambda}$ then using the bounds derived to prove the previous lemma,
\begin{align*}
    -\nabla^2_\theta\log \pi_t^G(\theta\mid x_{1:n})\succeq\left(\underbrace{(1-n)M_{t,\lambda}+\sum_{i=1}^nm_{t,i}}_{\geq0}\right)I_d
\end{align*} which shows that Geffner's bridging densities are log-concave over time. Note that the previous condition does not require all individual posteriors to be log-concave: for $n$ sufficiently large, positive constants $m_{t,i}$ may catch up for negative ones in the sum, so that their mean remain positive.\\
Let's now take a look at Linhart's intermediate densities with the same smoothness conditions on each individual posteriors and the prior. Let's assume that 
\begin{equation*}
    \sum_{i\in P} \frac{1}{\sigma_{\text{max}}(\Sigma_{t,i})}m_{t,i}+\sum_{i\in N} \frac{1}{\sigma_{\text{min}}(\Sigma_{t,i})}m_{t,i}\geq \frac{n-1}{\sigma_{\text{min}}(\Sigma_{t,\lambda})}M_{t,\lambda}
\end{equation*} then we have:
\begin{align*}
    -\nabla_\theta\log \pi_t^L(\theta)\succeq\frac{1}{\sigma_\text{max}(\Lambda)}\left(\underbrace{\sum_{i\in P}\frac{m_{t,i}}{\sigma_\text{max}(\Sigma_{t,i})}+\sum_{i\in N}\frac{m_{t,i}}{\sigma_\text{min}(\Sigma_{t,i})}+(1-n)\frac{M_{t,\lambda}}{\sigma_\text{min}(\Sigma_{t,\lambda})}}_{\geq 0}\right)I_d
\end{align*}
Again, the previous condition does not require all individual posteriors to be log-concave (meaning that $m_{t,i}$ are positive) for Linhart's bridging density to be log-concave: negative constants in the sum may catch up for negative ones in the sum.
\subsection{Analytical bridging densities in a Gaussian setting}\label{proof::analyt_densities}
We derive here the analytical formula of the bridging densities in a Gaussian setting provided in Section~\ref{sec::analytical_densities}. Recall that we use a VP forward process to train the individual scores: therefore, all annealing levels $t=t_0,\ldots, t_T$ live in $[0,1]$ with the convention that $t_0=0$ and $t_T=1$. We start with the intermediate densities defined by \citet{linhart2024diffusion}.
We first recall the formula of the individual posteriors $p(\theta\mid x_i)=\mathcal{N}(\theta;\mu_\text{post}(x_i),\Sigma_\text{post})$ and the multi-observation posterior $p(\theta\mid x_{1:n})=\mathcal{N}(\theta;\mu_{\text{post},n}(x_{1:n}),\Sigma_{\text{post},n})$ in the Gaussian setting \citep{bishop}, where:
\begin{align*}
    \Sigma_\text{post}=(\Sigma^{-1}+I_d)^{-1} \quad &\text{and} \quad \Sigma_{\text{post},n}=(n\Sigma^{-1}+I_d)^{-1}\\
    \mu_\text{post}(x_i)=\Sigma_\text{post}\Sigma^{-1}x_i \quad &\text{and} \quad \mu_{\text{post},n}(x_{1:n})=\Sigma_{\text{post},n}\Sigma^{-1}\sum_{i=1}^nx_i
\end{align*}
In the Gaussian setting, as all the assumptions of Linhart's work are fulfilled, their compositional score exactly corresponds to the diffused scores arising from a classical diffusion process. Thus, the underlying distributions $\pi_t^L$ are the diffused versions of the target multi-observation posterior, i.e. $\pi_t^L(\theta\mid x_{1:n})=p_t(\theta\mid x_{1:n})=\mathcal{N}(\theta,\mu_t^L, \Sigma_t^L)$ where:
\begin{align*}
\Sigma_t^L&=\alpha_t\Sigma_\text{post,n}+v_tI_d=\alpha_t(n\Sigma^{-1}+I_d)^{-1}+v_tI_d\\
    \mu_t^L &= \sqrt{\alpha_t}\mu_\text{post}(x_{1:n})=\sqrt{\alpha_t}\Sigma_\text{post,n}\Sigma^{-1}\sum_{i=1}^n x_i=\sqrt{\alpha_t}(nI_d+\Sigma)^{-1}\sum_{i=1}^n x_i
\end{align*}
We then consider the bridging densities defined by \citet{geffner} :
\begin{equation*}
     \pi_t^G(\theta)=\lambda_t(\theta)^{1-n}\prod_{i=1}^n p_t(\theta\mid x_i)
\end{equation*} where the diffused distributions are Gaussian of the form
\begin{align*}
    \lambda_t(\theta)&=\mathcal{N}(\theta;0,I_d) \quad \forall t\in(0,1)\\
    p_t(\theta\mid x_i)&=\mathcal{N}(\theta;\sqrt{\alpha_t}\mu_\text{post}(x_i),\underbrace{\alpha_t(\Sigma^{-1}+I_d)^{-1}+v_tI_d}_{\Sigma_{t}})\quad \text{diffused posterior under VP}
    \end{align*}
    Then all the bridging densities are also Gaussian of the form $\pi_t^G(\theta)=\mathcal{N}(\theta;\mu_t^G,\Sigma_t^G)$
where
    \begin{align*}
 \Sigma_t^G&=\left((1-n)I_d+\sum_{i=1}^n \Sigma_{t}^{-1}\right)^{-1}= \left((1-n)I_d+n \Sigma_{t}^{-1}\right)^{-1}\\ \mu_t^G&=\Sigma_t^G\left(\sum_{i=1}^n\Sigma_{t}^{-1}\sqrt{\alpha_t}\mu_\text{post}(x_i)\right)=\sqrt{\alpha_t}\Sigma_t^G\Sigma_{t}^{-1}\Sigma_\text{post}\Sigma^{-1}\left(\sum_{i=1}^nx_i\right)\\
 &=\sqrt{\alpha_t}\Sigma_t^G\Sigma_{t}^{-1}(I_d+\Sigma)^{-1}\left(\sum_{i=1}^nx_i\right)
\end{align*}
For a well-defined Gaussian density, we need to be sure that the covariance matrix $\Sigma_t^G$ is positive definite. In our case, $\Sigma_t$ and $I_d$ commute, so do $\Sigma$ and $I_d$. Thus, we can diagonalize all matrices in a common eigenbasis and compute the eigenvalues of any linear combination of them. Let $\sigma(A)$ denotes one eigenvalue of the matrix $A$ and recall that $\sigma(A^{-1})=\frac{1}{\sigma(A)}$. Then,
\begin{align*}
    \sigma((\Sigma_t^G)^{-1})=(1-n)+n\sigma(\Sigma_{t}^{-1})&=1-n+\frac{n}{\alpha_t\frac{\sigma(\Sigma)}{\sigma(\Sigma)+1}+v_t}\\
    &=1-n+\frac{n(\sigma(\Sigma)+1)}{\sigma(\Sigma)+v_t}\\
    &=1+n\left(\frac{1-v_t}{v_t+\sigma(\Sigma)}\right)\\
    &\geq 1 \quad \text{since} \ v_t\in(0,1) \ \text{and} \ \sigma(\Sigma)>0
\end{align*}
When $t\to 1$, $v_t\to1$ and the eigenvalues of $\Sigma_t^G$ tends to $1$, which is natural since $\pi_t^G$ tends to a standard Gaussian. When $t\to0$, then $v_t\to0$ and the eigenvalue of $\Sigma_t^G$ are equal to those of the multi-observation target posterior $\pi_0$, namely $\frac{\sigma(\Sigma)}{\sigma(\Sigma)+n}$. The covariance matrix of $\pi_t^G$ remains well defined if one chooses another prior distribution with a covariance matrix that commutes with the likelihood one.
\subsection{Proof of Lemma~\ref{lemma::preservation_cst}}\label{proof::preservation_cst}
Recall that the log-concavity constant $m_t$ corresponds to the minimal eigenvalue of the precision matrix $(\Sigma_t)^{-1}$ of the intermediate density $\pi_t$ while the smoothness constant $M_t$ is the maximal eigenvalue of the same matrix. We start with the densities defined by Linhart.
\begin{align*}
    m_0&=\min_\sigma \frac{n+\sigma}{\sigma}=\frac{n+\sigma_{\max}}{\sigma_{\max}} \quad \text{as the function decreases in}\ \sigma\\
    m_t^L &= \min_\sigma\frac{n+\sigma}{\sigma+nv_t}=\frac{n+\sigma_{\max}}{\sigma_{\max}+nv_t} \quad \text{as the function decreases in}\ \sigma\\
    &=\frac{n+\sigma_{\max}}{\sigma_{\max}}\left(\frac{1}{1+\frac{nv_t}{\sigma_{\max}}}\right)\\
    &=m_0\left(\frac{1}{1+\frac{nv_t}{\sigma_{\max}}}\right)\\
    M_0&=\max_\sigma \frac{n+\sigma}{\sigma}=\frac{n+\sigma_{\min}}{\sigma_{\min}}\\
    M_t^L&=M_0\left(\frac{1}{1+\frac{nv_t}{\sigma_{\min}}}\right) \quad \text{with the same reasoning}
\end{align*}
Then in Geffner's case we have
\begin{align*}
    m_t^G &= \min_\sigma\frac{\sigma+n+(1-n)v_t}{\sigma+v_t}=\frac{\sigma_{\max}+n+(1-n)v_t}{\sigma_{\max}+v_t}\quad \text{since the function decreases in}\ \sigma\\
    &=\frac{n+\sigma_{\max}}{\sigma_{\max}}\left(\frac{1}{1+\frac{v_t}{\sigma_{\max}}}\right)+\frac{(1-n)v_t}{\sigma_{\max}+v_t}\\
    &=m_0\left(\frac{1}{1+\frac{v_t}{\sigma_{\max}}}\right)+\frac{(1-n)v_t}{\sigma_{\max}+v_t}\\
    M_t^G&=M_0\left(\frac{1}{1+\frac{v_t}{\sigma_{\min}}}\right)+\frac{(1-n)v_t}{\sigma_{\min}+v_t}\quad \text{with the same reasoning}
\end{align*}
\subsection{Proof of Proposition~\ref{prop::comparison_cst}}\label{proof::link_cst}
For Gaussian distributions, the log-concavity constant $m_t^G$ or $m_t^L$ (resp. smoothness constant $M_t^G$ or $M_t^L$) corresponds to the minimal (resp. maximal) eigenvalue of its precision matrix $(\Sigma_t^G)^{-1}$ or $(\Sigma_t^L)^{-1}$. We first note that $\Sigma$, $\Sigma_t^L$ and $\Sigma_t^G$ commute: therefore, they share an eigen basis in which they are diagonal. This allows us to easily derive the eigenvalues of any linear combination of them, particularly the ones of $(\Sigma_t^G)^{-1}-(\Sigma_t^L)^{-1}$. Let $\sigma(A)$ denote one eigenvalue of the matrix $A$ and $\sigma$ denote one eigenvalue of the likelihood covariance.
\begin{align*}
    \sigma((\Sigma_t^G)^{-1})-\sigma((\Sigma_t^L)^{-1})&=\frac{\sigma+v_t+n\alpha_t}{\sigma+v_t}-\frac{n+\sigma}{\sigma+nv_t}\\
    &=\frac{(\sigma+v_t+n(1-v_t))(\sigma+nv_t)-(\sigma+v_t)(n+\sigma)}{(\sigma+v_t)(\sigma+nv_t)}\\
    &=\frac{nv_t^2-nv_t+n^2(v_t-v_t^2)}{(\sigma+v_t)(\sigma+nv_t)}\\
    &=\frac{n(n-1)v_t(1-v_t)}{(\sigma+v_t)(\sigma+nv_t)}\\
    &= \frac{n(n-1)v_t\alpha_t}{(\sigma+v_t)(\sigma+nv_t)}\geq 0 \quad \text{since} \ v_t=1-\alpha_t\\
    &>0 \quad \text{for} \ t\in(0,1)
\end{align*}
This shows that $(\Sigma_t^G)^{-1}-(\Sigma_t^L)^{-1} \succeq 0 \Leftrightarrow (\Sigma_t^G)^{-1}\succeq(\Sigma_t^L)^{-1}$, with equality only if $t=0$ or $t=1$. This also shows that the eigenvalues of $(\Sigma_t^G)^{-1}$ (denoted $\lambda_i^G$) are always higher than the ones of $(\Sigma_t^L)^{-1})$ (denoted $\lambda_i^L$) and that we can relate them analytically as per : $\lambda_i^G=\lambda_i^L+\frac{n(n-1)v_t\alpha_t}{(\sigma_i+v_t)(\sigma_i+nv_t)}$. Let's now study the denominator in the previous computation $D(\sigma,v_t)=(\sigma+v_t)(\sigma+nv_t)$ that is a polynomial of degree $2$ in $\sigma$:
\begin{align*}
    D&=(\sigma+v_t)(\sigma+nv_t)=\sigma^2+(n+1)v_t\sigma+nv_t^2\\
    \Delta_{D} &= (n+1)^2v_t^2-4nv_t^2=(n-1)^2v_t^2\geq0\\
    r_{D} &= \frac{-v_t(n+1)\pm v_t(n-1)}{2}=\left\{
\begin{array}{l}
-nv_t \leq 0\\
-v_t \leq 0
\end{array}
\right.
\end{align*}
As $\sigma$ should be positive, this means that $D$ is positive and increases over $[0,\infty[$. So the function $g(\sigma):=\frac{n(n-1)\alpha_tv_t}{D(\sigma,v_t)}$ reaches its minimum in the maximal eigenvalue of the likelihood covariance $\sigma_\text{max}$ and its maximum in the minimal eigenvalue of the likelihood covariance $\sigma_\text{min}$. We can also check that the eigenvalues of $(\Sigma_t^L)^{-1}$ are a decreasing function $f(\sigma):=\frac{n+\sigma}{\sigma+nv_t}$ of $\sigma\in]0,+\infty[$:
\begin{equation*}
    \frac{\partial}{\partial\sigma}f(\sigma)=\frac{\sigma+nv_t-n-\sigma}{(\sigma+nv_t)^2}=\frac{n(v_t-1)}{(\sigma+nv_t)^2}\leq 0\quad \text{since} \ v_t\in [0,1]
\end{equation*} $f$ reaches its maximal and minimal values at the same points that $g$.
Therefore,
\begin{align*}
    M_t^G &= \max_\sigma \left(\frac{n+\sigma}{\sigma+nv_t}+\frac{n(n-1)v_t\alpha_t}{D(\sigma,v_t)}\right)\\
    &=\max_\sigma \frac{n+\sigma}{\sigma+nv_t}+\max_\sigma \frac{n(n-1)v_t\alpha_t}{D(\sigma,v_t)} \quad \text{since the functions reach their maximal value at the same point}\\
    &=M_t^L+\frac{n(n-1)\alpha_tv_t}{D(\sigma_\text{min},v_t)}\quad \text{since} \ D \ \text{is decreasing in} \ \sigma
\end{align*}
Similarly,
\begin{align*}
    m_t^G &= \min_\sigma \left(\frac{n+\sigma}{\sigma+nv_t}+\frac{n(n-1)v_t\alpha_t}{D(\sigma,v_t)}\right)\\
    &=\min_\sigma \frac{n+\sigma}{\sigma+nv_t}+\min_\sigma \frac{n(n-1)v_t\alpha_t}{D(\sigma,v_t)}\\
    &\quad \text{since the functions reach their minimal value at the same point}\\
    &=m_t^L+\frac{n(n-1)\alpha_tv_t}{D(\sigma_\text{max},v_t)}\quad \text{since} \ D \ \text{is decreasing in} \ \sigma
\end{align*}
\newpage
\subsection{Proof of Corollary~\ref{cor::comparison_ratios}}\label{proof::comparison_ratio}
Proposition~\ref{prop::comparison_cst} shows that Geffner's constants $m_t^G$ and $M_t^G$ are linked to their Linhart's counterparts. Let $N_t=n(n-1)\alpha_tv_t$, $D_{\min}=D(\sigma_{\min})$ and $D_{\max}=D(\sigma_{\max})$. Then, we have:
\begin{align*}
    \frac{M_t^G}{m_t^G}&=\frac{M_t^L+\frac{N_t}{D_\text{min}}}{m_t^L+\frac{N_t}{D_\text{max}}}\\
    &=\frac{M_t^L}{m_t^L+\frac{N_t}{D_\text{max}}}+\frac{\frac{N_t}{D_\text{min}}}{m_t^L+\frac{N_t}{D_\text{max}}}\\
    &=\frac{M_t^L}{m_t^L}\frac{1}{1+\frac{N_t}{D_\text{max}m_t^L}}+\frac{1}{m_t^L}\frac{\frac{N_t}{D_\text{min}}}{1+\frac{N_t}{m_t^LD_\text{max}}}\\
    \frac{\frac{M_t^G}{m_t^G}}{\frac{M_t^L}{m_t^L}}&=\frac{1}{1+\frac{N_t}{m_t^LD_\text{max}}}+\frac{1}{M_t^L}\frac{\frac{N_t}{D_\text{min}}}{1+\frac{N_t}{m_t^LD_\text{max}}}\\
    &=\frac{1}{1+\frac{N_t}{m_t^LD_\text{max}}}\left(1+\frac{N_t}{M_t^LD_\text{min}}\right)\\
    \frac{\frac{M_t^G}{m_t^G}}{\frac{M_t^L}{m_t^L}}\geq 1 &\Leftrightarrow 1+\frac{N_t}{m_t^LD_\text{max}}\leq1+\frac{N_t}{M_t^LD_\text{min}}\\
    &\Leftrightarrow m_t^LD_\text{max} \geq M_t^LD_\text{min}
\end{align*}
Let's now prove that the last condition is always true for any time $t$. For a fixed time $t$, we consider the function $l(\sigma)= \frac{n+\sigma}{\sigma+nv_t}D(\sigma,v_t)=(n+\sigma)(\sigma+v_t)$ after simplification. 
\begin{align*}
    l(\sigma)&=(n+\sigma)(\sigma+v_t)=\sigma^2+\sigma(v_t+n)+nv_t\\
    \frac{\partial}{\partial \sigma}l(\sigma)&=2\sigma+n+v_t\geq 0 \ \forall \sigma\in[0,+\infty[
\end{align*}So $l$ is increasing in $\sigma$ over $[0,+\infty[$. Recall that $m_t^L=\min_\sigma \frac{n+\sigma}{\sigma+nv_t}=\frac{n+\sigma_{\max}}{\sigma_{\max}+nv_t}$ and $M_t^L=\max_\sigma \frac{n+\sigma}{\sigma+nv_t}=\frac{n+\sigma_{\min}}{\sigma_{\min}+nv_t}$ since the function $f:\sigma\mapsto \frac{n+\sigma}{\sigma+nv_t}$ is decreasing over $[0,+\infty[$ (see Appendix~\ref{proof::link_cst} for details). Thus, $m_t^LD(\sigma_{\max})=l(\sigma_{\max})$ and $M_t^LD(\sigma_{\min})=l(\sigma_{\min})$. As $l$ is increasing then the condition is always true. Therefore, $\frac{M_t^G}{m_t^G}$ is higher than $\frac{M_t^L}{m_t^L}$ for any time $t\in[0,1]$. and the inverse ratios are conversely ordered.
As $h_t^G$ and $h_t^L$ (chosen as in Section~\ref{sec::fine_tune}) linearly depend on these inverse ratios, it is obvious that $h_t^G\leq h_t^L$ for any time $t\in[0,1]$.
\subsection{Proof of Proposition~\ref{prop::comparaison_wass}}\label{proof::comparaison_wass}
Recall that the bridging densities $\pi_t^G$ and $\pi_t^L$ are Gaussian for all times $t\in[0,1]$. The Wasserstein distance between two consecutive densities is analytically tractable and involves a mean difference term and the Bures distance between covariance matrices, that simplifies to a Fröbenius norm as prior and likelihood covariance matrices commute: 
\begin{align*}
    \mathcal{W}_2^2(\mathcal{N}(\mu_{t_p},\Sigma_{t_p}),\mathcal{N}(\mu_{t_{p+1}},\Sigma_{t_{p+1}}))&=\underbrace{\Vert\mu_{t_p}-\mu_{t_{p+1}}\Vert^2_2}_{\text{mean difference}}+\underbrace{\operatorname{tr}(\Sigma_{t_p}+\Sigma_{t_{p+1}}-2(\Sigma_{t_p}^{\frac 12}\Sigma_{t_{p+1}}\Sigma_{t_p}^{\frac 12})^{\frac 12})}_{\text{Bures distance}}\\
    &=\Vert\mu_{t_p}-\mu_{t_{p+1}}\Vert^2_2+\underbrace{\Vert\Sigma_{t_p}^{\frac 12}-\Sigma_{t_{p+1}}^{\frac 12}\Vert_F^2}_{\text{Fröbenius norm}}
\end{align*}
To demonstrate Proposition~\ref{prop::comparaison_wass}, we will first prove two intermediate lemmas: Lemma~\ref{lemma::compar_bures} compares the Bures distance between covariance matrices for both algorithms, while Lemma~\ref{lemma::compar_mean} studies the mean difference terms between algorithms. 
The reader is referred to Appendix~\ref{proof::final_compar_wass} for the final proof of Proposition~\ref{prop::comparaison_wass}. \\
\textbf{Notation convention}: in the rest of Appendix~\ref{proof::comparaison_wass}, we use the abuse of notation $\pi_{t+1}$ (instead of the usual $\pi_{t_{p+1}}$) to denote the intermediate density following $\pi_t$ during the annealed sampling scheme for one given $t\in\{t_0,\ldots,t_T\}\cap [0,1]$.
\subsubsection{Comparison of Bures distances}
\begin{lemma}[Comparison of Bures distances]\label{lemma::compar_bures}
    Let $\Sigma_t^L$ (resp. $\Sigma_t^G$) denote the covariance of the intermediate density defined by Linhart (resp. Geffner) for a given time $t$ and $d_B$ denote the Bures distance. Then, we have
    \begin{align*}
        \text{As} \ t\to 0, \quad d_B(\Sigma_t^G,\Sigma_{t+1}^G)&\leq d_B(\Sigma_t^L,\Sigma_{t+1}^L)\\
        \text{As} \ t\to 1, \quad d_B(\Sigma_t^G,\Sigma_{t+1}^G)&\geq d_B(\Sigma_t^L,\Sigma_{t+1}^L)
    \end{align*}
\end{lemma}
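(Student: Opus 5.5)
Since $\Sigma$, $\Sigma_t^L$ and $\Sigma_t^G$ commute (Appendix~\ref{proof::analyt_densities}), I will diagonalize everything in the eigenbasis of $\Sigma$. The Bures distance then splits into scalar terms:
\begin{equation*}
d_B^2(\Sigma_t,\Sigma_{t+1})=\sum_{j=1}^d\Big(\sqrt{c(\sigma_j,v_t)}-\sqrt{c(\sigma_j,v_{t+1})}\Big)^2 .
\end{equation*}
Here $\sigma_j$ are the eigenvalues of $\Sigma$, and $c(\sigma,v)$ is the corresponding eigenvalue of the bridging covariance. From the formulas of Section~\ref{sec::analytical_densities}, with $\alpha_t=1-v_t$, the two eigenvalue functions are
\begin{equation*}
c^L(\sigma,v)=\frac{\sigma+nv}{\sigma+n},\qquad c^G(\sigma,v)=\frac{\sigma+v}{\sigma+n-(n-1)v}.
\end{equation*}
The Geffner formula is the inverse of the precision eigenvalue $1+n(1-v)/(\sigma+v)$ computed in Appendix~\ref{proof::analyt_densities}.

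The key point is that both distances depend on the levels only through $v_t$ and $v_{t+1}$, and these values are shared by the two algorithms. It therefore suffices to compare, eigenvalue by eigenvalue, the functions $g^\bullet(v):=\sqrt{c^\bullet(\sigma,v)}$ on the same interval $[v_t,v_{t+1}]$. By the mean value theorem,
\begin{equation*}
\big(g^\bullet(v_{t+1})-g^\bullet(v_t)\big)^2=(\partial_v g^\bullet(\xi))^2(v_{t+1}-v_t)^2
\end{equation*}
for some $\xi$ in that interval. The comparison thus reduces to comparing $(\partial_v g^\bullet)^2=(\partial_v c^\bullet)^2/(4c^\bullet)$ at the two endpoints $v=0$ (i.e.\ $t\to0$) and $v=1$ (i.e.\ $t\to1$). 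Continuity in $v$ then carries the comparison to a neighbourhood of each endpoint, which is how I will interpret ``as $t\to0$'' and ``as $t\to1$''.

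\textbf{Endpoint $v=0$.} We have $c^L=c^G=\sigma/(\sigma+n)$, with
\begin{equation*}
\partial_v c^L=\frac{n}{\sigma+n},\qquad \partial_v c^G=\frac{n(\sigma+1)}{(\sigma+n)^2}.
\end{equation*}
This gives
\begin{equation*}
(\partial_v g^G)^2=\frac{(\sigma+1)^2}{(\sigma+n)^2}\,(\partial_v g^L)^2\le (\partial_v g^L)^2.
\end{equation*}

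\textbf{Endpoint $v=1$.} We have $c^L=c^G=1$, with
\begin{equation*}
\partial_v c^L=\frac{n}{\sigma+n},\qquad \partial_v c^G=\frac{n}{\sigma+1}.
\end{equation*}
This gives
\begin{equation*}
(\partial_v g^G)^2=\frac{n^2}{4(\sigma+1)^2}\ge \frac{n^2}{4(\sigma+n)^2}=(\partial_v g^L)^2.
\end{equation*}
Both inequalities are strict whenever $n>1$. Summing over $j$ yields the two claimed orderings.

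The main obstacle is making the limiting statement rigorous. The mean value points $\xi^G$ and $\xi^L$ differ between the two algorithms. However, because the endpoint inequalities are strict for $n>1$, continuity of $(\sigma,v)\mapsto(\partial_v g^\bullet)^2$ on the compact set $\{\sigma_1,\dots,\sigma_d\}\times[0,1]$ gives a neighbourhood $[0,\eta]$ (resp.\ $[1-\eta,1]$) on which the strict ordering holds uniformly in $v$. Hence, whenever both $v_t$ and $v_{t+1}$ lie in that neighbourhood, the inequality holds for every pair of intermediate points. The case $n=1$ is trivial, since then $c^L=c^G$.

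If the intended reading is instead a fixed grid with only $t_p=0$ (resp.\ $t_{p+1}=1$), I would fall back on comparing $\int_{v_t}^{v_{t+1}}\partial_v g^\bullet\,dv$ directly. Since both $g^\bullet$ are increasing, this reduces to the explicit comparison of $g^G(v_{t+1})$ with $g^L(v_{t+1})$ (resp.\ of $g^G(v_t)$ with $g^L(v_t)$). That comparison follows from $c^G\le c^L$, which is exactly Proposition~\ref{prop::comparison_cst} read on eigenvalues.
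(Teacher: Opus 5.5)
Your proof is correct, and it takes a cleaner route than the paper's. The paper writes the difference of squared Bures distances as a difference of squares, i.e.\ a product of a ``sum'' factor and a ``difference'' factor per eigenvalue, using $\sqrt{\sigma_i(\Sigma_t^G)}=\sqrt{\sigma_i(\Sigma_t^L)}/\sqrt{1+\sigma_i(\Sigma_t^L)\Omega_{t,i}}$. It then signs each factor by studying the monotonicity of auxiliary functions of $t$, mainly $f(t)=\sigma(\Sigma_t^L)\Omega_t$, together with a first-order expansion near $t=1$. You instead use that both families depend on the level only through the shared $v_t$, apply the mean value theorem to $\sqrt{c^\bullet(\sigma,\cdot)}$, and reduce everything to the endpoints $v\in\{0,1\}$. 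There $c^G=c^L$, so only $\partial_v c^\bullet$ has to be compared, which gives the explicit rate ratios $\big((\sigma+1)/(\sigma+n)\big)^2$ and $\big((\sigma+n)/(\sigma+1)\big)^2$. Your uniform-continuity step makes the $t\to1$ claim rigorous. The paper's step there, that the difference factor ``locally behaves like $-\frac12 g(t)$ and thus increases'', is only heuristic, since an expansion does not by itself give monotonicity. Your computation $\partial_v\big(\sqrt{c^G}-\sqrt{c^L}\big)(1)=\frac n2\big(\frac{1}{\sigma+1}-\frac{1}{\sigma+n}\big)>0$ is exactly the missing justification. In exchange, the paper's route gives an explicit neighbourhood on the $t\to0$ side, namely $v_{t+1}\le\min_i\big(\sqrt{\sigma_i(\sigma_i+1)}-\sigma_i\big)$ (where $f$ increases), whereas your $\eta$ is only implicit. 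Your fallback is a genuine strengthening. Since $c^L-c^G=\frac{n(n-1)v(1-v)}{(\sigma+n)(\sigma+n-(n-1)v)}\ge 0$, both square roots are increasing, and they coincide at $v=0$ and $v=1$, you get both orderings exactly, with no smallness condition. This holds for the first interval of any grid (between level $0$ and the next level) and for the last one (between $t_{T-1}$ and $1$), a non-asymptotic statement the paper does not record.
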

We first note that in the Gaussian setting, the intermediate densities $\pi_t$ and $\pi_{t+1}$ are Gaussian with commuting covariance matrices, regardless of the algorithm (Linhart or Geffner). Thus, the Bures distance between their corresponding covariance matrices simplify to a Fröbenius norm as per:
\begin{align*}
    Tr(\Sigma_t+\Sigma_{t+1}-2(\Sigma_t^{\frac 12}\Sigma_{t+1}\Sigma_t^{\frac 12})^{\frac 12})&=Tr(\Sigma_t+\Sigma_{t+1}-2(\Sigma_t\Sigma_{t+1})^{\frac 12})\\
    &=Tr((\Sigma_t^{\frac 12}-\Sigma_{t+1}^{\frac 12})^T(\Sigma_t^{\frac 12}-\Sigma_{t+1}^{\frac 12}))\\
    &=\Vert\Sigma_t^{\frac 12}-\Sigma_{t+1}^{\frac 12}\Vert_F^2\\
    &=\sum_{i=1}^d \left(\sqrt{\sigma_i(\Sigma_t)}-\sqrt{\sigma_i(\Sigma_{t+1})}\right)^2
\end{align*} where $d$ is the dimension of the space and $\sigma_i(A)$ denotes the $i^{th}$ eigenvalue of the matrix $A$ in the common eigenbasis.
To compare these Bures distances, we study the sign of their difference over time as follows: 
\begin{align*}
    \sum_{i=1}^d &\left(\sqrt{\sigma_i(\Sigma_t^G)}-\sqrt{\sigma_i(\Sigma_{t+1}^G)}\right)^2-\sum_{i=1}^p \left(\sqrt{\sigma_i(\Sigma_t^L)}-\sqrt{\sigma_i(\Sigma_{t+1}^L)}\right)^2\\
    &=\sum_{i=1}^d \left(\sqrt{\sigma_i(\Sigma_t^G)}-\sqrt{\sigma_i(\Sigma_{t+1}^G)}\right)^2-\left(\sqrt{\sigma_i(\Sigma_t^L)}-\sqrt{\sigma_i(\Sigma_{t+1}^L)}\right)^2\\
    &=\sum_{i=1}^d \left(\sqrt{\sigma_i(\Sigma_t^G)}-\sqrt{\sigma_i(\Sigma_{t+1}^G)}+\sqrt{\sigma_i(\Sigma_t^L)}-\sqrt{\sigma_i(\Sigma_{t+1}^L)}\right)\\
    &\quad \cdot \left(\sqrt{\sigma_i(\Sigma_t^G)}-\sqrt{\sigma_i(\Sigma_{t+1}^G)}-\sqrt{\sigma_i(\Sigma_t^L)}+\sqrt{\sigma_i(\Sigma_{t+1}^L)}\right)\\
    &=\sum_{i=1}^d \left(\sqrt{\sigma_i(\Sigma_t^G)}+\sqrt{\sigma_i(\Sigma_t^L)}-\left(\sqrt{\sigma_i(\Sigma_{t+1}^G)}+\sqrt{\sigma_i(\Sigma_{t+1}^L)}\right)\right)\\
    &\quad \cdot\left(\sqrt{\sigma_i(\Sigma_t^G)}-\sqrt{\sigma_i(\Sigma_t^L)}-\left(\sqrt{\sigma_i(\Sigma_{t+1}^G)}-\sqrt{\sigma_i(\Sigma_{t+1}^L)}\right)\right)\\
    &=\sum_{i=1}^d \left(\color{newred}\sqrt{\sigma_i(\Sigma_t^L)}\left(1+\frac{1}{\sqrt{1+\sigma_i(\Sigma_t^L)\Omega_{t,i}}}\right)-\sqrt{\sigma_i(\Sigma_{t+1}^L)}\left(1+\frac{1}{\sqrt{1+\sigma_i(\Sigma_{t+1}^L)\Omega_{t+1,i}}}\right)\color{black}\right)\\
&\quad \cdot\left(\color{C0}\sqrt{\sigma_i(\Sigma_t^L)}\left(\frac{1}{\sqrt{1+\sigma_i(\Sigma_t^L)\Omega_{t,i}}}-1\right)-\sqrt{\sigma_i(\Sigma_{t+1}^L)}\left(\frac{1}{\sqrt{1+\sigma_i(\Sigma_{t+1}^L)\Omega_{t+1,i}}}-1\right)\color{black}\right) \tag{*}
\end{align*} where $\Omega_{t,i}=\frac{n(n-1)\alpha_tv_t}{(\sigma_i+v_t)(\sigma_i+nv_t)}\geq 0$. Recall that all matrices commute, their eigenvalues are then ordered in the common eigen basis.
\paragraph{Behavior when $t\to1$}.
We will first show that the \color{newred}red \color{black} term is negative whatever the value of $\sigma_i$ and $\Omega_i$ from a specific time step $t_0$. To do so, we will consider the function of time $t\mapsto\sqrt{\sigma(\Sigma_t^L)}\left(1+\frac{1}{\sqrt{1+\sigma(\Sigma_t^L)\Omega_{t}}}\right)$ and show that it increases in time on $[t_0,1]$.

Recall that $\sigma(\Sigma_t^L)=\frac{\sigma+nv_t}{n+\sigma}$. Thus for $n\geq1$, the eigenvalues of the posterior ($\frac{\sigma}{n+\sigma}$) are lower than $1$ and $\sigma(\Sigma_t^L)$ is \textbf{increasing} over time from $\frac{\sigma}{n+\sigma}$ to $1$. Indeed
\begin{align*}
    \frac{\partial}{\partial t}{\sigma}(\Sigma_t^L)&=\frac{n\dot v_t}{n+\sigma}\geq 0 \ \text{since} \ \dot{v_t}\geq 0 \ \forall t
\end{align*}
Concerning the function $\Omega_t=\frac{N_t}{D_t}$ we note that :
\begin{align*}
    \frac{N_t}{D_t}&=\frac{n(n-1)\alpha_tv_t}{(\sigma+v_t)(\sigma+nv_t)}> 0 \ \forall t\in(0,1)\\
    \frac{N_1}{D_1}&=\frac{N_0}{D_0}=0 \ \text{since} \ v_0=0 \ \text{and} \  \alpha_1=0
\end{align*}
Let $\boxed{f(t):=\sigma(\Sigma_t^L)\Omega_t}$. $f$ is a continuous and derivable function of time over the closed interval $[0,1]$, it reaches a global maximum at some time point $t_0$. Let's show that $f$ is decreasing from this time point:.
\begin{align*}
    \frac{\partial}{\partial t}f(t)&=\frac{\partial}{\partial t}\left(\sigma(\Sigma_t^L)\Omega_t\right)\\
    &=\frac{\partial}{\partial t}\frac{n(n-1)\alpha_tv_t}{(n+\sigma)(v_t+\sigma)}\\
    &=\frac{n(n-1)}{n+\sigma}\frac{\partial}{\partial t}\frac{\alpha_tv_t}{v_t+\sigma}\\
    &=\frac{n(n-1)}{n+\sigma}\frac{(\dot{\alpha_t}v_t+\alpha_t\dot{v_t})(v_t+\sigma)-\dot{v_t}\alpha_tv_t}{(v_t+\sigma)^2}\\
    &=\frac{n(n-1)}{(n+\sigma)(v_t+\sigma)^2}\left[(\dot{\alpha_t}v_t-\alpha_t\dot{\alpha_t})(v_t+\sigma)+\dot{\alpha_t}\alpha_tv_t\right]\quad\text{since} \ \dot \alpha_t=-\dot v_t\\
    &=\frac{\dot{\alpha_t}n(n-1)}{(n+\sigma)(v_t+\sigma)^2}\left[(v_t-\alpha_t)(v_t+\sigma)+\alpha_tv_t\right]\\
    &=\underbrace{\frac{\dot{\alpha_t}n(n-1)}{(n+\sigma)(v_t+\sigma)^2}}_{\leq 0}\underbrace{\left[v_t^2+2\sigma v_t-\sigma\right]}_{R(v_t)}\quad \text{since} \ \alpha_t=1-v_t \ \text{and} \ \dot \alpha_t\leq 0\\
    \Delta_R&=4\sigma^2+4\sigma\geq0\\
    r_1&=-\sigma-\sqrt{\sigma(\sigma+1)}\leq 0 \ \text{and} \ r_2=-\sigma+\sqrt{\sigma(\sigma+1)}\in [0,\frac 12] \ \forall\sigma>0\\
    \Rightarrow& R(v_t)\geq 0 \ \text{for} \ v_t\geq r_2\\
\Rightarrow& \frac{\partial}{\partial t}f(t)\left\{
\begin{aligned}
&\leq0 \quad \text{if} \quad 1\geq v_t\geq r_2\\
&\geq 0 \quad \text{if} \quad 0\leq v_t\leq r_2
\end{aligned}
\right.
\end{align*}
As $v_t$ is strictly increasing there exists a bijection between $t\in[0,1]$ and $v_t\in[0,1]$, therefore there exists a time step $t_0$ such that $v_{t_0}=r_2$. $f(t)$ is \textbf{non negative} and thus \textbf{increases} on $[0,t_0]$ then \textbf{decreases} on $[t_0,1]$. Therefore, $t\mapsto \frac{1}{\sqrt{1+f(t)}}+1=\frac{1}{\sqrt{1+\sigma(\Sigma_t^L)\Omega_t}}+1$ is \textbf{increasing} and \textbf{non negative} on $[t_0,1]$.
We can then deduce that for $t\in[t_0,1]$:
\begin{align*}
    0\leq\frac{1}{\sqrt{1+\sigma(\Sigma_{t}^L)\Omega_{t}}}+1&\leq \frac{1}{\sqrt{1+\sigma(\Sigma_{t+1}^L)\Omega_{t+1}}}+1\\
    \sqrt{\sigma(\Sigma_{t}^L)}\left(\frac{1}{\sqrt{1+\sigma(\Sigma_{t}^L)\Omega_{t}}}+1\right)&\leq \sqrt{\sigma(\Sigma_{t}^L)}\left(\frac{1}{\sqrt{1+\sigma(\Sigma_{t+1}^L)\Omega_{t+1}}}+1\right)\quad \text{since} \ \sigma(\Sigma_{t}^L)\geq0 \ \forall t\\
    &\leq \sqrt{\sigma(\Sigma_{t+1}^L)}\left(\frac{1}{\sqrt{1+\sigma(\Sigma_{t+1}^L)\Omega_{t+1}}}+1\right)\\\
    &\quad \text{since} \ \sigma(\Sigma_{t+1}^L) \ \text{is increasing over} \ [0,1]
\end{align*}
Thus the \color{newred}red \color{black}term is \textbf{negative} whatever the item $i$ of the sum on the interval $[t_0,1]$.

Let's now study the \color{C0} blue \color{black}term. First recall that for $t\to1$, $\sigma(\Sigma_t^L)\to 1$ and $\Omega_t\to 0$. We can derive a first order expansion at a given point $t$ close to $1$:
\begin{align*}
\sqrt{\sigma(\Sigma_t^L)}\left(\frac{1}{\sqrt{1+\sigma(\Sigma_t^L)\Omega_{t}}}-1\right)&=\sqrt{\sigma(\Sigma_t^L)}\left(1-\frac 12 \sigma(\Sigma_t^L)\Omega_{t}-1+o(\sigma(\Sigma_t^L)\Omega_t)\right)\\
&=-\frac 12\sigma(\Sigma_t^L)^\frac{3}{2}\Omega_t +o(\sigma(\Sigma_t^L)^\frac 32\Omega_t)
\end{align*}
Locally around $t$ sufficiently close to $1$, the function $\sqrt{\sigma(\Sigma_t^L)}\left(\frac{1}{\sqrt{1+\sigma(\Sigma_t^L)\Omega_{t}}}-1\right)$ behaves like $\boxed{-\frac 12g(t):=-\frac 12\sigma(\Sigma_t^L)^\frac 32\Omega_t}$. Let's study $g$ for time point near $1$.
\begin{align*}
    g(t)&:=\sigma(\Sigma_t^L)^\frac 32\Omega_t\\
    &=\frac{n(n-1)\alpha_tv_t}{(\sigma+v_t)(\sigma+nv_t)}\frac{(\sigma+nv_t)^\frac 32}{(\sigma+n)^\frac 32}\\
    &=\frac{n(n-1)\alpha_tv_t\sqrt{\sigma+v_tn}}{(\sigma+v_t)(\sigma+n)^\frac 32}\\
    &=\underbrace{\frac{n(n-1)}{(\sigma+n)^\frac 32}}_{:=C\geq0}\frac{(\alpha_t-\alpha_t^2)\sqrt{\sigma+v_tn}}{\sigma+v_t}\\
    C^{-1}\frac{\partial}{\partial t}g(t)&=\frac{(\sigma+v_t)\left[(\dot{\alpha_t}-2\dot{\alpha_t}\alpha_t)\sqrt{\sigma+nv_t}+(\alpha_t-\alpha_t^2)\frac{n\dot{v_t}}{2\sqrt{\sigma+nv_t}}\right]-\dot{v_t}(\alpha_tv_t\sqrt{\sigma+nv_t})}{(\sigma+v_t)^2}\\
    &=\frac{1}{2\sqrt{\sigma+nv_t}(\sigma+v_t)^2}\left[2(\dot{\alpha_t}-2\dot{\alpha_t}\alpha_t)(\sigma+nv_t)(\sigma+v_t)+\alpha_tv_tn\dot{v_t}(\sigma+v_t)-2\alpha_tv_t\dot{v_t}(\sigma+nv_t)\right]\\
    &=\underbrace{\frac{\dot{\alpha_t}}{2\sqrt{\sigma+nv_t}(\sigma+v_t)^2}}_{\leq 0}\underbrace{\left[2(1-2\alpha_t)(\sigma+nv_t)(\sigma+v_t)-\alpha_tv_tn(\sigma+v_t)+2\alpha_tv_t(\sigma+nv_t)\right]}_{Q(v_t)}
    \end{align*}
    Let's study the polynmial $Q$ over time:
\begin{align*}
    Q(v_t)&=2(2v_t-1)(\sigma+nv_t)(\sigma+v_t)-v_t(1-v_t)n(\sigma+v_t)+2v_t(1-v_t)(\sigma+nv_t)\\
    &=3nv_t^3+v_t^2(5n\sigma+2\sigma-n)+v_t(4\sigma^2-3n\sigma)-2\sigma^2\\
    Q(0)&=-2\sigma^2\leq 0 \quad \text{and} \quad Q(1)=2n+2n\sigma+2\sigma^2+2\sigma\geq 0
\end{align*}
As $Q$ is a polynomial in $v_t$ of order $3$ it has at most $3$ roots. Moreover, with regard to its values at the extreme points of the interval $[0,1]$, $Q$ has exactly $1$ or $3$ roots in $[0,1]$. Let $q_3$ be the largest root in $[0,1]$ of $Q$. $Q$ is necessarily positive from this root, otherwise, there would exist a largest root in $[0,1]$. Let $t_1\in[0,1]$ such that $v_{t_1}=q_3$. Then, $Q$ is positive on $[t_1,1]$, i.e. the function $g$ is decreasing on $[t_1,1]$ and $g'(t)\neq 0$ on $(t_1,1)$. Thus, $t\mapsto -\frac 12 g(t)$ increases on $[t_1,1]$. So for $t$ sufficiently close to $1$, we have $t\geq t_1$ and the function $t\mapsto\sqrt{\sigma(\Sigma_t^L)}\left(\frac{1}{\sqrt{1+\sigma(\Sigma_t^L)\Omega_{t}}}-1\right)$ locally behaves like $t\mapsto -\frac 12 g(t)$ thus increases locally around $t$. As this reasoning is valid for any point in a neighborhood of $1$, denoted $[t_2,1]$, we can deduce that the latter function follows an increasing trend on $[t_2,1]$. Therefore, the \color{C0} blue \color{black} term is \textbf{negative} on $[t_2,1]$. This means that there exists a time point $\max(t_0,t_2)\leq 1$ such that all terms in the sum of equation $(*)$ are non negative. Therefore, \textbf{the Bures distance for Geffner's covariance matrices $d_B(\Sigma_t^G,\Sigma_{t+1}^G)$ is always greater than the Bures distance for Linhart's covariance matrices $d_B(\Sigma_t^L,\Sigma_{t+1}^L)$ as $t\to1$}. \\
 
\paragraph{Behavior when $t\to0$}
 Let's look at the function $\boxed{h(t)=\frac{\sigma(\Sigma_t^L)}{1+\sigma(\Sigma_t^L)\Omega_t}}$ on $[0,1]$.
 \begin{align*}
     h(t)&=\frac{\frac{\sigma+v_tn}{\sigma+n}}{1+\frac{n(n-1)\alpha_tv_t}{(\sigma+v_t)(\sigma+nv_t)}\frac{\sigma+v_tn}{\sigma+n}}\\
     &=\frac{(\sigma+v_tn)(\sigma+v_t)}{(\sigma+n)(\sigma+v_t)+n(n-1)\alpha_tv_t}\\
     &=\frac{\sigma^2+v_t\sigma+v_tn\sigma+nv_t^2}{\sigma^2+\sigma v_t+n\sigma+nv_t+n(n-1)\alpha_tv_t}\\
     \frac{\partial}{\partial t}h(t)&=\underbrace{\frac{\dot{v_t}}{...}}_{\geq 0}(\sigma+n\sigma+2nv_t)(\sigma^2+\sigma v_t+n\sigma+nv_t+n(n-1)\alpha_tv_t)\\
     &\quad -\underbrace{\frac{\dot{v_t}}{...}}_{\geq 0}(\sigma^2+v_t\sigma+v_tn\sigma+nv_t^2)(\sigma+n-n(n-1)v_t+n(n-1)\alpha_t)\\
     &=\underbrace{\frac{\dot{v_t}}{...}}_{\geq 0}(v_t^2n^3(1+\sigma)+2v_t\sigma n^2(1+\sigma)+n\sigma^2(1+\sigma))\\
     &\geq 0 \quad \forall t\in [0,1] \ \text{since} \ \sigma>0, n\geq 1, v_t\in[0,1]
 \end{align*}
 We voluntarily omit the denominator in the previous computations as it is always positive and does not change the sign of the derivative.
 The function $h$ is positive and increases over the interval $[0,1]$, as does $\sqrt{h(t)}$. As the function $\sigma(\Sigma_t^L)$ also increases over $[0,1]$, the \color{newred} red \color{black} term is always negative for any value $\sigma_i$ and $\Omega_i$ of the sum.
 Let's now look at the \color{C0} blue \color{black} term when $t\to 0$. We know that the function $f(t)=\sigma(\Sigma_t^L)\Omega_t$ increases on $[0,t_0]$ (see previous analysis when $t\to1$). Thus, the function $t\mapsto \frac{1}{\sqrt{1+\sigma(\Sigma_t^L)\Omega_t}}-1$ is decreasing and negative on $[0,t_0]$. We can then derive on $[0,t_0]$:
 \begin{align*}
     \frac{1}{\sqrt{1+\sigma(\Sigma_{t+1}^L)\Omega_{t+1}}}-1&\leq \frac{1}{\sqrt{1+\sigma(\Sigma_t^L)\Omega_t}}-1\leq 0\\
     \sqrt{\sigma(\Sigma_{t+1}^L)}\left(\frac{1}{\sqrt{1+\sigma(\Sigma_{t+1}^L)\Omega_{t+1}}}-1\right)&\leq \sqrt{\sigma(\Sigma_{t+1}^L)}\left(\frac{1}{\sqrt{1+\sigma(\Sigma_t^L)\Omega_t}}-1\right)\\
     &\quad \text{since} \ \sigma(\Sigma_{t+1}^L)\geq0 \ \forall t\\
     &\leq \sqrt{\sigma(\Sigma_{t}^L)}\left(\frac{1}{\sqrt{1+\sigma(\Sigma_t^L)\Omega_t}}-1\right)\\
     &\quad \text{since} \ \sigma(\Sigma_{t}^L) \ \text{increases on} \ [0,1]
 \end{align*}
 Then the \color{C0}blue \color{black} term is \textbf{positive} in the sum on $[0,t_0]$ whatever the value of $\sigma_i$ and $\Omega_i$. This means that all terms in the sum are negative for $t\in[0,t_0]$, i.e. \textbf{as $t\to0$, the Bures distance between Geffner's covariance matrices $d_B(\Sigma_t^G, \Sigma_{t+1}^G)$ is lower than the same distance between Linhart's matrices} $d_B(\Sigma_t^L, \Sigma_{t+1}^L)$.
 \subsubsection{Comparison of mean differences}
 \begin{lemma}[Comparison of mean differences]\label{lemma::compar_mean}
Let $\mu_t^L$ (resp. $\mu_t^G$) denote the mean of the intermediate density defined by Linhart (resp. Geffner) for a given time $t$. Then, we have
    \begin{align*}
        \text{As} \ t\to 0, \quad \Vert \mu_t^G-\mu_{t+1}^G\Vert^2_2&\leq \Vert \mu_t^L-\mu_{t+1}^L\Vert^2_2\\
        \text{As} \ t\to 1, \quad \Vert \mu_t^G-\mu_{t+1}^G\Vert^2_2&\geq \Vert \mu_t^L-\mu_{t+1}^L\Vert^2_2
    \end{align*}
\end{lemma}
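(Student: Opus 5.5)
The plan is to reduce everything to scalar functions of the noise level $v_t$, one per eigen-direction of $\Sigma$, and then compare derivatives of these functions at the two endpoints. As in Appendix~\ref{proof::link_cst}, the matrices $\Sigma$, $\Sigma_t$, $\Sigma_t^G$ and $\Sigma_t^L$ are all diagonal in a common orthonormal eigenbasis. Write $S=\sum_{i=1}^n x_i$ and let $S_j$ be its coordinate on the eigenvector associated with the eigenvalue $\sigma_j$ of $\Sigma$. Using the closed forms of Section~\ref{sec::analytical_densities}, the $j$-th coordinates of the means are $c^L(v_t,\sigma_j)S_j$ and $c^G(v_t,\sigma_j)S_j$, where
\begin{equation*}
c^L(v,\sigma)=\frac{\sqrt{1-v}}{n+\sigma},\qquad c^G(v,\sigma)=\frac{\sqrt{1-v}}{\sigma+n-(n-1)v}.
\end{equation*}
For $c^G$ we multiply the eigenvalues $\frac{\sigma+v_t}{\sigma+v_t+n\alpha_t}$ of $\Sigma_t^G$, $\frac{\sigma+1}{\sigma+v_t}$ of $\Sigma_t^{-1}$ and $\frac{1}{1+\sigma}$ of $(I_d+\Sigma)^{-1}$, then use $\sigma+v_t+n\alpha_t=\sigma+n-(n-1)v_t$. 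Consequently
\begin{equation*}
\Vert\mu_t-\mu_{t+1}\Vert_2^2=\sum_{j=1}^d\big(c(v_t,\sigma_j)-c(v_{t+1},\sigma_j)\big)^2S_j^2,
\end{equation*}
so it suffices to compare $|c^G(v_t)-c^G(v_{t+1})|$ with $|c^L(v_t)-c^L(v_{t+1})|$ coordinate by coordinate, for each fixed $\sigma=\sigma_j>0$.

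Next I differentiate in $v$. A direct computation gives
\begin{equation*}
\partial_vc^L=-\frac{1}{2\sqrt{1-v}(n+\sigma)},\qquad \partial_vc^G=\frac{(n-1)(1-v)-(1+\sigma)}{2\sqrt{1-v}\,\big(\sigma+n-(n-1)v\big)^2}.
\end{equation*}
At $v=0$ the ratio of magnitudes is $\frac{|\partial_vc^G|}{|\partial_vc^L|}=\frac{|n-2-\sigma|}{n+\sigma}$. This ratio is at most $1$, since $-(n+\sigma)<n-2-\sigma<n+\sigma$, and it is strictly below $1$ for $\sigma>0$. As $v\to1$ the ratio tends to $\frac{n+\sigma}{1+\sigma}\geq1$, with equality only when $n=1$, in which case the two schemes coincide.

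The step from derivatives to increments goes through the mean value theorem. Since $v_t$ is strictly increasing in $t$, for each coordinate
\begin{equation*}
c(v_t)-c(v_{t+1})=\partial_vc(\xi)\,(v_t-v_{t+1})
\end{equation*}
for some $\xi\in[v_t,v_{t+1}]$. The derivatives are continuous, so the ratio $|\partial_vc^G|/|\partial_vc^L|$ stays below $1$ on a neighbourhood $[0,\eta]$ of $0$ and above $1$ on a neighbourhood $[1-\eta,1)$ of $1$. The $\xi$ need not be the same for $c^G$ and $c^L$. To handle this I compare $\min_{[v_t,v_{t+1}]}|\partial_vc^L|$ against $\max_{[v_t,v_{t+1}]}|\partial_vc^G|$ near $0$, and the reverse near $1$. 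Both of these extrema converge to their endpoint values as the two consecutive levels shrink toward the endpoint, and the endpoint inequality is strict. Summing over $j$ with weights $S_j^2$ then gives both claimed inequalities.

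The main obstacle is the endpoint $v\to1$, where both derivatives blow up like $(1-v)^{-1/2}$. There I would work with the rescaled derivatives $\sqrt{1-v}\,\partial_vc^{G}$ and $\sqrt{1-v}\,\partial_vc^{L}$, which are continuous up to $v=1$. The common factor $(1-v)^{-1/2}$ is the same for both schemes at any given $\xi$, so comparing the rescaled derivatives still controls the ratio of the increments. A secondary point is the meaning of \emph{as $t\to0$}: I interpret it as both $t$ and its successor lying in the neighbourhoods above, as in Lemma~\ref{lemma::compar_bures}. I will also note that when $n=1$ or $\alpha_t v_t=0$ the inequality holds with equality.
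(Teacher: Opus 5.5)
Your scalar profiles $c^L(v,\sigma)$ and $c^G(v,\sigma)$ and both derivative formulas are correct. They are the paper's eigenvalues $\sigma(A_t)$ and $\sigma(B_t)$ divided by $\sigma$. Your min/max mean-value argument near $v=0$ is also valid, with one caveat: strictness of $|n-2-\sigma|<n+\sigma$ needs $n\geq 2$, not $\sigma>0$ (at $n=1$ the ratio equals $1$ because the schemes coincide).

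The gap is at $v\to1$, the step you yourself flag as the main obstacle. With separate intermediate points $\xi^G\neq\xi^L$ you need $\inf_{[v_t,v_{t+1}]}|\partial_v c^G|\geq\sup_{[v_t,v_{t+1})}|\partial_v c^L|=\frac{1}{2(n+\sigma)\sqrt{1-v_{t+1}}}$. The left side is at most $|\partial_v c^G(v_t)|\approx\frac{1}{2(1+\sigma)\sqrt{1-v_t}}$. This fails whenever $1-v_{t+1}$ is small relative to $1-v_t$, which closeness of both levels to $1$ does not rule out. It fails outright for the last pair of levels: $t_T=1$ gives $v_{t_T}=1$, so the supremum is infinite. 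Rescaling by $\sqrt{1-v}$ does not help, because $(1-v)^{-1/2}$ cancels between the two schemes only when both increments are taken at the \emph{same} point. With $\xi^G\neq\xi^L$, the factors $(1-\xi^G)^{-1/2}$ and $(1-\xi^L)^{-1/2}$ differ, and nothing in your argument controls their ratio.

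What is missing is a comparison at a single point. Apply Cauchy's mean value theorem to the pair $(c^G,c^L)$ on $[v_t,v_{t+1}]$. This is legitimate even when $v_{t+1}=1$, since both functions are continuous there and $\partial_v c^L\neq0$ on the open interval. It yields one $\xi$ with $\frac{c^G(v_{t+1})-c^G(v_t)}{c^L(v_{t+1})-c^L(v_t)}=\rho(\xi)$, where
\[
\rho(v)=\frac{\partial_v c^G}{\partial_v c^L}=\frac{(n+\sigma)\big(\sigma+2-n+(n-1)v\big)}{\big(\sigma+n-(n-1)v\big)^2}.
\]
This $\rho$ is continuous on $[0,1]$, with $|\rho(0)|<1$ and $\rho(1)=\frac{n+\sigma}{1+\sigma}>1$ for $n\geq2$, so continuity settles both endpoints at once. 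Alternatively, write $c(v_{t+1})-c(v_t)=\int_{v_t}^{v_{t+1}}\partial_v c(u)\,du$ (the singularity at $u=1$ is integrable) and integrate the pointwise inequality $|\partial_v c^G(u)|\geq|\partial_v c^L(u)|$, using that both derivatives are negative near $1$.

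This is the paper's argument in another form. Its $e(t)$ and $d(t)$ equal $(\sigma+n)(c^L-c^G)$ and $(\sigma+n)(c^L+c^G)$. Showing that $e$ increases on $[t_3,1]$ (i.e.\ $\rho>1$), and that $e$ or $d$ decreases near $0$ depending on the sign of $\partial_v c^G$ (i.e.\ $\rho<1$ or $\rho>-1$), compares the increments directly on explicit intervals with no mean-value step. Your derivative-ratio formulation handles both signs of $n-2-\sigma$ near $0$ uniformly, whereas the paper splits into cases.
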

Let $A_t=\sqrt{\alpha_t}(n\Sigma^{-1}+I_d)^{-1}$ and $B_t=\sqrt{\alpha_t}\Sigma_t^G\Sigma_t^{-1}(\Sigma^{-1}+I_d)^{-1}$ be respectively the matrices appearing in the computation of the mean differences derived in Section~\ref{sec::analytical_densities} and Appendix~\ref{proof::analyt_densities}.
\begin{align*}
    \Vert\mu_t^L-\mu_{t+1}^L\Vert^2-\Vert\mu_t^G-\mu_{t+1}^G\Vert^2&=\Vert (A_t-A_{t+1})\underbrace{\Sigma^{-1}\sum_{i=1}^n x_i}_{:=y}\Vert^2-\Vert (B_t-B_{t+1})\underbrace{\Sigma^{-1}\sum_{i=1}^n x_i}_{:=y}\Vert^2\\
    &=y^T(A_t-A_{t+1})^T(A_t-A_{t+1})y-y^T(B_t-B_{t+1})^T(B_t-B_{t+1})y\\
    &=y^T\left((A_t-A_{t+1})^T(A_t-A_{t+1})-(B_t-B_{t+1})^T(B_t-B_{t+1})\right)y
\end{align*}
We will show that the matrix $(A_t-A_{t+1})^T(A_t-A_{t+1})$ is greater (in the sense of eigenvalues comparison) than $(B_t-B_{t+1})^T(B_t-B_{t+1})$ when $t\to0$ and lower when $t\to 1$.
Recall that:
\begin{align*}
    \sigma(A_t)=\sqrt{\alpha_t}\frac{\sigma}{\sigma+n} \quad &\text{and} \quad \sigma(B_t)=\sqrt{\alpha_t}\frac{\sigma}{\sigma+n+v_t(1-n)}\\
\end{align*} These eigenvalues can be easily derived as all matrices commute and can be diagonalized in the same eigen basis. If we take the derivative with respect to time, we get:
\begin{align*}
    \frac{\partial}{\partial t}\sigma(A_t)&=\frac{\dot{\alpha_t}}{2\sqrt{\alpha_t}}\frac{\sigma}{\sigma+n}\leq 0 \ \forall t\quad \text{since} \ \dot \alpha_t\leq 0 \ \forall t\\
    \Leftrightarrow\ &A_t-A_{t+1}\succeq0 \ \forall t \ \text{(decreasing trend in time)}\\
    \frac{\partial}{\partial t}\sigma(B_t)&=\underbrace{\frac{\dot{\alpha_t}\sigma}{2\sqrt{\alpha_t}(\sigma+n+v_t(1-n))^2}}_{\leq 0 \ \forall t}(\sigma+2-n+v_t(n-1))\\
    &\leq0 \ \text{if} \ v_t\geq\frac{n-2-\sigma}{n-1}\\
    \Leftrightarrow& \left\{
\begin{array}{l}
B_t-B_{t+1}\preceq0 \ \text{if} \ 0\leq v_t\leq\max(0,\frac{n-2-\sigma}{n-1})\\
B_t-B_{t+1}\succeq0 \ \text{if} \ 1\geq v_t\geq\frac{n-2-\sigma}{n-1}
\end{array}\right. 
\end{align*}
Note that $\frac{n-2-\sigma}{n-1}\leq 1$ for all $n\geq1$ and $\sigma>0$.
In the following we will study the sign of $\sigma(A_t-A_{t+1})-\sigma(B_t-B_{t+1})$ over time to determine the behavior of the mean terms over time.\\

\paragraph{Behavior when $t\to1$}
In this case, $v_t\to1$, thus $A_t-A_{t+1}\succeq0$ and $B_t-B_{t+1}\succeq0$. As $A_t$ and $B_t$ commute, we have 
\begin{align*}
    \sigma(A_t-B_t) &= \sqrt{\alpha_t}\frac{\sigma}{\sigma+n}-\sqrt{\alpha_t}\frac{\alpha_t\sigma+v_t(\sigma+1)}{v_t(\sigma+1)+(1-n)\alpha_t\sigma+n\alpha_t(\sigma+1)}\frac{\sigma+1}{\alpha_t\sigma+v_t(\sigma+1)}\frac{\sigma}{\sigma+1}\\
    &=\sqrt{\alpha_t}\frac{\sigma}{\sigma+n}-\sqrt{\alpha_t}\frac{\sigma+v_t}{\sigma+v_t+n\alpha_t}\frac{\sigma}{\sigma+v_t}\\
    &=\sqrt{\alpha_t}\frac{\sigma}{\sigma+n}-\sqrt{\alpha_t}\frac{\sigma}{\sigma+v_t+n\alpha_t}\\
    &=\frac{\sigma}{\sigma+n}\underbrace{\sqrt{\alpha_t}\left(1-\frac{\sigma+n}{\sigma+v_t+n\alpha_t}\right)}_{:=e(t)}\\
    \frac{\partial}{\partial t}e(t)&=\frac{\dot{\alpha_t}}{2\sqrt{\alpha_t}}\left(1-\frac{\sigma+n}{\sigma+v_t+n\alpha_t}\right)+\sqrt{\alpha_t}(\sigma+n)\frac{\dot{v_t}+n\dot{\alpha_t}}{(\sigma+v_t+n\alpha_t)^2}\\
    &=\underbrace{\frac{\dot{\alpha_t}}{2\sqrt{\alpha_t}(\sigma+v_t+n\alpha_t)^2}}_{\leq 0}\left(\underbrace{(\sigma+v_t+n\alpha_t)^2-(\sigma+n)(\sigma+v_t+n\alpha_t)+2\alpha_t(\sigma+n)(-1+n)}_{S(v_t)}\right)\\
    S(v_t)&=v_t^2(n-1)^2+v_t(3n-3n^2+3\sigma-3n\sigma)+2n^2-2n+2n\sigma-2\sigma\\
    &=v_t^2(n-1)^2-3v_t(n-1)(\sigma+n)+2(n-1)(\sigma+n)\\
    &=(n-1)(v_t^2(n-1)-3v_t(\sigma+n)+2(\sigma+n))\\
    S(0)&=2(n-1)(\sigma+n)\geq 0\\
    S(1)&=(1+\sigma)(1-n)\leq 0
\end{align*}
$S$ is a polynomial of degree $2$ in $v_t$. Given the values reached by $S$ on the bounds of $[0,1]$, it has an unique root in this interval, denoted $r_0$. Let $t_3$ such that $v_{t_3}=r_0$. Then, $S$ is positive on $[0,r_0]$ and negative on $[r_0,1]$. Therefore, the function $e$ decreases on $[0,t_3]$, then increases on $[t_3,1]$. This is equivalent to say that :
\begin{align*}
&\left\{
\begin{array}{l}
\sigma(A_t-B_t)\geq \sigma (A_{t+1}-B_{t+1}) \quad \text{if} \ t\in[0,t_3]\\
\sigma(A_t-B_t)\leq \sigma(A_{t+1}-B_{t+1}) \quad \text{if} \ t\in[t_3,1]
\end{array}\right.\\
&\text{Since all matrices commute, this translates as per}\\
\Leftrightarrow \ &\left\{
\begin{array}{l}
\sigma(A_t-A_{t+1})\geq \sigma (B_{t}-B_{t+1}) \quad \text{if} \ t\in[0,t_3]\\
\sigma(A_t-A_{t+1})\leq \sigma(B_{t}-B_{t+1}) \quad \text{if} \ t\in[t_3,1]
\end{array}\right.
\end{align*}
Thus when $t\in [\max(t_3,\frac{n-2-\sigma}{n-1}),1]$, we have $0\leq \sigma(A_t-A_{t+1})\leq \sigma(B_t-B_{t+1})$, therefore
\begin{align*}
    0\leq \underbrace{\sigma((A_t-A_{t+1})^T(A_t-A_{t+1}))}_{=(\sigma(A_t-A_{t+1}))^2}&\leq \underbrace{\sigma((B_t-B_{t+1})^T(B_t-B_{t+1}))}_{=(\sigma(B_t-B_{t+1}))^2}\\
    \Leftrightarrow \ 0\preceq(A_t-A_{t+1})^T(A_t-A_{t+1})&\preceq (B_t-B_{t+1})^T(B_t-B_{t+1})\\
    \Leftrightarrow \ \Vert\mu_t^L-\mu_{t+1}^L\Vert^2&\leq \Vert\mu_t^G-\mu_{t+1}^G\Vert^2
\end{align*}
\textbf{In the neighborhood of $t=1$, Linhart's bridging densities always have a lower mean difference $\Vert \mu_t^L-\mu_{t+1}^L\Vert^2_2$ than Geffner's counterparts} $\Vert \mu_t^G-\mu_{t+1}^G\Vert^2_2$ .\\

\paragraph{Behavior when $t\to0$}
We consider the time steps $t$ such that $0\leq v_t\leq \max(0,\frac{n-2-\sigma}{n-1})$. Let's denote this time interval $[0,\tilde t_4]$. In this case, $A_t-A_{t+1}\succeq0$ and $B_t-B_{t+1}\preceq0$. Note that if $\max(0,\frac{n-2-\sigma}{n-1})=0$, then we turn back to the previous paragraph and study the case when $t\in[0,t_3]$. We have
\begin{equation*}
    |\sigma(A_t-A_{t+1})|-|\sigma(B_t-B_{t+1})|=\sigma(A_t-A_{t+1})+\sigma(B_t-B_{t+1})
\end{equation*}
Let's study the function of time $t\mapsto \sigma(A_t+B_t)$ and show that it decreases in the neighborhood of $t=0$.
\begin{align*}
    \sigma(A_t+B_t)&=\sqrt{\alpha_t}\frac{\sigma}{\sigma+n}+\sqrt{\alpha_t}\frac{\sigma}{\sigma+v_t+n\alpha_t}\\
    &=\frac{\sigma}{\sigma+n}\underbrace{\sqrt{\alpha_t}\left(1+\frac{\sigma+n}{\sigma+v_t+n\alpha_t}\right)}_{=d(t)}\geq 0 \ \forall t\\
    \frac{\partial}{\partial t}d(t)&=\frac{\dot{\alpha_t}}{2\sqrt{\alpha_t}}\left(1+\frac{\sigma+n}{\sigma+v_t+n\alpha_t}\right)-\sqrt{\alpha_t}(\sigma+n)\frac{\dot{v_t}+n\dot{\alpha_t}}{(\sigma+v_t+n\alpha_t)^2}\\
    &=\frac{\dot{\alpha_t}}{2\sqrt{\alpha_t}}\left(1+\frac{\sigma+n}{\sigma+v_t+n\alpha_t}-2\alpha_t(\sigma+n)\frac{-1+n}{(\sigma+v_t+n\alpha_t)^2}\right)\\
    &=\frac{\dot{\alpha_t}}{2\sqrt{\alpha_t}(\sigma+v_t+n\alpha_t)^2}\left((\sigma+v_t+n\alpha_t)^2+(\sigma+n)(\sigma+v_t+n\alpha_t)-2\alpha_t(n-1)(\sigma+n)\right)\\
    &=\underbrace{\frac{\dot{\alpha_t}}{2\sqrt{\alpha_t}(\sigma+v_t+n\alpha_t)^2}}_{\leq 0}(\underbrace{v_t^2(n-1)^2+v_t(\sigma+n)(1-n)+2(\sigma+n)(\sigma+1)}_{T(v_t)})\\
    T(0)&=2(\sigma+n)(\sigma+1)\geq 0\\
    T(1)&=1+3\sigma+n+n\sigma+2\sigma^2\geq 0\\
    \Delta_T&=(n-1)^2(\sigma+n)(n-7\sigma-8)
\end{align*}
$T$ is a polynomial of degree $2$ in $v_t$. If $n-7\sigma-8\leq 0$, then $T$ has at most one root and stays positive on $[0,1]$. Otherwise, there exists $2$ roots $r_1, r_2$; either both belong to the interval $[0,1]$, or neither does. In the second case, $T$ remains positive over $[0,1]$. In the first case, $T$ is positive for $0\leq v_t\leq r_1$ and $1\geq v_t\geq r_2$. In both cases, if $r_1\geq 0$, $T$ is positive on $[0,\min(r_1,1)]$. So there exists a time steps $t_4$ such that $v_{t_4}=\min(r_1,1)$ and the function $d$ is decreasing on $[0,t_4]$. This is equivalent to say that 
\begin{align*}
    \sigma(A_t+B_t)&\geq \sigma(A_{t+1}+B_{t+1})\geq 0 \ \text{if} \ t\in[0,t_4]\\
    \Leftrightarrow \ \sigma(A_t-A_{t+1})&\geq \sigma(B_{t+1}-B_t)\geq 0 \quad \text{since all matrices commute}\\
    \Leftrightarrow \ |\sigma(A_t-A_{t+1})|&\geq |\sigma(B_t-B_{t+1})|\geq 0 \ \text{if} \ t\in[0,\min(t_4,\tilde t_4)]\\
    \Leftrightarrow \ \underbrace{\sigma((A_t-A_{t+1})^T(A_t-A_{t+1}))}_{=|\sigma(A_t-A_{t+1})|^2}&\geq \underbrace{\sigma((B_t-B_{t+1})^T(B_t-B_{t+1}))}_{=|\sigma(B_t-B_{t+1})|^2}\geq 0 \ \text{if} \ t\in[0,\min(t_4,\tilde t_4)]\\
    \Leftrightarrow \ \Vert \mu_t^L-\mu_{t+1}^L\Vert^2&\geq\Vert \mu_t^G-\mu_{t+1}^G\Vert^2
\end{align*}
\textbf{In the neighborhood of $t=0$, Linhart's bridging densities always have a greater mean difference $\Vert \mu_t^L-\mu_{t+1}^L\Vert^2_2$ than Geffner's counterparts} $\Vert \mu_t^G-\mu_{t+1}^G\Vert^2_2$.
\subsubsection{Final derivation}\label{proof::final_compar_wass}
We first state again the proposition before deriving its proof. The proof uses the same notation convention as in the previous subsections for readability concerns: $\pi_{t+1}$ will denote $\pi_{t_{p+1}}$ and $\pi_t$ stands for $\pi_{t_p}$, for one given $p=0,\ldots,T$.
\CompWass*
\textit{Proof}:\\
\underline{Behavior when $t\to 1$}:

From Lemma~\ref{lemma::compar_bures}, there exists a time step $\max(t_0,t_2)\leq 1$ such that $d_B(\Sigma_t^G,\Sigma_{t+1}^G)\geq d_B(\Sigma_t^L,\Sigma_{t+1}^L)$ on $[\max(t_0,t_2),1]$.\\
From Lemma~\ref{lemma::compar_mean}, there exists a time step $t_3\leq 1$ such that $\Vert \mu_t^G - \mu_{t+1}^G\Vert^2_2\geq\Vert \mu_t^L - \mu_{t+1}^L\Vert^2_2$ on $[t_3,1]$.
Therefore, as we work with Gaussian distributions, for $t\in [\max(t_0,t_2,t_3),1]$, $\mathcal{W}_2^2(\pi_t^G,\pi_{t+1}^G)\geq \mathcal{W}_2^2(\pi_t^L,\pi_{t+1}^L)$.

\underline{Behavior when $t\to 0$}: 

From Lemma~\ref{lemma::compar_bures}, there exists a time step $0\leq t_0\leq 1$ such that $d_B(\Sigma_t^G,\Sigma_{t+1}^G)\leq d_B(\Sigma_t^L,\Sigma_{t+1}^L)$ on $[0,t_0]$.\\
From Lemma~\ref{lemma::compar_mean}, there exists a time step $0\leq \min (\tilde t_4, t_4)\leq 1$ such that $\Vert \mu_t^G - \mu_{t+1}^G\Vert^2_2\leq\Vert \mu_t^L - \mu_{t+1}^L\Vert^2_2$ on $[0, \min (\tilde t_4, t_4)]$.\\
Therefore, as we work with Gaussian distributions, for $t\in [0,\min(t_0,\tilde t_4,t_4)]$, $\mathcal{W}_2^2(\pi_t^G,\pi_{t+1}^G)\leq \mathcal{W}_2^2(\pi_t^L,\pi_{t+1}^L)$.

When $t\to0$ and $t\to1$, the choice of the number of steps prescribed in Section~\ref{sec::fine_tune} mainly depends on the intermediate Wasserstein distances, as the log-concavity constants $m_t^G$ and $m_t^L$ tend to have similar values in these regimes. Suppose that we choose the same step size for both algorithms, denoted $h_t$. Let's now show that the number of Langevin steps follows the growth trend of the Wasserstein distance between bridging densities. When $t\to0$, then $0\leq \mathcal{W}_2(\pi_t^G,\pi_{t+1}^G)\leq \mathcal{W}_2(\pi_t^L,\pi_{t+1}^L)$ and we have
\begin{align*}
0\leq \frac 1 2 \frac{\gamma}{\gamma+\mathcal{W}_2(\pi_{t+1}^L,\pi_t^L)}&\leq \frac 1 2 \frac{\gamma}{\gamma+\mathcal{W}_2(\pi_{t+1}^G,\pi_t^G)}\\
\Leftrightarrow \log\left(\frac 1 2 \frac{\gamma}{\gamma+\mathcal{W}_2(\pi_{t+1}^L,\pi_t^L)}\right)&\leq \log\left(\frac 1 2 \frac{\gamma}{\gamma+\mathcal{W}_2(\pi_{t+1}^G,\pi_t^G)}\right)\leq 0 \\
  \Leftrightarrow \log\left(\frac 1 2 \frac{\gamma}{\gamma+\mathcal{W}_2(\pi_{t+1}^L,\pi_t^L)}\right)\underbrace{\frac{1}{\log(1-m_t^Gh_t)}}_{\leq 0}&\geq \log\left(\frac 1 2 \frac{\gamma}{\gamma+\mathcal{W}_2(\pi_{t+1}^G,\pi_t^G)}\right)\underbrace{\frac{1}{\log(1-m_t^Gh_t)}}_{\leq 0}\geq 0
\end{align*}
Note that $m_t^G\geq m_t^L$ for all time $t$. Thus,
\begin{align*}
    0\leq 1-m_t^Gh_t&\leq 1-m_t^Lh_t\leq 1\\
    0\geq \frac{1}{\log(1-m_t^Gh_t)}&\geq \frac{1}{\log(1-m_t^Lh_t)} 
\end{align*}
Thus, we get
\begin{align*}
    \underbrace{\log\left(\frac 1 2 \frac{\gamma}{\gamma+\mathcal{W}_2(\pi_{t+1}^L,\pi_t^L)}\right)}_{\leq 0}\frac{1}{\log(1-m_t^Lh_t)}&\geq \underbrace{\log\left(\frac 1 2 \frac{\gamma}{\gamma+\mathcal{W}_2(\pi_{t+1}^L,\pi_t^L)}\right)}_{\leq 0}\frac{1}{\log(1-m_t^Gh_t)}\geq 0\\
    \Rightarrow \log\left(\frac 1 2 \frac{\gamma}{\gamma+\mathcal{W}_2(\pi_{t+1}^L,\pi_t^L)}\right)\frac{1}{\log(1-m_t^Lh_t)}&\geq \log\left(\frac 1 2 \frac{\gamma}{\gamma+\mathcal{W}_2(\pi_{t+1}^G,\pi_t^G)}\right)\frac{1}{\log(1-m_t^Gh_t)}\\
    \Rightarrow k_t^L\geq k_t^G
\end{align*}
Then, when $t\to1$, both log-concavity constants tend to $1$, so consider $m_t=\min(m_t^G,m_t^L)$ so that the choice of the number of steps only depends on the intermediate Wasserstein distances. As $\mathcal{W}_2(\pi_t^G,\pi_{t+1}^G)\geq \mathcal{W}_2(\pi_t^L,\pi_{t+1}^L)$ when $t\to 1$, then we have :
\begin{align*}
    \log\left(\frac 1 2 \frac{\gamma}{\gamma+\mathcal{W}_2(\pi_{t+1}^L,\pi_t^L)}\right)\underbrace{\frac{1}{\log(1-m_th_t)}}_{\leq 0}&\leq \log\left(\frac 1 2 \frac{\gamma}{\gamma+\mathcal{W}_2(\pi_{t+1}^G,\pi_t^G)}\right)\underbrace{\frac{1}{\log(1-m_th_t)}}_{\leq 0}\\
    \Rightarrow k_t^G\geq k_t^L
\end{align*}
\newpage
\subsection{Proof of Lemma~\ref{lemma::composit_error}}\label{proof::compositional_error}
Let $s_\phi$ (resp. $s_\lambda$) be an amortized neural-based approximation of the individual posteriors $p(\theta\mid x_i)$ (resp. the prior $\lambda(\theta)$). Assume that $\mathrm{E}_\theta\left[\Vert \nabla_{\theta}\log \lambda_t(\theta)-s_\lambda(\theta,t)\Vert^2\right]\leq \epsilon_{\text{DSM},\lambda}^2$ and $\mathrm{E}_\theta\left[\Vert \nabla_{\theta}\log p_t(\theta\mid x_i)-s_\phi(\theta,x_i,t)\Vert^2\right]\leq \epsilon_{\text{DSM}}^2$ for all $i=1,\ldots,n$ and a given $t\in[0,1]$. We denote by $\tilde s_t^G$, the composite score defined in Equation~\ref{eq::geffner_composite_score}, where individual posterior scores are inexact. Then, we have:
\begin{align*}
    \mathrm{E}_{\theta}\Big(
    \big\Vert \nabla_{\theta}&\log \pi_t(\theta\mid x_{1:n})
    -\tilde s_{t}^G(\theta\mid x_{1:n})\big\Vert^2\Big)
    \\
    &=
    \mathrm{E}_{\theta}\Bigg(
    \bigg\Vert
    (1-n)\left(\nabla_{\theta}\log \lambda_t(\theta)-s_{\lambda}(\theta,t)\right)
    +\sum_{i=1}^n
    \left(
    \nabla_{\theta}\log p_t(\theta\mid x_i)-s_\phi(\theta,x_i,t)
    \right)
    \bigg\Vert^2
    \Bigg)\\
    &=
    \mathrm{E}_{\theta}
    \big\Vert
    (1-n)(\nabla_{\theta}\log \lambda_t(\theta)-s_{\lambda}(\theta,t))
    \big\Vert^2
    +\sum_{i=1}^n
    \mathrm{E}_{\theta}
    \big\Vert
    \nabla_{\theta}\log p_t(\theta\mid x_i)-s_\phi(\theta,x_i,t)
    \big\Vert^2\\
    &\quad
    +2\mathrm{E}_{\theta}
    \bigg\langle
    (1-n)(\nabla_{\theta}\log \lambda_t(\theta)-s_{\lambda}(\theta,t)),
    \sum_{i=1}^n
    \left(
    \nabla_{\theta}\log p_t(\theta\mid x_i)-s_\phi(\theta,x_i,t)
    \right)
    \bigg\rangle\\
    &\quad
    +\mathrm{E}_{\theta}
    \sum_{i\neq j}
    \big\langle
    \nabla_{\theta}\log p_t(\theta\mid x_i)-s_\phi(\theta,x_i,t),
    \nabla_{\theta}\log p_t(\theta\mid x_j)-s_\phi(\theta,x_j,t)
    \big\rangle\\
    &\leq
    (1-n)^2
    \mathrm{E}_{\theta}
    \big\Vert
    \nabla_{\theta}\log \lambda_t(\theta)-s_{\lambda}(\theta,t)
    \big\Vert^2
    +\sum_{i=1}^n
    \mathrm{E}_{\theta}
    \big\Vert
    \nabla_{\theta}\log p_t(\theta\mid x_i)-s_\phi(\theta,x_i,t)
    \big\Vert^2\\
    &\quad
    +2(n-1)\sum_{i=1}^n
    \mathrm{E}_{\theta}
    \big\Vert
    \nabla_{\theta}\log \lambda_t(\theta)-s_{\lambda}(\theta,t)
    \big\Vert
    \big\Vert
    \nabla_{\theta}\log p_t(\theta\mid x_i)-s_\phi(\theta,x_i,t)
    \big\Vert\\
    &\quad
    +\sum_{i\neq j}
    \mathrm{E}_{\theta}
    \big\Vert
    \nabla_{\theta}\log p_t(\theta\mid x_i)-s_\phi(\theta,x_i,t)
    \big\Vert
    \big\Vert
    \nabla_{\theta}\log p_t(\theta\mid x_j)-s_\phi(\theta,x_j,t)
    \big\Vert\\
    &\quad \text{by Cauchy--Schwarz and } n\geq 1\\
    &\leq
    (1-n)^2\epsilon_{\mathrm{DSM},\lambda}^2
    +\sum_{i=1}^n \epsilon_{\mathrm{DSM},i}^2\\
    &\quad
    +2(n-1)\sum_{i=1}^n
    \sqrt{
    \mathrm{E}_{\theta}
    \big\Vert
    \nabla_{\theta}\log \lambda_t(\theta)-s_{\lambda}(\theta,t)
    \big\Vert^2
    }
    \sqrt{
    \mathrm{E}_{\theta}
    \big\Vert
    \nabla_{\theta}\log p_t(\theta\mid x_i)-s_\phi(\theta,x_i,t)
    \big\Vert^2
    }\\
    &\quad
    +\sum_{i\neq j}
    \sqrt{
    \mathrm{E}_{\theta}
    \big\Vert
    \nabla_{\theta}\log p_t(\theta\mid x_i)-s_\phi(\theta,x_i,t)
    \big\Vert^2
    }
    \sqrt{
    \mathrm{E}_{\theta}
    \big\Vert
    \nabla_{\theta}\log p_t(\theta\mid x_j)-s_\phi(\theta,x_j,t)
    \big\Vert^2
    }\\
    &\quad \text{by Cauchy--Schwarz}\\
    &\leq
    (1-n)^2\epsilon_{\mathrm{DSM},\lambda}^2
    +\sum_{i=1}^n \epsilon_{\mathrm{DSM},i}^2
    +2(n-1)\epsilon_{\mathrm{DSM},\lambda}
    \sum_{i=1}^n\epsilon_{\mathrm{DSM},i}
    +\sum_{i\neq j}\epsilon_{\mathrm{DSM},i}\epsilon_{\mathrm{DSM},j}\\
    &=
    \Big(
    (n-1)\epsilon_{\mathrm{DSM},\lambda}
    +\sum_{i=1}^n\epsilon_{\mathrm{DSM},i}
    \Big)^2\\
    &\leq
    \Big(
    (n-1)\epsilon_{\mathrm{DSM},\lambda}
    +n\epsilon_{\mathrm{DSM}}
    \Big)^2
    \quad
    \text{if } \epsilon_{\mathrm{DSM},i}\leq \epsilon_{\mathrm{DSM}} \ \forall i .
\end{align*}
\subsection{Analytical derivation of prior and individual posterior scores}\label{proof::analytical_scores}
\paragraph{Multivariate Gaussian}
The analytical formulas of individual Gaussian posteriors diffused along the VP process are given in Appendix~\ref{proof::analyt_densities}. This allows us to derive analytical scores of diffused individual posteriors:
\begin{equation*}
    \nabla_\theta\log p_t(\theta\mid x_i)=-(\alpha_t\Sigma_\text{post}+v_tI_d)^{-1}(\theta-\sqrt{\alpha_t}\mu_\text{post}(x_i)) \quad \forall t\in[0,1],
\end{equation*} where $\Sigma_\text{post}$ and $\mu_\text{post}(x_i)$ are explicitly defined in Appendix~\ref{proof::analyt_densities}.
For Log-Normal priors, we first apply a log-transform on the samples before computing the score: therefore, it amounts to deriving the score of a Gaussian distribution along the VP process.
\paragraph{GMM prior} The prior is a mixture of Gaussians of the form $\lambda(\theta)=\frac{1}{2}\mathcal{N}(\theta;\mu_1,\sigma_1^2I_d)+\frac{1}{2}\mathcal{N}(\theta;\mu_2,\sigma_2^2I_d)$ and the likelihood is $p(x\mid \theta)=\mathcal{N}(x;\theta,\Sigma)$. Then, the score of the diffused prior along the VP process is equal to:
\begin{align*}
    \nabla_\theta\log \lambda_t(\theta)&=-\omega_{t,1}(\theta)\Sigma_{t,1}^{-1}(\theta-\sqrt{\alpha_t}\mu_1)-\omega_{t,2}(\theta)\Sigma_{t,2}^{-1}(\theta-\sqrt{\alpha_t}\mu_2)\\
    \text{where} \ \Sigma_{t,j}&=(\alpha_t\sigma_j^2+v_t)I_d \quad \forall j=1,2\\
    \text{and} \ \tilde \omega_{t,j}(\theta)&=\frac 12\mathcal{N}(\theta;\sqrt{\alpha_t}\mu_j, \Sigma_{t,j}), \quad \omega_{t,j}(\theta)=\tilde\omega_{t,j}(\theta)/\sum_{k=1}^2\tilde \omega_{t,k}(\theta)
\end{align*}The corresponding posterior is a mixture of 2 Gaussians:
\begin{equation*}
    p(\theta\mid x)=\omega_{\text{post},1}(x)\mathcal{N}(\mu_{\text{post},1},\Sigma_{\text{post},1})+\omega_{\text{post},2}(x)\mathcal{N}(\mu_{\text{post},2},\Sigma_{\text{post},2})
\end{equation*}where 
\begin{align*}
    \Sigma_{\text{post},j}&=(\Sigma^{-1}+\frac{1}{\sigma_j^2}I_d)^{-1}\\
    \mu_{\text{post},j}&=\Sigma_{\text{post},j}(\mu_j/\sigma_j^2+\Sigma^{-1}x)\\
    \tilde \omega_{\text{post},j}(x) &= \frac 12\mathcal{N}(x;\mu_j, \sigma_j^2I_d+\Sigma)\\
    \omega_{\text{post},j}(x)&=\tilde \omega_{\text{post},j}(x)/\sum_{k=1}^2\tilde \omega_{\text{post},k}(x) \quad \forall j=1,2.
\end{align*} We have the analytical formula of the score of the diffused posterior along the VP scheme:
\begin{equation*}
    \nabla_\theta \log p_t(\theta\mid x)=-\omega_{\text{post}, t,1}(\theta)\Sigma_{\text{post},t,1}^{-1}(\theta-\sqrt{\alpha_t}\mu_{\text{post},1})-\omega_{\text{post},t,2}(\theta)\Sigma_{\text{post},t,2}^{-1}(\theta-\sqrt{\alpha_t}\mu_{\text{post},2})
\end{equation*} where 
\begin{align*}
    \Sigma_{\text{post},t,j}&=(\alpha_t\Sigma_{\text{post},j}+v_tI_d)\\
    \tilde \omega_{\text{post},t,j}(\theta)&=\omega_{\text{post},j}(x)\mathcal{N}(\theta;\sqrt{\alpha_t}\mu_{\text{post},j},\alpha_t\Sigma_{\text{post},j}+v_tI_d)\\
    \omega_{\text{post},t,j}(\theta)&=\tilde \omega_{\text{post},t,j}(\theta)/\sum_{k=1}^2 \omega_{\text{post},t,k}(\theta).
\end{align*}
\paragraph{GMM likelihood} We consider a standard Gaussian prior $\lambda(\theta)=\mathcal{N}(0,I_d)$ and a mixture of two Gaussians as likelihood of the form $p(x\mid \theta)=\frac 12 \mathcal{N}(x;\theta,1/9\Sigma)+\frac 12 \mathcal{N}(x;\theta,2.25\Sigma)$ and $\Sigma$ is a diagonal matrix with values increasing linearly between $0.6$ and $1.4$. Let $\Sigma_1=2.25\Sigma$ and $\sigma_2=1/9\Sigma$. The individual posterior is a mixture of two Gaussians of the form:
\begin{align*}
    p(\theta\mid x)&=\omega_1(x)\mathcal{N}(\theta;\mu_1,\Sigma_{1,p})+\omega_2(x)\mathcal{N}(\theta;\mu_2,\Sigma_{2,p})\\
    \text{where} \quad \Sigma_{j,p}&=(\Sigma_j^{-1}+I_d)^{-1}\\
    \mu_{j}&=\Sigma_{j,p}\Sigma_j^{-1}x\\
    \tilde \omega_j(x)&=\frac 12 \mathcal{N}(x;0,I_d+\Sigma_j)\\
    \omega_j(x)&=\tilde\omega_j(x)/\sum_{k=1}^2\tilde \omega_k(x)
\end{align*} The score of the diffused individual posterior reads as follows:
\begin{align*}
    \nabla_\theta\log p_t(\theta\mid x)&=-\omega_{t,1}(\theta)(\alpha_t\Sigma_{1,p}+v_tI_d)^{-1}(\theta-\sqrt{\alpha_t}\mu_1)-\omega_{t,2}(\theta)(\alpha_t\Sigma_{2,p}+v_tI_d)^{-1}(\theta-\sqrt{\alpha_t}\mu_2)\\
    \text{where} \quad \tilde \omega_{t,j}(\theta)&=\omega_j(x)\mathcal{N}(\theta;\sqrt{\alpha_t}\mu_j,\alpha_t\Sigma_{j,p}+v_tI_d)\\
    \omega_{t,j}(\theta)&=\tilde \omega_{t,j}(\theta)/\sum_{k=1}^2 \tilde \omega_{t,k}(\theta)
\end{align*}
\newpage
\section{Empirical results and technical details}
\subsection{Empirical validation of the Wasserstein error - Gaussian setting}\label{app::empirical_validation_wass}
We provide here the final Wasserstein errors obtained with the prescribed hyperparameters (step sizes and number of steps per level) for the $d$-dimensional Gaussian case. Figure~\ref{fig:wass_validation_gaussian} provides an empirical validation of the final Wasserstein error obtained with a properly-tuned annealed Langevin algorithm for different dimensions $d$ and numbers of conditional observations $n$.
\begin{figure}[H]
    \centering
    \includegraphics[width=0.8\linewidth]{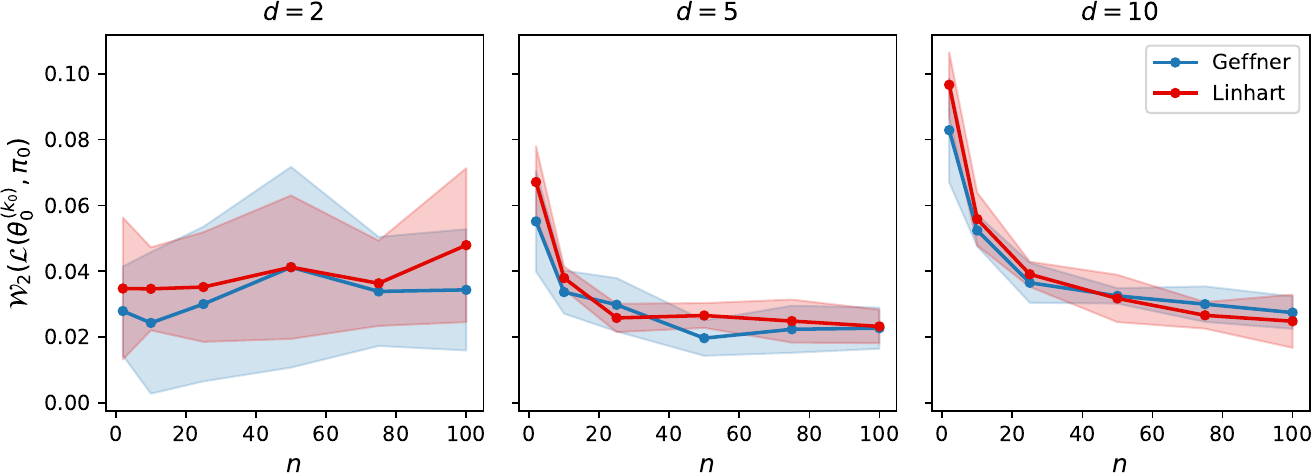}
    \caption{\small Mean final Wasserstein errors obtained in the Gaussian setting with annealed Langevin dynamics and prescribed hyperparameters from Section~\ref{sec::fine_tune}. Blue (resp. red) curves stand for Geffner (resp. Linhart) compositional score used in the sampling algorithm. We use $T=10$ annealing levels, $\gamma=0.5$ and $\omega=0.5$ (resp. $0.8$) for $d=2,5$ (resp. $d=10$). The empirical Wasserstein error indeed stays below the fixed threshold $\gamma$ whatever the dimension $d$ and the number of conditional observations $n$. As theoretically expected, it is also similar for both choices of compositional scores. The error does not vary a lot whatever the dimension $d$. Mean and std are computed over $5$ seeds.}
    \label{fig:wass_validation_gaussian}
\end{figure}
\newpage
\subsection{Comparison of the number of steps - Gaussian setting}\label{app::additional_figure_gaussian_case} In the Gaussian setting, we test our results for several parameter dimensions $d=2,5,10$. The figure below shows the evolution of the number of steps over time. We observe that there are two regimes separated by a switch point $\tilde t$: on $[0,\tilde t]$, $k_t^L\geq k_t^G$ but this regime is very short; on $[\tilde t,1]$, $k_t^L\leq k_t^G$ and this regime covers most of the annealing levels.
\begin{figure}[H]
    \centering
    \includegraphics[width=\linewidth]{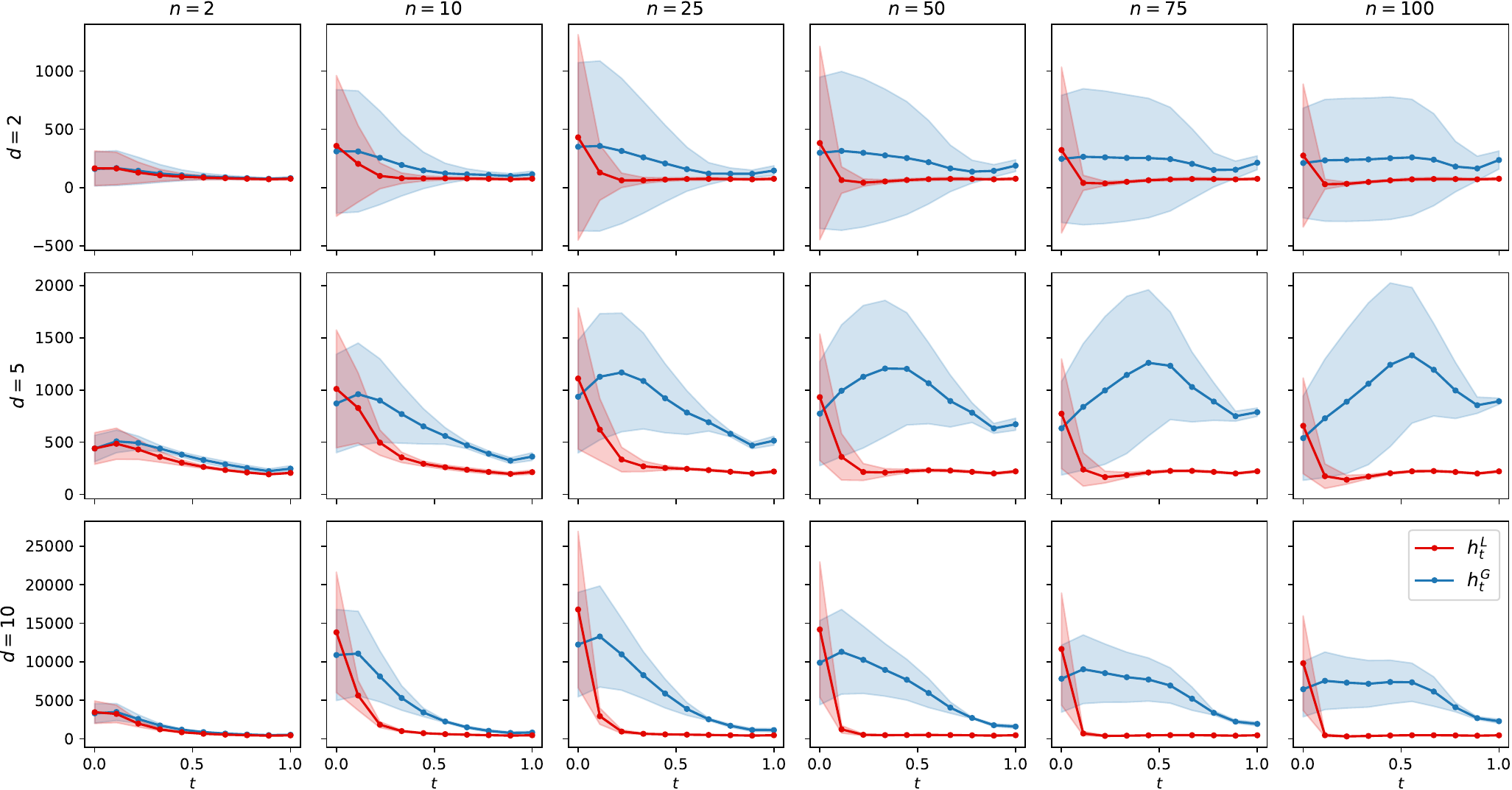}
    \caption{\small Evolution of $k_t^G$ and $k_t^L$ over time for the Gaussian case. The switch point $\tilde t$ (where blue and red curves cross) is unique and often appears from the second annealing level near $t=0$, whatever the dimension of the space $d$ or the number of conditional observations $n$. Mean and std are computed over $5$ seeds, each model differs in its likelihood covariance matrix.}
    \label{}
\end{figure}
The following figure shows one example in dimension $2$, where the switch point $\tilde t$ appears later than the second annealing level: in this case, $k_t^L\geq k_t^G$ for more annealing levels than in most cases. Note that this behavior becomes even more rare when the dimension increases.
\begin{figure}[H]
    \centering
    \includegraphics[width=\linewidth]{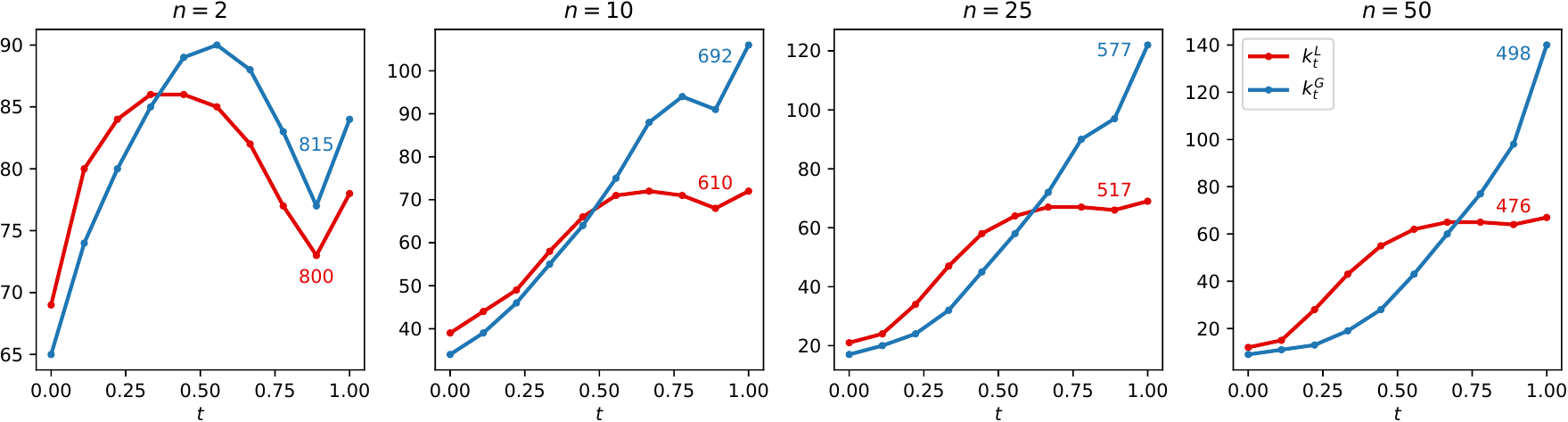}
    \caption{\small Evolution of $k_t^G$ and $k_t^L$ over time: Example of switch point appearing later when $d=2$. The colored figures on the right of each subplot correspond to the total number of steps $\sum_{p=0}^Tk_{t_p}$. In this specific case, blue and red curves cross later than the second annealing level. However, the complexity (in terms of total number of steps) remains lower when choosing Linhart's composite score than that of Geffner.}
    \label{fig:exception_dim_2_gaussian}
\end{figure}
\newpage
\subsection{Additional figures - GMM Prior}\label{app::gmm_prior_additional_figures}
For this specific task, the goal is to deviate from Gaussianity in a controlled manner. The prior is defined as a $2$-dimensional mixture of two Gaussians: $\lambda(\theta)=\frac 12 \mathcal{N}(\mu_1,\sigma_1^2I_d)+\frac 12 \mathcal{N}(\mu_2, \sigma_2^2I_d)$. Instead of moving the means, we fix them to $\mu_1=(0,0)$ and $\mu_2=(1,1)$ and tune the couple of scales $(\sigma_1,\sigma_2)$ to control non Gaussianity. We use the following scale values $(0.05,0.05)$, $(0.05,0.5)$ and $(0.5,0.5)$. As the scales decrease, the individual posteriors become more bimodal. The likelihood is a Gaussian of the form $p(x\mid \theta)=\mathcal{N}(x;\theta,\Sigma)$, with $\Sigma$ a positive definite matrix. Figure~\ref{fig:gmm_prior_additional_results} shows the complexity and final Wasserstein error when the individual posterior is strongly bimodal (prior scales are chosen small as per $(0.05,0.05)$ such that prior means are well separated) and less bimodal (with greater prior scales $(0.5,0.5)$). Our hyperparameter decision rule allows one to reach a similar prescribed low Wasserstein error for both choices of algorithm. As $n$ increases, the multi-observation posterior becomes more Gaussian, thus making the sampling task easier and requiring a lower total number of steps. Overall, Linhart's compositional score yields a more efficient sampler in terms of score function evaluations, especially for large $n$ and less multimodal target posterior (larger scales $\sigma_1,\sigma_2$). The results for the last scale choice, $(0.5,0.05)$, are provided in Figure~\ref{fig:results_non_gaussian} (total number of steps) and the following Appendix (final Wasserstein error): this model strikes a balance between Gaussianity and strong non-Gaussianity since the posterior rapidly converges to a Gaussian as $n$ increases. 
\begin{figure}[H]
    \centering
    \includegraphics[width=0.7\linewidth]{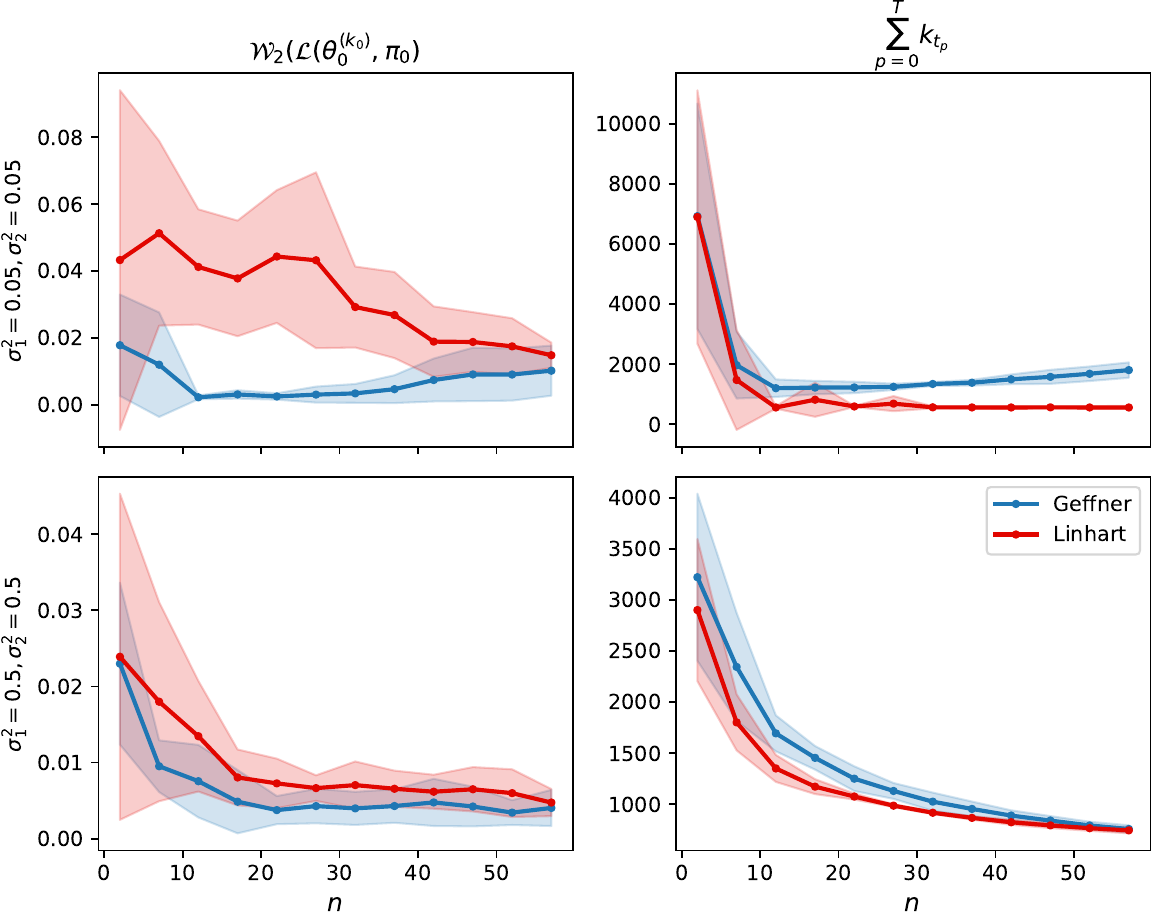}
    \caption{\small The top two subplots show the mean final Wasserstein error (left) and total number of steps (right) for a strong non Gaussian \textbf{GMM prior} model (scales $\sigma_1$ and $\sigma_2$ are very small). The bottom two subplots present the same results (mean Wasserstein error on the left and total number of steps on the right) for a less non Gaussian model (with greater prior scales). We use $T=10$ annealing levels, $\gamma=0.5$ and $\omega=0.5$. The Wasserstein error and the complexity are presented for different numbers of conditional observations $n$. We observe that the empirical Wasserstein error is well controlled by $\gamma$ on both tasks. Mean and std are computed over $5$ seeds.}
    \label{fig:gmm_prior_additional_results}
\end{figure}
\newpage
\subsection{Technical details}\label{proof::technical_details}
Prior scores can be explicitly derived for all tasks. For \textbf{GMM prior} and \textbf{GMM likelihood} tasks, individual analytical scores can also be derived (see Appendix~\ref{proof::analytical_scores}). For SIR and Lotka-Volterra, we train an amortized neural score network $s_\phi(\theta,x,t)$ using the \texttt{sbi} package (training loop, network architecture and default training parameters) \citep{sbi_guide}. The network is a MLP composed of $3$ hidden layers of $256$ hidden features and normalization layers. A Gaussian Fourier embedding is used for time.

For these two tasks, we train the network on log $\theta$'s for more stability, then apply the reverse transform during sampling. An additional embedding network (MLP with $2$ hidden layers of $64$ hidden features) is used to encode observations $x_1,\ldots, x_n$ for the Lotka-Volterra task.
We use a training set of $25000$ pairs $(\theta,x)$ for each task with a validation split $90\%-10\%$. We train the score network for maximum $1000$ epochs with early stopping (implemented by default in the \texttt{sbi} package).

We sample from the true target posteriors using MCMC with \texttt{numpyro} \citep{numpyro} and \texttt{jax} \citep{jax}, except for \textbf{GMM prior} where analytical formula can be derived. We compute the Wasserstein error using the \texttt{POT} Python package \citep{pot} using $3000$ samples obtained from the annealed Langevin algorithm and the true distribution. For Linhart's composite score we use their $\texttt{GAUSS}$ algorithm to estimate covariance matrices using a small number ($\approx 100$) of diffusion steps.

We use $\gamma=0.5$ as threshold for the Wasserstein error for all tasks and $\omega=0.5$ during hyperparameter selection, except when the parameter dimension $d=10$: in this case, we instead use $\omega=0.8$ to reduce the number of steps while reaching a similar Wasserstein error.
With respect to our training, we use a maximal value of the compositional error $\epsilon_\text{DSM}=0.1$ for \textbf{SIR} and \textbf{Lotka Volterra}.

To run the experiments, we use a machine with an Intel CPU and $8$ cores. We run the hyperparameter tuning followed by annealed Langevin sampling for different values of $n$ at once: in practice we choose $n$ IID observations $x_1,\ldots,x_n$, then iteratively sample from the posteriors $p(\theta\mid x_{1:j})$ for $j\in[2,\ldots,n]$. Finally, we consider $T=10$ annealing levels (as in \citet{song_annealed_langevin}) uniformly spaced on the interval $[0,1]$: for computational stability, we set $t_0=1e-5$ instead of $0$ for sampling and fine-tuning the hyperparameters.

Figure~\ref{fig:evol_wass_non_gaussian_case} shows that hyperparameters chosen according to our decision rule indeed allow one to control the Wasserstein error of annealed Langevin algorithm for the different inference tasks.
\begin{figure}[H]
    \centering
    \includegraphics[width=\linewidth]{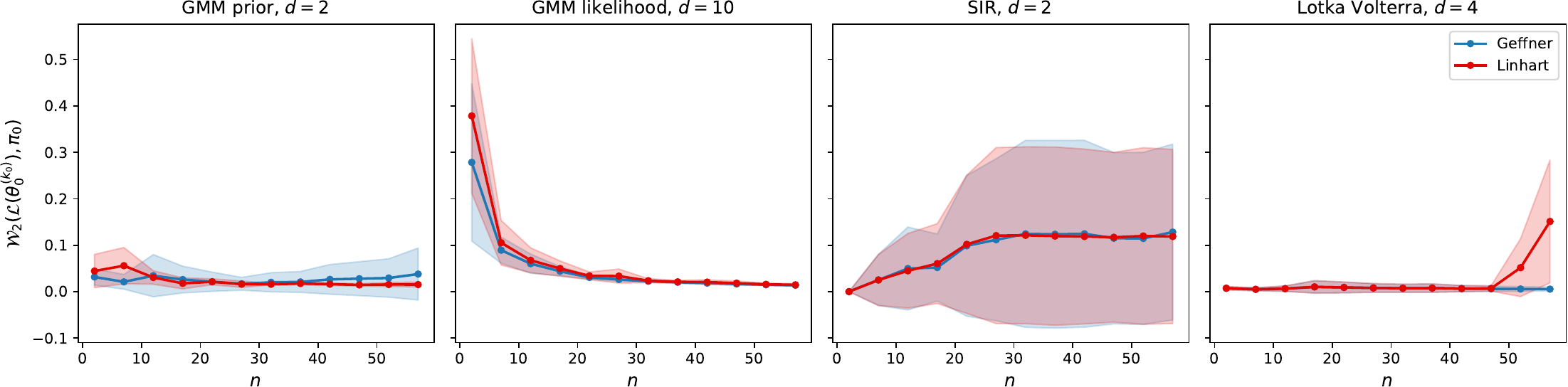}
    \caption{\small Evolution of the (mean) final Wasserstein error $\mathcal{W}_2(\mathcal{L}(\theta_0^{(k_0)}),\pi_0)$ of annealed Langevin algorithm run with the prescribed hyperparameters for the tasks presented in Section~\ref{sec::numerical_applcations}. We use $T=10$ (as in \citet{song_annealed_langevin}), $\omega=0.5$, $\gamma=0.5$. These plots show that the prescribed hyperparameters allow one to control the final empirical Wasserstein error by $\gamma$ whatever the number of conditional observations $n$. Mean and std are computed over $5$ seeds.}
    \label{fig:evol_wass_non_gaussian_case}
\end{figure}
We observe that the Wasserstein error tends to stabilize or even decrease as $n$ increases for most experiments: the inference task indeed becomes easier as the posterior asymptotically converges to a Gaussian according to the Bernstein-von Mises theorem. However, a slight increase potentially comes from the accumulation of errors when aggregating individual scores into a compositional score, meaning that the compositional error may be a bit underestimated. Moreover, for strongly multimodal distributions (\textbf{GMM likelihood}), the Wasserstein error remains controlled but closer to the threshold $\gamma$ than for large $n$ where multimodality is progressively smoothed. Overall, we observe that the Wasserstein error remains controlled and little variable for large $n$, which manual tuning does not allow in general.


\end{document}